\def\eqref#1{equation~\ref{#1}}
\def\1{\bm{1}}
\DeclareMathAlphabet{\mathsfit}{\encodingdefault}{\sfdefault}{m}{sl}
\SetMathAlphabet{\mathsfit}{bold}{\encodingdefault}{\sfdefault}{bx}{n}
\newcommand{\E}{\mathbb{E}}
\newcommand{\R}{\mathbb{R}}
\newcommand{\inner}[2]{\left\langle #1,\, #2 \right\rangle}
\DeclareMathOperator*{\argmax}{arg\,max}
\DeclareMathOperator*{\argmin}{arg\,min}
\newcommand{\prox}[0]{\text{prox}}
\newcommand{\Norm}[1]{\left\|#1\right\|_2}
\theoremstyle{plain}
\newtheorem{theorem}{Theorem}[section]
\newtheorem{lem}[theorem]{Lemma}
\theoremstyle{definition}
\newtheorem{definition}[theorem]{Definition}
\newtheorem{asm}[theorem]{Assumption}
\theoremstyle{remark}
\def \S {\mathbf{S}}
\def \X {\mathcal{X}}
\def \Y {\mathcal{Y}}
\def \R {\mathbb{R}}
\def \w {\mathbf{w}}
\def \v {\mathbf{v}}
\def \x {\mathbf{x}}
\def \x {\mathbf{x}}
\def \1 {\mathbf{1}}
\def \s {\mathbf{s}}
\def \y {\mathbf{y}}
\def \u {\mathbf{u}}
\def \F {\mathcal{F}}
\def \B {\mathcalB}
\def \C {\mathbf C}
\def \U {\mathcal{U}}
\def \y {\mathbf{y}}
\def \E {\mathbf{E}}
\def \bT {\mathbb{T}}
\def \x {\mathbf{x}}
\def \w {\mathbf{w}}
\def \D {\mathcal{D}}
\def \u {\mathbf{u}}
\def \w {\mathbf{w}}
\def \R {\mathbb{R}}
\def \S {\mathcal{S}}
\def \W {\mathcal{W}}
\def \v {\mathbf{v}}
\def \B {\mathcal{B}}
\def \s {\mathbf{s}}
\def \C {\mathcal{C}}
\def \X {\mathcal{X}}
\def \F {\mathcal{F}}
\def \by {\bar{\y}}
\def \bs {\bar{\s}}
\def \ty {\tilde{\y}}
\def \hy {\hat{\y}}
\def \C {\mathcal{C}}
\def \D {\mathcal{D}}
\def \dist {\text{dist}}
\def \prox {\text{prox}}
\newcommand{\prt}[1]{\left(#1\right)}
\newcommand{\brk}[1]{\left[#1\right]}
\newcommand{\crk}[1]{\left\{#1\right\}}
\title{Stochastic Primal-Dual Double Block-Coordinate for Two-way Partial AUC Maximization}
\author{\name Linli Zhou \email lynn94874@tamu.edu \\
      \addr Department of Computer Science and Engineering\\
      Texas A\&M University  
      \AND
      \name Bokun Wang \email bokun.wang@utexas.edu \\
      \addr Department of Electrical and Computer Engineering\\
      University of Texas, Austin
      \AND
      \name My T. Thai \email mythai@cise.ufl.edu\\
      \addr Department of Computer and Information Science and Engineering\\
      University of Florida
      \AND
      \name Tianbao Yang \email tianbao-yang@tamu.edu \\
      \addr Department of Computer Science and Engineering\\
      Texas A\&M University \\
    }
\begin{document}
\maketitle

\begin{abstract}
Two-way partial AUC (TPAUC) is a critical performance metric for binary classification with imbalanced data, as it focuses on specific ranges of the true positive rate (TPR) and false positive rate (FPR). However, stochastic algorithms for TPAUC optimization remain under-explored, with existing methods either limited to approximated TPAUC loss functions or burdened by sub-optimal complexities. To overcome these limitations, we introduce two innovative stochastic primal-dual double block-coordinate algorithms for TPAUC maximization. These algorithms utilize stochastic block-coordinate updates for both the primal and dual variables, catering to both convex and non-convex settings. We provide theoretical convergence rate analyses, demonstrating significant improvements over prior approaches. %including mini-batch speedup and reduced dependence on the number of positive samples. 
Our experimental results, based on multiple benchmark datasets, validate the superior performance of our algorithms, showcasing faster convergence and better generalization. This work advances the state of the art in TPAUC optimization and offers practical tools for real-world machine learning applications.
\end{abstract}

\section{Introduction}
    
    The area under the ROC curve, commonly referred to as AUC, is frequently utilized as a measure of the model’s classification ability, without the explicit setting of a threshold. With a long history dating back to the late 90s \citep{Herbrich:1999gl}, AUC is acknowledged as a more informative metric than accuracy for assessing the performance of binary classifiers in the context of imbalanced data and widely used in machine learning.
    
    In many applications, there are large monetary costs due to high false positive rates (FPR) and low true positive rates (TPR), e.g., in medical diagnosis. Hence, a measure of primary interest is the region of the ROC curve corresponding to low FPR and high TPR,  i.e., TPR $\geq 1-\theta_0$, FPR $\leq \theta_1$, for some $\theta_0,\theta_1 \in (0, 1)$, which is referred to as two-way partial AUC (TPAUC).   Nevertheless, research on efficient optimization algorithms to optimize TPAUC for learning a classifier remains underdeveloped. 
    
    {Compared with standard AUC maximization, optimizing TPAUC presents several unique technical challenges. First, the estimator of TPAUC requires selecting subsets of positives and negatives in the top and bottom ranks. Some earlier works have proposed heuristic approaches for TPAUC maximization, including selecting examples based on their ranks in the mini-batch or converting data selection into ad-hoc data weighting~\citep{yang2021all,kar2014online}, which do not provide a guarantee of optimizing TPAUC losses. %Recently, \cite{zhu2022auc} formulated TPAUC maximization as non-smooth min-max optimization. However, both the primal and dual variables scale with the number of positive examples, making full-coordinate updates computationally prohibitive. In addition, the  the min-max formulation is not bilinear, which makes the analysis much more involved. 
    } 

    \begin{table*}[t]
    \caption{Comparison with prior works for optimizing the TPAUC loss, where $n_+$ is the number of positive examples, %$\Delta$ is a constant that could be $O(n_+)$ in the worst case, 
    $S$ is the mini-batch size of positive examples, $B$ is the mini-batch size of negative examples, and $d$ is the dimension of the model parameter  %The convexity implies the convexity of the whole compositional function, i.e., $\frac{1}{n_+} \sum_{\x_i\in\S_+} f_i(g_i(\w, \s^{(i)}), s')$.
    }
    \centering
    \begin{tabular}{ccclc}
    \hline
    Method                                  & Convexity  & Loop   & \multicolumn{1}{c}{Iteration Complexity} & Total Complexity                                     \\ \hline
    SONX \citep{hu2023non}  & non-convex & Single & $\order{(B+S)d}$               & $\order{\frac{n_+}{B^{1/2}S\epsilon^6}}$ \\
    SOTA \citep{zhu2022auc} & non-convex & Double & $\order{(B+S)d+n_+}$             & $\order{\frac{n_+}{\epsilon^6}}$               \\ \hline
    STACO1 (Ours)                           & convex     & Single & $\order{(B+S)d}$               & $\order{\frac{n_+}{S\epsilon^2}}$              \\
    STACO2 (Ours)                           & non-convex & Double & $\order{(B+S)d}$               & $\order{\frac{n_+}{BS\epsilon^6}}$             \\ \hline
    \end{tabular}
    \label{tab:comp}
    \vspace*{-0.1in}
    \end{table*}
    Recently, \citet{zhu2022auc} have initiated rigorous optimization of TPAUC losses. They converted data selection in top/bottom ranks into pairwise loss selection and reformulated it using the tool of distributionally robust optimization. They have proposed two algorithms for two different formulations: SOTAs for solving a smooth coupled compositional objective that corresponds to a soft TPAUC loss and SOTA for solving a non-smooth min-max objective that corresponds to an exact TPAUC loss. Nevertheless, SOTAs is not for optimizing the exact TPAUC loss, and SOTA is inefficient for large datasets as it requires updating all coordinates of an auxiliary variable corresponding to all positive examples at every iteration. Additionally, its convergence rate analysis fails to demonstrate any mini-batch speedup.
    
\citet{hu2023non} has developed an algorithm for solving non-convex non-smooth coupled compositional objective of the exact TPAUC loss as formulated in \citep{zhu2022auc}. However, their method cannot achieve linear speedup in terms of the mini-batch size of negative examples. % and could have a quadratic complexity of the number of positive examples in the worst case. 
{In addition, since their method does not exploit convexity, its convergence guarantee still exhibits a complexity of  $O(1/\epsilon^6)$ even in the convex setting.}

{To overcome these difficulties, this paper proposes improved algorithms and analysis over SOTA for solving the non-smooth min-max objective of the exact TPAUC loss. Our key idea is to design stochastic {double block-coordinate updates} that simultaneously act on both primal and dual variables. We propose two methods: STACO1 for convex objectives and STACO2 for non-convex objectives. Our convergence analysis introduces novel techniques for handling non-bilinear min-max objectives with stochastic block-coordinate updates, establishing state-of-the-art complexity bounds. Our algorithms enable scalable updates and provable mini-batch parallel speedup.  We compare our results with prior works in Table \ref{tab:comp}.}

%In this paper, based on CVaR objective, we solve TPAUC maximization problem by reformulating CVaR-based compositional problem into a min-max problem, and proposing two novel stochastic primal-dual block-coordinate algorithms with improved convergence rate, which can be applied to linear models and deep learning model. Our method is motivated by state-of-the-art primal-dual block-coordinate algorithms for min-max optimization problems \cite{wangnear}.

   We summarize the main contributions of our work below:    
    \begin{itemize}
   % \vspace*{-0.1in}
        \item We propose novel primal-dual double block-coordinate algorithms STACO (\textbf{S}tochastic \textbf{T}wo-way partial \textbf{A}UC block-\textbf{C}ordinate \textbf{O}ptimizer) designed for convex functions (STACO1) and non-convex functions (STACO2). These algorithms leverage double block-coordinate updates for both the primal and dual variables.

        \item We provide a novel convergence analysis of STACO1 for convex functions. To the best of our knowledge, this is the first work to analyze double block-coordinate updates for both primal and dual variables for min-max optimization without a bilinear structure. We extend this analysis to STACO2 for non-convex cases, demonstrating its ability to find (nearly) stationary solutions. We demonstrate our algorithm enjoys better convergence rate than existing results \cite{hu2023non, zhu2022auc} by improving the block-size dependency, achieving full mini-batch speedup and time efficiency. 

        \item We conduct comprehensive experiments on both linear and deep models for image classification and graph classification tasks involving imbalanced data. Our algorithms consistently demonstrate better performance compared to existing TPAUC maximization methods and various baselines. Additionally, we perform ablation studies to verify the improved convergence rates of our methods.

    \end{itemize}

\section{Related Work}

    % Within machine learning, the area under ROC curve \cite{hanley1982meaning, mason2002areas, fawcett2006introduction, yang2022auc}, commonly referred to as AUC, is frequently utilized as a measure of the model’s classification ability, without the explicit setting of a threshold. Practically, AUC is robust to class imbalance problem and effectively depicts the ranking ability of the model. The Area Under the ROC Curve (AUC) is acknowledged as a more informative metric than accuracy for assessing the performance of binary classifiers in the context of imbalanced data. In scenarios influenced by diagnostic or monetary considerations, the primary objective may be to maximize the two-way partial AUC (TPAUC). 
    \textbf{Two-way Partial AUC (TPAUC).} AUC has been studied for more than two decades~\citep{hanley1982meaning}, and a huge amount of work has been devoted to AUC maximization~\citep{yang2022auc}. Compared to AUC maximization, two-way partial AUC (TPAUC) maximization is much more challenging due to that it involves the selection of examples whose prediction scores are in a certain range. Recently, studies on TPAUC have emerged, as researchers have argued that for certain tasks, only the TPR or FPR within a specific range is of interest \citep{narasimhan2013svmpauctight,yang2019two,yuan2021compositional, zhu2022auc,xie2024weakly}. In particular, by replacing TPR and FPR with surrogate losses, TPAUC maximization problem can be further transformed into coupled compositional optimization and min-max optimization~\citep{zhu2022auc}. %Standard stochastic gradient descent and ascent algorithms can then be applied to solve it \cite{narasimhan2020approximate,cotter2019two}. 
    % \citet{shao2022asymptotically} also present a similar formulation of PAUC tasks with rigorous bound. However, their formulation is biased and only applies to square loss function. 
    Some other works are also focusing on TPAUC \citep{zhang2023federated, shao2023weighted, yang2023auc, yang2022optimizing, shao2022asymptotically}. \citet{zhang2023federated} focuses on optimizing a compositional formulation for AUC maximization, \citet{shao2023weighted} considers a weighted AUC formulation for cost-sensitive learning, and \citet{yang2023auc} considers AUC maximization with certified robustness. \citet{yang2022optimizing, shao2022asymptotically} focus on TPAUC maximization with the following differences: \citet{yang2022optimizing} tackles the data selection challenge by a weighting scheme, which does not yield the exact TPAUC surrogate objective; \citet{shao2022asymptotically} considers TPAUC maximization with a special square loss. In contrast, we directly tackle solving the exact TPAUC surrogate objective without further approximation and our result applies to any non-decreasing loss function.

    \textbf{Compositional Optimization.} Compositional optimization has gained substantial attention in recent years. This area of optimization deals with objective functions that are composed of multiple nested functions, leading to challenges in efficient evaluation and optimization. Several papers \citep{wang2017stochastic, wang2017accelerating, zhang2020optimal, zhang2022stochastic} have considered standard compositional optimization, where the inner function does not depend on the random variable of the outer level. However, simply applying these algorithms to TPAUC maximization would suffer a high cost~\citep{qi2021stochastic}.  %introduce a class of stochastic compositional gradient descent methods by approximating gradients using stochastic mini-batch estimators with variance-reduced techniques. 
    To address this issue, \citet{zhu2022auc} have formulated TPAUC maximization as FCCO (Finite-Sum Coupled Compositional Optimization) as introduced in \citep{qi2021stochastic}. \citet{hu2023non} have proposed an algorithm termed SONX for solving a non-smooth FCCO optimization where the outer function is non-smooth and applied it to TPAUC maximization.  %and then its algorithm and convergence rate gets improved by \cite{wang2022finite, jiang2022multi, hu2023non}.

    \textbf{Min-Max Optimization.} %Min-max optimization has become a cornerstone in various machine learning fields, including generative adversarial networks (GANs), robust learning, and multi-agent systems. 
    Many stochastic primal-dual algorithms have been proposed to solve non-convex min-max optimization since the seminal work~\citep{rafique2022weakly}. Built on their proximal-guided algorithmic framework, \citet{zhu2022auc}  developed SOTA for solving the min-max formulation of TPAUC loss.  However, their algorithm suffers from the limitations mentioned before. % Moreover, Finite-sum min-max problem got more attention in recent years. 
    To address its limitations, we have to consider double block-coordinate updates for both primal and dual variables and develop advanced techniques to derive a complexity that has a parallel speed-up, {which means complexity is linearly dependent on both positive and negative mini-batch size }. Several works \citep{zhang2015stochastic,alacaoglu2022complexity} have considered stochastic primal-dual block-coordinate algorithms for solving finite-sum min-max problems with a bilinear structure, where the block-coordinate update is only applied to the dual variable.  \citet{hamedani2023randomized,jalilzadeh2019doubly} have considered more general min-max problems using block-coordinate updates for the primal variable only or for both primal and dual variables. However, their algorithm and analysis require the coupled function to be smooth in terms of both the primal and dual variables, which is not applicable to TPAUC maximization.  %The compositional optimization problem can be reformulated into min-max problem when satisfying some conditions \cite{drusvyatskiy2018efficiency}. 
    {In addition, Li et al. (2025) propose a Smoothed Proximal Linear Descent-Ascent (Smoothed PLDA) algorithm for deterministic nonsmooth nonconvex-nonconcave minimax problems with convergence guarantees under the KL property. However, PLDA is not directly applicable to large-scale stochastic problems with composite structure, where full dual updates and deterministic computations are infeasible.} Recently, \citet{wangnear} proposed a novel stochastic primal-dual block-coordinate algorithm to solve convex finite-sum compositional optimization problems, which only employs the block-coordinate update on the dual variable. %by reformulating the original FCCO problem into a min-max problem.

\subsection{Notations and Definitions}

    We present notations in this section. For any $\w \in \mathcal W$, the subdifferential $\partial_{\w} f(\w)$ is the set of subgradients of $f$ at point $\w$. For a vector $\y \in \R^n$, $\y^{(i)} \in \R$ represents the $i$-th coordinate (block) of $\y$, i.e., $\y = (\y^{(1)},\cdots, \y^{(n)})^{\bT}$. We use $f_i^*$ to denote the convex conjugate of $f_i$. For a function $g(\x) = \E_{\xi\sim\mathbb{P}}\brk{g(\x; \xi)}$, we define the stochastic estimator based on the mini-batch $\B$ as $g(\x;\B) \coloneqq \frac{1}{|\B|} \sum_{\xi\in\B}g(\x;\xi)$. % For a space $\X$, we define its diameter w.r.t. measure $\psi(\cdot)=\frac{1}{2}\Norm{\cdot}^2$ as $D_{\X} \coloneqq \brk{\max_{\x\in\X} \psi(\x) - \min_{\x\in\X} \psi(\x)}^{1/2}$, and we use $\Norm{\cdot}$ to denote the $\ell_2$ norm for a vector. Besides, $a\asymp b$ means that there exists $c,C >0$ such that $cb\leq a\leq Cb$. 
    % - We say $g_i: \X \rightarrow \R^m$ is $L_g$-smooth if it is differentiable on $X$ and there exists $L_g > 0$ such that $\norm{g(x)-g(x')-\nabla g(x')(x-x')} \leq L_g \Norm{x-x'}$, for any $x,x'\in\X$.\\
    % - We say that $f_i: \R^m \rightarrow \R$ is $L_f$-smooth if it is differentiable on its domain and there exists $L_f > 0$ such that $∣f(\u)-f(\u')-\inner{\nabla f(\u')}{\u-\u'}∣\leq L_f \Norm{\u-\u'}$, for any $\u,\u'\in\Y^*$.

\section{Primal-dual Double Block-Coordinate Algorithms for TPAUC Maximization}

   %We consider the following TPAUC maximization problem: 
   Let $\x$ denote an input example and $h_\w(\x)$ denote a prediction of a parameterized model such as a deep neural network or a linear model on data $\x$. Denote by $\S_+$ the set of $n_+$ positive examples and by $\S_-$ the set of $n_-$ negative examples. TPAUC measures the area under the ROC curve where the TPR is higher than $1-\theta_0$ and the FPR is lower than an upper bound $\theta_1$. A surrogate loss for optimizing TPAUC with TPR $\geq 1-\theta_0$, FPR$\leq \theta_1$ is given by: 
    \begin{align}\label{eqn:tpauc-o}
        \min_{\w\in\R^d}\frac{1}{n_+n_-}\sum_{\x_i \in \S_+^{\uparrow}[1,k_1]}\sum_{\x_j \in \S_-^{\downarrow}[1,k_2]} \ell(h_{\w}(\x_j)-h_{\w}(\x_i)),
    \end{align}
    where $\ell(\cdot)$ is a convex, monotonically non-decreasing surrogate loss of the indicator function $\mathbb{I}(h_{\w}(\x_j)\geq h_{\w}(\x_i))$, $\S_+^{\uparrow}[1,k_1]$ is the set of positive examples with $k_1 = \left \lfloor n_+ \theta_0 \right \rfloor$ smallest scores, and $\S_-^{\downarrow}[1,k_2]$ is the set of negative examples with $k_2 = \left \lfloor n_- \theta_1 \right \rfloor$ largest scores. To tackle the challenge of selecting examples for $\S_+^{\uparrow}[1,k_1]$ and $\S_-^{\downarrow}[1,k_2]$, we use the following lemma to reformulate (\ref{eqn:tpauc-o})~\citep{zhu2022auc}.
    
    \begin{lem} If $\ell(\cdot)$ is  non-decreasing, then the TPAUC loss minimization problem~(\ref{eqn:tpauc-o}) is equivalent to the following: 
    \begin{align}
    \label{eq:tpauc_ori}
        \min_{\w,s',\s} \frac{1}{n_+} \sum_{\x_i\in\S_+} f_i(g_i(\w, \s^{(i)}), s'), 
    \end{align} 
    where $\s = (\s^{(1)}, \cdots , \s^{(n_+)})^\top$, $f_i(g,s')=s'+\frac{1}{\theta_0}[g-s']_+$, and $g_i(\w,\s^{(i)})= \frac{1}{n_-}\sum_{\x_j \in \S_-} \s^{(i)}+\frac{[\ell(h_{\w}(\x_j)-h_{\w}(\x_i))-\s^{(i)}]_+}{\theta_1}$. 
    % \begin{align*}
    %     f_i(g,s')&=s'+\frac{[g-s']_+}{\theta_0},\\
    %     g_i(\w,\s^{(i)})& = \frac{1}{n_-}\sum_{\x_j \in \S_-} \s^{(i)}+\frac{[\ell(h_{\w}(\x_j)-h_{\w}(\x_i))-\s^{(i)}]_+}{\theta_1},
    % \end{align*} 
    % and $\s = (\s^{(1)}, \cdots , \s^{(n_+)})^{\mathbb{T}}$.
    \end{lem}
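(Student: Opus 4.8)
The plan is to establish the equivalence in two stages: first handle the selection of negative examples by introducing the per-positive auxiliary variable $\s^{(i)}$, then handle the selection of positive examples by introducing the scalar $s'$. Both stages rely on the same elementary fact: for a non-decreasing loss $\ell$, selecting the top-$k$ largest values of a finite set and summing them can be written as a minimization of a convex function over a single scalar threshold. Concretely, for any real numbers $a_1,\dots,a_m$ and integer $k$, one has $\sum_{\text{top-}k} a_j = \min_{s}\, \bigl\{ ks + \sum_{j=1}^m [a_j - s]_+ \bigr\}$ when $k$ equals the count used in the floor; the minimizer is attained at the $k$-th largest value. This is the standard variational representation of the sum of the largest $k$ entries (equivalently, a CVaR-type identity), and it is the workhorse of the whole argument.

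First I would apply this identity to the inner sum over negative examples. Fix a positive example $\x_i$ and consider the quantity $\frac{1}{n_-}\sum_{\x_j\in\S_-^{\downarrow}[1,k_2]} \ell(h_\w(\x_j) - h_\w(\x_i))$, i.e.\ the average loss over the $k_2$ negatives with largest scores. Since $\ell$ is non-decreasing, the negatives with the largest scores are exactly those producing the largest loss values $\ell(h_\w(\x_j)-h_\w(\x_i))$, so this equals $\frac{1}{n_-}$ times the sum of the top-$k_2$ loss values. Applying the identity with $k_2 = \lfloor n_-\theta_1\rfloor$ and rescaling (writing $k_2/n_- \approx \theta_1$, or more precisely absorbing the constant), one obtains $g_i(\w,\s^{(i)}) = \frac{1}{n_-}\sum_{\x_j\in\S_-}\bigl(\s^{(i)} + \frac{1}{\theta_1}[\ell(h_\w(\x_j)-h_\w(\x_i)) - \s^{(i)}]_+\bigr)$ minimized over $\s^{(i)}$ recovers the top-$k_2$ average. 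The key subtlety here is the exact bookkeeping of the $\theta_1$ versus $k_2/n_-$ factor and the floor; I would state precisely that the identity holds because $[g-s]_+$ contributes with slope $1/\theta_1$ and the linear term has slope that matches $n_-/k_2$ after normalization, so the subgradient optimality condition pins the minimizer to the $k_2$-th largest loss.

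Next I would apply the same identity to the outer selection of positive examples. After the first step, the objective (before selecting positives) is $\frac{1}{n_+}\sum_{\x_i\in\S_+^{\uparrow}[1,k_1]} g_i^\star(\w)$ where $g_i^\star(\w) = \min_{\s^{(i)}} g_i(\w,\s^{(i)})$ is the average loss of $\x_i$ against the hard negatives. Selecting the $k_1$ positives with \emph{smallest} scores is, by monotonicity of $\ell$, the same as selecting the $k_1$ positives with the \emph{largest} values of $g_i^\star(\w)$ (a larger loss $\ell(h_\w(\x_j)-h_\w(\x_i))$ corresponds to a smaller $h_\w(\x_i)$). Hence the sum over $\S_+^{\uparrow}[1,k_1]$ is the sum of the top-$k_1$ values of $g_i^\star(\w)$, and applying the variational identity one more time with $k_1 = \lfloor n_+\theta_0\rfloor$ gives $\frac{1}{n_+}\sum_{\x_i\in\S_+}\min_{s'}\bigl(s' + \frac{1}{\theta_0}[g_i^\star(\w) - s']_+\bigr) = \frac{1}{n_+}\sum_{\x_i\in\S_+}\min_{s'} f_i(g_i^\star(\w), s')$. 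Finally I would argue that the joint minimization over $(\w, s', \s)$ is legitimate: the inner minimizations over $\s^{(i)}$ and $s'$ can be pulled outside because $f_i$ is non-decreasing in its first argument (it is of the form $s' + \frac{1}{\theta_0}[\,\cdot\,]_+$), so replacing $g_i^\star(\w) = \min_{\s^{(i)}} g_i(\w,\s^{(i)})$ by $g_i(\w,\s^{(i)})$ and jointly minimizing does not change the optimal value; this monotone-composition interchange is exactly where the "$\ell$ non-decreasing" hypothesis is used a second time.

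The main obstacle, I expect, is not any single step but making the two variational identities rigorous with the floor functions and the $1/\theta_0, 1/\theta_1$ scalings — in particular verifying that the minimizer of $ks + \sum_j[a_j-s]_+$ is attained (it is, since the function is convex, piecewise linear, and coercive when $0 < k < m$) and that its value is exactly the top-$k$ sum, together with the monotone-interchange argument that lets the three minimizations be merged. Once those are in place the equivalence of optimal values — and of minimizers in $\w$ — follows immediately. I would assume the top-$k$ variational lemma as known (it appears in \cite{zhu2022auc} and in the CVaR literature) and cite it rather than reprove it.
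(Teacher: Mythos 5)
Your proposal follows the standard CVaR/top-$k$ reformulation, which is precisely the route taken in the reference \citep{zhu2022auc} that the present paper cites for this lemma (the paper itself does not reprove it). The structure — apply the variational identity $\sum_{\text{top-}k} a_j = \min_s\{ks + \sum_j[a_j-s]_+\}$ first over negatives, then over positives, then interchange the nested minimizations with $\w$ — is correct and complete in outline. Three points, however, deserve to be tightened if you write this up.

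First, the reformulated objective and the original one are not equal in value: they are \emph{proportional}, with constant $\frac{1}{\theta_0\theta_1}$. Assuming $n_+\theta_0 = k_1$ and $n_-\theta_1 = k_2$ exactly, minimizing out $\s^{(i)}$ and $s'$ in \eqref{eq:tpauc_ori} produces $\frac{1}{k_1 k_2}\sum_{\text{top-}k_1}\sum_{\text{top-}k_2}\ell(h_\w(\x_j)-h_\w(\x_i))$, whereas \eqref{eqn:tpauc-o} is $\frac{1}{n_+ n_-}\sum_{\text{top-}k_1}\sum_{\text{top-}k_2}\ell$. These agree up to the positive factor $\frac{n_+ n_-}{k_1 k_2} = \frac{1}{\theta_0\theta_1}$, so the two problems share the same minimizers in $\w$ — which is what ``equivalent'' should mean here — but your closing sentence claiming ``equivalence of optimal values'' is not quite right and should be replaced by proportionality of objectives.

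Second, your attribution of where non-decreasingness of $\ell$ is used is slightly off. It is used exactly twice: (i) to identify the $k_2$ negatives with largest scores with the $k_2$ largest losses $\ell(h_\w(\x_j)-h_\w(\x_i))$, and (ii) to identify the $k_1$ positives with smallest scores with the $k_1$ largest values of $g_i^\star(\w)$. The interchange $\min_{s'}\sum_i f_i(\min_{\s^{(i)}}g_i, s') = \min_{s',\s}\sum_i f_i(g_i, s')$ uses that $f_i(\cdot,s')=s'+\tfrac{1}{\theta_0}[\cdot - s']_+$ is non-decreasing in its first argument — a property built into $f_i$'s definition, independent of the hypothesis on $\ell$.

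Third, the argument is exact only under the integrality assumption $k_1 = n_+\theta_0$, $k_2 = n_-\theta_1$ (equivalently, one should read $\theta_0$ and $\theta_1$ in the lemma as $k_1/n_+$ and $k_2/n_-$). You flag this as the main subtlety, and you are right to: the top-$k$ variational identity gives the top-$k$ sum only for integer $k$ matching the linear slope, so the floors must be tight for the lemma to hold as an exact equivalence. Stating this assumption explicitly would close the argument.
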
 
    \vspace*{-0.1in}
    {The reformulation above uses an equivalent form of the conditional-value-at-risk (CVaR) loss,  $\frac{1}{n\gamma}\sum_{i=1}^{n\gamma} \ell_{[i]}(\cdot) = \min_{s} s + \frac{1}{n\gamma}\sum^n_{i=1}[\ell_i(\cdot)-s]_{+}$, where $\gamma = k/n$ for some integer $k \in [n]$, $\ell_{[i]}(\cdot)$ denotes the $i$-th largest value in $\{\ell_1, \cdots, \ell_n\}$. ~\citep[Lemma 1]{Ogryczak:2003dl}.}
    {Since $[t]_+=\max_{y\in[0,1]}ty$}, we cast~(\ref{eq:tpauc_ori}) into an equivalent min-max problem:
    \begin{align}
    \label{eq:tpauc_minmax}
    \min_{\w\in\R^d,s'\in\R\atop\s\in\R^{n_+}}\max_{\y\in\brk{0,1}^{n_+}} \frac{1}{n_+} \sum_{\x_i\in\S_+} \y^{(i)} \cdot \frac{g_i(\w,\s^{(i)})-s'}{\theta_0} + s'.
    \end{align} 
 This problem presents unique challenges that make existing algorithms unsuitable for direct application: (i) the objective function is non-smooth with respect to $\w$ and $\s$ due to the hinge function in $g_i$; (ii) both the primal variable $\s$ and the dual variable $\y$ are high-dimensional and depend on all positive examples, preventing their full coordinate updates in each iteration; and (iii) the coupled term is not bilinear with respect to the primal and dual variables.

\subsection{Algorithms}
    Now we present our efficient algorithms designed to solve the min-max problem~(\ref{eq:tpauc_minmax}) in convex and non-convex settings. 
    
    \textbf{STACO1 for convex functions.} We first consider the convex case when $\ell(h_{\w}(\x_j)-h_{\w}(\x_i))$ is a convex function of $\w$. This is true when we learn a linear model such that $h_\w(\x) = \w^{\top}\x$. Hence, $g_i(\w, s)$ is convex w.r.t. $(\w, s)$ for any $i\in[n]$, and~(\ref{eq:tpauc_minmax}) is a convex-concave min-max problem. 

\begin{algorithm}[t]
        \caption{STACO1}
        \label{alg:single_tpauc}
        \begin{algorithmic}[1] 
            \State Initialize $\w_0\in\W$, $\y_0 = \mathbf{1}^{n_+}$, $\s_0=\mathbf{1}^{n_+}$, $s'_0=1$,
            \For{$t=0,1,\dotsc,T-1$}
            \State Sample a batch $\S_t\subset\S_+$ with $|\S_t| = S$ 
            \State Sample independent mini-batches $\B_t, \tilde{\B}_t\subset\S_-$
            \For{each $i\in\S_t$} 
            %\State Compute $\hat{g}_t^{(i)}(\B_t) = g_i(\w_t,\s_t^{(i)};\B_t)$
            \State Update $\y_{t+1}^{(i)}$ according to~(\ref{eqn:y}) %$\y_{t+1}^{(i)} = \argmax_{\y^{(i)}\in[0,1]}\left\{\y^{(i)} \cdot\frac{\hat{g}_t^{(i)}({\B}_t)-s'_t}{\theta_0} - \frac{1}{2\alpha} \left(\y^{(i)} - \y_t^{(i)}\right)^2\right\}$
            \State Update $\s_{t+1}^{(i)}$ according to~(\ref{eqn:s}) %$\s_{t+1}^{(i)} = \s_t^{(i)} - \frac{\beta}{\theta_0} \y_{t+1}^{(i)}\partial_{\s^{(i)}} g_i (\w_t,\s_t^{(i)};\tilde{\B}_t) $ 
            \EndFor
            \State For each $i\notin \S_t$, $\y_{t+1}^{(i)} = \y_t^{(i)}$ and $\s_{t+1}^{(i)} = \s_t^{(i)}$
            \State Update $\w_{t+1}$ according to~(\ref{eqn:w})  %$\w_{t+1} = \w_t - \frac{\eta}{\theta_0}\frac{1}{S}\sum_{i\in\S_t} \y_{t+1}^{(i)} \partial_{\w} g_i (\w_t,\s_t^{(i)};\tilde{\B}_t)$
            \State Update $s'_{t+1}$ according to~(\ref{eqn:sp}) %$s'_{t+1} = s'_t - \beta'(1-\frac{1}{\theta_0 S}\sum_{i\in\S_t} \y_{t+1}^{(i)})$
            \EndFor
        %\EndProcedure
            \State $\bar{\w}=\frac{1}{T}\sum_{t=0}^{T-1}\w_{t+1}, \bar{\s}=\frac{1}{T}\sum_{t=0}^{T-1}\s_{t+1}, \bar{s}'=\frac{1}{T}\sum_{t=0}^{T-1}s'_{t+1}$
            \State Return $\bar{\w},\bar{\s},\bar{s}'$
        \end{algorithmic}
    \end{algorithm}

    A challenge of solving~(\ref{eq:tpauc_minmax}) is that updating all coordinates for $\s, \y$ would require computing $g_i(\w, \s^{(i)})$ and its gradient for all positive examples $\x_i\in\S_+$, which is prohibited when the number of positive examples is large. Hence, we have to use block-coordinate updates for both $\s$ and $\y$. Let us consider how to update $\y^{(i)}$ and $\s^{(i)}$ for a sampled coordinate $i$. A simple method is to use gradient ascent to update $\y^{(i)}$ and use gradient descent to update $\s^{(i)}$, which require computing $g_i(\w,  \s^{(i)})$ and $\partial_{\s^{(i)}}g_i(\w,  \s^{(i)})$. However, this would require processing all negative examples $\S_-$ as $g_i(\w, \s^{{(i)}})$ depends on all negative examples. To reduce this cost, we need to use stochastic estimators of their gradients. For a random mini-batch of negative samples $\B\subset\S_-$, we let 
   \begin{align*}
    g_i(\w,\s^{(i)};\B)= \frac{1}{|\B|}\sum_{\x_j \in \B} \s^{(i)}+\frac{[\ell(h_{\w}(\x_j)-h_{\w}(\x_i))-\s^{(i)}]_+}{\theta_1}. 
   \end{align*}
   At the $t$-th iteration, we sample a mini-batch of $S$ positive examples $\S_t\subset\S_+$ and a mini-batch of $B$ negative examples $\B_t\subset\S_-$. We update $\y_{t+1}^{(i)}$ according to
   \begin{align}
   \label{eqn:y}
       \y_{t+1}^{(i)} = \argmax_{\y^{(i)}\in[0,1]}&\left\{\y^{(i)} \cdot\frac{g_i(\w_t,\s_t^{(i)};\B_t)-s'_t}{\theta_0} - \frac{1}{2\alpha} \left(\y^{(i)} - \y_t^{(i)}\right)^2\right\}, \forall \x_i\in\S_t
   \end{align}
    where $\alpha$ is a step size parameter. Then we update $\s_{t+1}^{(i)}, i\in\S_t$ and $\w_{t+1}$ using stochastic gradient descent: 
    \begin{align}
       \s_{t+1}^{(i)} & = \s_t^{(i)} - \frac{\beta}{\theta_0} \y_{t+1}^{(i)}\partial_{\s^{(i)}} g_i (\w_t,\s_t^{(i)};\tilde{\B}_t), \forall \x_i\in\S_t\label{eqn:s}\\
       \w_{t+1} & = \w_t - \frac{\eta}{\theta_0}\frac{1}{S}\sum_{i\in\S_t} \y_{t+1}^{(i)} \partial_{\w} g_i (\w_t,\s_t^{(i)};\tilde{\B}_t)\label{eqn:w}\\
         s'_{t+1} & = s'_t - \beta'(1-\frac{1}{\theta_0 S}\sum_{i\in\S_t} \y_{t+1}^{(i)})\label{eqn:sp}
   \end{align}
   where $\beta, \eta, \beta'$ are step size parameters, and we use another mini-batch of negative samples $\tilde\B_t$ independent of $\B_t$ to decouple the dependence between  $\y_{t+1}^{(i)}$ and $\tilde\B_t$.   The detailed steps of \textbf{STACO1} are presented in Algorithm~\ref{alg:single_tpauc}.

    % and a double-loop primal-dual block-coordinate algorithm \textbf{STACO2} for deep learning model, i.e., $g_i$ is nonconvex (weakly-convex) with respect to $(\w,\s)$ for any $i\in[n]$.

    % We propose two primal-dual algorithms to address the problem $\ref{eq:primal}$ under two distinct scenarios: when the inner function $g$ is convex in Algorithm \ref{alg:single} and when $g$ is weakly convex (nonconvex) in Algorithm \ref{alg:double}. 

    %We first introduce STACO1, which is in shown in Algorithm \ref{alg:single_tpauc}, to solve TPAUC Problem \ref{eq:tpauc_minmax} when $g_i$ is convex. In STACO1, each iteration consists of two main steps. The first step involves a block-coordinate stochastic gradient ascent update of the selected dual variables from a random block $\S_t$ out of $\{1, 2, \cdots, n\}$, and a block-coordinate stochastic gradient descent update of the selected variable $\s$ from the same random block, with using subgradient estimator of term $\partial_{\s^{(i)}} g_i (\w_t,\s_t^{(i)})$ sampling from an independent mini-batch $\tilde{\B}_t$. The second step, involving a stochastic gradient descent update of the primal variable $\x$ and $s'$, where we use a subgradient estimator of the coupling term $\frac{1}{n}\sum_{i=1}^{n} \y_{t+1}^{(i)} \partial_{\w} g_i (\w_t,\s_t^{(i)})$ using mini-batch $\tilde{\B}_t$ as well.
     \begin{algorithm}[t]
        \caption{STACO2}
        \label{alg:double_tapuc}
        \begin{algorithmic}[1] 
            \State Initialize $\w_0\in\W$, $\s_0=\mathbf{1}^{n_+}$, $s'_0=1$
            \For{$t=0,1,\dotsc,T-1$}
                \State Initialize $\y_{t,0} = \mathbf{1}^{n_+}$
                \State Set $\w_{t,0} = \w_t, \s_{t,0} = \s_t, s'_{t,0}=s'_t$
                \For{$k=0,1,\dotsc,K_t-1$}
                    \State Sample a batch $\S_{t,k}\subset \S_+$, where $|\S_{t,k}| = S$ 
                    \State Sample independent mini-batches $\B_{t,k}$, $\tilde{\B}_{t,k} \subset \S_-$
                    \For{each $i\in\S_{t,k}$} 
                        % \State Compute $\hat{g}_{t,k}^{(i)}(\B_{t,k}) = g_i(\w_{t,k},\s_{t,k}^{(i)};\B_{t,k})$
                        % \State $\y_{t, k+1}^{(i)} = \argmax_{\y^{(i)}\in[0,1]}\left\{\y^{(i)}\cdot \frac{{\hat{g}_{t,k}^{(i)}({\B}_{t,k})}-s'_{t,k}}{\theta_0} - \frac{1}{2\alpha_{t}} \left(\y^{(i)} - \y_{t,k}^{(i)}\right)^2\right\}$
                        % % \State $\s_{t,k+1}^{(i)} = \s_{t,k}^{(i)} - \frac{\beta_t}{\alpha} \y_{t,k+1}^{(i)}\partial_{\s^{(i)}} g_i (\w_{t,k},\s_{t,k}^{(i)};\tilde{\B}_{t,k}^{(i)}) $ 
                        % \State $\s_{t,k+1}^{(i)} = \argmin_{\s^{(i)}\in\R} \crk{{\s^{(i)}}\cdot\prt{\frac{1}{\theta_0} \y_{t,k+1}^{(i)}\partial_{\s^{(i)}} g_i (\w_{t,k},\s_{t,k}^{(i)};\tilde{\B}_{t,k})+\frac{1}{\gamma}(\s_{t,k}^{(i)}-\s_{t,0}^{(i)})} + \frac{1}{2\beta_t}\prt{\s^{(i)}-\s_{t,k}^{(i)}}^2}$ 
                        \State Update $\y_{t,k+1}^{(i)}$ according to~(\ref{eqn:yy})
                        \State Update $\s_{t,k+1}^{(i)}$ according to~(\ref{eqn:ss})
                    \EndFor
                    \State For each $i\notin \S_{t,k}$, $\y_{t,k+1}^{(i)} = \y_{t,k}^{(i)}$ and $\s_{t,k+1}^{(i)} = \s_{t,k}^{(i)}$
                    % \State ${\w_{t,k+1} = \argmin_{\w\in\W}\left\{\inner{\w}{\frac{1}{\theta_0 S}\sum_{i\in\S_{t,k}} \y_{t,k+1}^{(i)} \partial_{\w} g_i (\w_{t,k},\s_{t,k}^{(i)};\tilde{\B}_{t,k})+\frac{1}{\gamma}(\w_{t,k}-\w_{t,0})} + \frac{1}{2\eta_t} \Norm{\w - \w_{t,k}}^2\right\}}$
                    % \State $s'_{t,k+1} = s'_{t,k} - \beta'_t(1-\frac{1}{\tilde{\alpha} S}\sum_{i\in\S_{t,k}} \y_{t,k+1}^{(i)})$
                    \State Update $\w_{t+1}$ according to~(\ref{eqn:ww})
                    \State Update $s'_{t+1}$ according to~(\ref{eqn:ssp})
                \EndFor
                \State $(\bar{\w}_{t}, \bar{\s}_{t}, \bar{s}'_{t}) = \frac{1}{K_t}\sum_{k=0}^{K_t - 1} (\w_{t,k+1}, \s_{t,k+1}, s'_{t,k+1})$
                \State Set $\w_{t+1}=\bar{\w}_{t}, \s_{t+1}=\bar{\s}_{t}, s'_{t+1}=\bar{s}'_{t}$
            \EndFor
            \State Return $\w_{T},\s_{T},s'_{T}$
    %\EndProcedure
    \end{algorithmic}
    \end{algorithm}
    \textbf{STACO2 for non-convex functions.}
    Next we consider the non-convex case. We assume $\ell(h_{\w}(\x_j)-h_{\w}(\x_i))$ is weakly-convex with respect to $\w$, which holds true when $\ell$ is a convex non-smooth function and $h_\w(\x)$ is a smooth function of $\w$~\citep{hu2023non}. Hence, $g_i(\w, s)$ is weakly-convex with respect to $(\w, s)$, and~(\ref{eq:tpauc_minmax}) is a weakly-convex concave min-max problem.  Inspired by the proximal-guided algorithm \citep{rafique2022weakly} for non-smooth weakly-convex concave problems, we propose a double-loop algorithm \textbf{STACO2} for solving problem (\ref{eq:tpauc_minmax}). The inner loop updates apply STACO1 to solve the following  problem approximately at  the $t$-th outer iteration:
    \begin{align}
    \label{eq:sub}
    &\min_{\w\in\R^d,s'\in\R \atop\s\in\R^{n_+}}\max_{\y\in\brk{0,1}^{n_+}} 
    \frac{1}{n_+} \sum_{\x_i\in\S_+} \y^{(i)} \cdot \frac{g_i(\w,\s^{(i)})-s'}{\theta_0} + s' + \frac{1}{2\gamma}\Norm{\w-\w_{t,0}}^2 + \frac{1}{2n_+\gamma}\Norm{\s-\s_{t,0}}^2,
    \end{align} 
    where $\w_{t,0},\s_{t,0}$ are initial value of $\w,\s$ at $t$-th stage, $\gamma>0$ is a proper parameter. The addition of quadratic functions is to ensure the function becomes convex in terms of $\w, \s$.  At $k$-th iteration in $t$-th stage, we utilize following updates:
     \begin{align}
       \y_{t,k+1}^{(i)} &= \argmax_{\y^{(i)}\in[0,1]}\left\{\y^{(i)} \cdot\frac{g_i(\w_{t,k},\s_{t,k}^{(i)};\B_{t,k})-s'_{t,k}}{\theta_0} - \frac{1}{2\alpha_t} \left(\y^{(i)} - \y_{t,k}^{(i)}\right)^2\right\}, \forall \x_i\in\S_{t,k}\label{eqn:yy}\\
       \s_{t,k+1}^{(i)} & = \s_{t,k}^{(i)} - \frac{\beta_t}{\theta_0}\left(\y_{t,k+1}^{(i)}\partial_{\s^{(i)}} g_i (\w_{t,k},\s_{t,k}^{(i)};\tilde{\B}_{t,k}) + \frac{1}{\gamma}(\s^{(i)}_{t,k}-\s^{(i)}_{t,0})\right), \forall \x_i\in\S_{t,k}\label{eqn:ss}\\
       \w_{t,k+1} & = \w_{t,k} - \frac{\eta_t}{\theta_0}\left(\frac{1}{S}\sum_{i\in\S_{t,k}} \y_{t,k+1}^{(i)} \partial_{\w} g_i (\w_{t,k},\s_{t,k}^{(i)};\tilde{\B}_{t,k}) + \frac{1}{\gamma}(\w_{t,k}-\w_{t,0})\right)
       \label{eqn:ww}\\
         s'_{t,k+1} & = s'_{t,k} - \beta'_t(1-\frac{1}{\theta_0 S}\sum_{i\in\S_{t,k}} \y_{t,k+1}^{(i)})\label{eqn:ssp},
   \end{align}
where $\alpha_t, \beta_t, \eta_t, \beta'_t$ are step size parameters.

   %The difference between STACO1 and STACO2 lies in the updating of primal parameter $\w$ and $\s$, since there exists additional epoch-decay terms in (\ref{eq:sub}). The detailed steps of STACO2 are presented in Algorithm \ref{alg:double_tapuc}.
   
    % In its $t$-th epoch, it aims to approximately find the proximal point of the problem at $t$-th epoch. The solution $\w_{t+1},\s_{t+1}$ and $s'_{t+1}$ will be used as the base point for $(t + 1)$-th problem. In the $k$-th inner iteration, it updates the parameters as same as Algorithm \ref{alg:single_tpauc} except when updating $\w$ and $\s$, the gradient includes the extra epoch-decay term, which considers the difference between the current value and the value at the beginning of this epoch. Such epoch-decay term is key to make sure our Algorithm can converge and epoch-decay rate $\gamma$ is related to the weakly convexity of function $g_i$. Notably, STACO2 reduces to STACO1 if we choose epoch-decay rate $\gamma$ as an infinitely large number and skip the averaging operation after every epoch. 

We would like to highlight the difference between STACO2 and SOTA~\citep{zhu2022auc}, where we use block-coordinate update for $\s\in\R^+$. In contrast, SOTA needs to update all coordinates of $\s$. This difference is caused by different techniques for handling all coordinates: they compute an unbiased sparse stochastic gradient for $\s$ by sampling and then update $\s$ using a stochastic proximal gradient method. The unbiased sparse stochastic gradient used in SOTA cannot enjoy a variance bound that scales with the mini-batch size.    In contrast, we just compute an unbiased stochastic gradient for the sampled coordinate of $\s$, and perform a stochastic gradient descent on sampled coordinates and leave other coordinates unchanged. It is this difference that makes our analysis more involved and leads to a parallel speed-up. 

\section{Analysis}

    In this section, we present the convergence results for our algorithms. We emphasize the contributions of our convergence analysis for both convex and non-convex settings compared to~\cite{zhu2022auc}: (i) our convergence analysis for the convex case is more refined, leading to an optimal convergence rate which implies a parallel speed-up in terms of mini-batch size; (ii) our analysis for the non-convex case is also improved, which not only enjoys a parallel speed-up but also removes strong boundedness assumptions of $\s_{t, k}, s'_{t,k}$ and the pairwise loss values at all iterations.

    %To make it more straightforward, we consider a general form (\ref{eq:primal}), which can cover problem (\ref{eq:tpauc_ori}). The primal parameter $\u$ in (\ref{eq:primal}) can be seen as the tuple of two primal parameters $\w$ and $s'$ in (\ref{eq:tpauc_ori}). %The only difference between these two problems is that TPAUC Problem includes one more 1-dimension primal parameter $s$, which has same updating strategy as primal parameter $\x$, thus we can consider them as same.% 
    %We also provide the general Algorithms $\ref{alg:single}$ (general version of Algorithm $\ref{alg:single_tpauc}$) and $\ref{alg:double}$ (general version of Algorithm $\ref{alg:double_tapuc}$), which are shown in Appendix \ref{app:vanilla}, and demonstrate their convergence rate for solving problem ($\ref{eq:pd}$) respectively, which are same as Algorithm $\ref{alg:single_tpauc},\ref{alg:double_tapuc}$ for solving problem (\ref{eq:tpauc_minmax}). 
    For analysis, we consider the following optimization problem: 
    \begin{align}
    \label{eq:primal}
        \min_{\u\in \U, \s\in \S} F(\u,\s) \coloneqq \frac{1}{n}\sum_{i=1}^n f_i(g_i(\u,\s^{(i)})),
    \end{align}
    where $f_i:\R\rightarrow\R$ is closed proper convex and lower-semicontinuous, $g_i:(\U,\in\S_i)\rightarrow \R$ is possibly non-convex, and $\U,\S$ are convex closed sets, $g_i(\u,\s^{(i)})\coloneqq\E_{\zeta_i\sim \mathbb{P}_i}\left[g_i(\u,\s^{(i)};\zeta_i)\right]$. It is equivalent to the following min-max problem:
    \begin{align}
    \label{eq:pd}
        \min_{\u\in \U,\s\in\S}\max_{\y\in\Y} L(\u,\s,\y) \coloneqq \frac{1}{n}\sum_{i=1}^n \y^{(i)}g_i(\u,\s^{(i)}) - f_i^*(\y^{(i)}).
    \end{align}
    Compared to problem (\ref{eq:tpauc_ori}), (\ref{eq:primal}) excludes parameter $s'$. Since the update of $s'$ is almost the same as $\w$, our analysis for solving~(\ref{eq:primal}) can be easily extended to STACO1 and STACO2.
    %For TPAUC maximization~(\ref{eq:tpauc_minmax}), $f_i^*$ is the support function of the rang $[0,1]$. 
   %We  provide Algorithm $\ref{alg:single}$ for the convex case and Algorithm $\ref{alg:double}$ for non-convex case of the above problem in Appendix \ref{app:vanilla}, 

\subsection{Assumptions}

    We first outline assumptions underlying our analysis. Notably, these assumptions are easily satisfied for TPAUC maximization when the loss function $\ell$ is Lipchitz continuous. 
    
    \begin{asm}
    \label{asm:lip}
         For any $i\in[n]$, we suppose $f_i,g_i$ is Lipschitz continuous, i.e.,  there exists $C_f,C_g>0$ such that 
        \begin{align*}
            &|f_i(u) - f_i(\bar{u})|\leq C_f|u-\bar{u}| \nonumber\\
            &\left|g_i(\u,\s^{(i)})- g_i(\bar{\u},\bs^{(i)})\right| \leq C_g \left(\Norm{\u - \bar{\u}}+\abs{\s^{(i)} - \bar{\s}^{(i)}}\right),
        \end{align*}
        for any $u,\bar{u}\in\R$, $\u,\bar{\u}\in\U$ and $\s^{(i)},\bar{\s}^{(i)}\in\S_i$.
    \end{asm}
    
    %We assume that \( f_i,g_i \) are Lipschitz-continuous, a standard condition in the analysis of non-smooth optimization problems. Lastly, we assume that the variances of stochastic estimators are bounded.
    
    \begin{asm}
    \label{asm:var}
        For any $i\in[n]$, there exists finite $\sigma_0^2,\sigma_1^2,\sigma_2^2$ such that 
        \begin{align*}
            & \E_{\zeta_i}\left|g_i(\u,\s^{(i)}) - g_i(\u,\s^{(i)};\zeta_i)\right|^2 \leq \sigma_0^2, \nonumber\\
            &\E_{\zeta_i}\Norm{\hat{G}^{(i)}_1(\zeta_i)  - G^{(i)}_1}^2 \leq \sigma_1^2, ~~\E_{\zeta_i}\Norm{\hat{G}^{(i)}_2(\zeta_i)  - G^{(i)}_2}^2 \leq \sigma_2^2,
        \end{align*}
        for stochastic subgradients $\hat{G}^{(i)}_1(\zeta_i) \in \partial_{\u} g_i(\u,\s^{(i)};\zeta_i)$, $\hat{G}^{(i)}_2(\zeta_i) \in \partial_{\s^{(i)}} g_i(\u,\s^{(i)};\zeta_i)$ at any $ \u\in\U$, and $\s^{(i)}\in\S_i$. Besides, there exists $\delta^2$ such that
        \begin{align*}
            \E_{j} \Norm{y^{(j)} G^{(j)}_1 - \frac{1}{n}\sum_{i=1}^n y^{(i)} G^{(i)}_1 }^2 \leq \delta^2,
        \end{align*}
        for any $G^{(i)}_1 \in \partial_1 g_i(\u,\s^{(i)})$, $\u\in\U,\s^{(i)}\in\S_i$, and $\y\in \Y$. Note that under Assumption~\ref{asm:lip}, we have $\delta^2\leq C_f^2 C_g^2$.
    \end{asm}

\subsection{Convex Case}

    We first analyze the Algorithm \ref{alg:single}, which aims to solve the problem (\ref{eq:pd}) when both $f_i$ and $g_i$ are convex for any $i\in[n]$. The analysis is motivated by techniques proposed in \citet{wangnear}. However, the  problem they considered is $\frac{1}{n}\sum_{i=1}^n f_i(g_i(\u))$, which excludes the primal parameter $\s$. Notably, the analysis of convergence of primal parameter $\u$ is more tricky than $\w$ since its updating only lies in selected coordinates each iteration. %Other works like \citep{hu2023non, zhu2022auc}, they focus on problem (\ref{eq:primal}) but do not consider the convex case. However, linear models are often chosen over non-linear models due to their simplicity, efficiency, and interpretability while convexity can bring more properties to some AUC problems \citep{norton2019maximization}, and our nonconvex case analysis is motivated by the convex case.

    \begin{theorem}
    \label{thm:iteration_complexity_cvx}
        Under Assumptions \ref{asm:lip} and \ref{asm:var}, when $g_i(\u, \s^{(i)})$ is convex w.r.t $\u, \s^{(i)}$, let $\eta = \order{\epsilon}$, $\beta = \order{\epsilon}$, and $\alpha = \order{B\epsilon}$, 
        % letting $\eta = \order{\frac{\epsilon}{C_g^2 C_f^2}\vee\frac{B\epsilon}{C_f^2 \sigma_1^2}\vee \frac{S\epsilon}{\delta^2}}$,$ \beta = \order{\frac{\epsilon}{C_g^2 C_f^2}\vee\frac{B\epsilon}{C_f^2 \sigma_2^2}}$,$ \alpha = \order{\frac{B\epsilon}{\sigma_0^2}}$, 
        STACO1 can make $\E\brk{F(\bar{\u},\bar{\s})-F(\u^*,\s^*)} \leq \epsilon$ after $T =  \order{\frac{nC_g^2 C_f^2}{S\epsilon^2} + \frac{C_f^2 \sigma_1^2}{B\epsilon^2} + \frac{nC_f^2 \sigma_2^2}{BS\epsilon^2} + \frac{\delta^2}{S\epsilon^2} + \frac{n\sigma_0^2}{BS\epsilon^2}}$ iterations, where $\bar{\u}=\frac{1}{T}\sum_{t=0}^{T-1}\u_{t+1},\bar{\s}=\frac{1}{T}\sum_{t=0}^{T-1}\s_{t+1}$.
    \end{theorem}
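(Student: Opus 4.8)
The plan is to cast the analysis as a primal-dual gap analysis for the min-max problem~\eqref{eq:pd}, following the block-coordinate template of \citet{wang2023alexr} but carefully tracking the extra primal block $\s$ that gets updated only on sampled coordinates. First I would fix a saddle point $(\u^*,\s^*,\y^*)$ and write the one-step progress inequalities for each variable. For the dual block $\y^{(i)}$, $i\in\S_t$: the proximal update~\eqref{eqn:y} is a prox step on a linear-plus-quadratic objective, so the standard three-point lemma gives, for sampled $i$, a bound on $\y^{(i)}\cdot(\text{stoch. coupling term}) - \frac{1}{2\alpha}(\y^{(i)}-\y_t^{(i)})^2$ in terms of $\frac1{2\alpha}(\|\y_t^{(i)}-\cdot\|^2 - \|\y_{t+1}^{(i)}-\cdot\|^2)$ plus an error from replacing $g_i(\w_t,\s_t^{(i)})$ by its mini-batch estimate $g_i(\w_t,\s_t^{(i)};\B_t)$ — this error contributes the $\sigma_0^2/B$ term. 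For the primal blocks $\s^{(i)}$ ($i\in\S_t$) and $\w$, the SGD steps~\eqref{eqn:s}–\eqref{eqn:w} give descent inequalities using convexity of $g_i$ in $(\u,\s^{(i)})$; the variance of $\partial_{\s^{(i)}}g_i(\cdot;\tilde\B_t)$ gives the $\sigma_2^2/B$ contribution, the variance of $\partial_\w g_i(\cdot;\tilde\B_t)$ gives $\sigma_1^2/B$, and the sampling of the $\w$-gradient over the mini-batch $\S_t$ of positives gives the $\delta^2$ term via Assumption~\ref{asm:var}.

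Next I would take expectations and sum over $t=0,\dots,T-1$. The key bookkeeping issue is the coordinate sampling: each coordinate $i$ of $\s$ (and $\y$) is touched with probability $S/n$ per iteration, so a telescoping/potential argument on $\sum_i \|\s_t^{(i)}-\s^{*(i)}\|^2$ must be weighted by $n/S$ — this is exactly where the factor $n/S$ in front of the $C_g^2C_f^2$, $\sigma_2^2$, and $\sigma_0^2$ terms comes from, and it is the main departure from \citet{wang2023alexr}. I would define a Lyapunov function combining $\frac{1}{2\eta}\|\w_t-\w^*\|^2$, $\frac{n}{2\beta S}\sum_i(\s_t^{(i)}-\s^{*(i)})^2$, and $\frac{n}{2\alpha S}\sum_i(\y_t^{(i)}-\y^{*(i)})^2$, show it is (nearly) non-increasing up to the variance/noise terms, and telescope. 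Crucially one must handle the cross term coming from $\y^{(i)}$ multiplying the $\s^{(i)}$- and $\w$-gradients of $g_i$: since $g_i$ is only Lipschitz (not smooth), I cannot use a descent-lemma bound on $g_i$; instead I would rely on $\y^{(i)}\in[0,1]$ boundedness and the Lipschitz constant $C_g$, absorbing these into the step-size conditions $\eta,\beta=\order{\epsilon}$ and $\alpha=\order{B\epsilon}$, so that the quadratic residual terms are dominated.

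Putting the pieces together: after dividing by $T$, convexity of $L$ in $(\u,\s)$ and concavity in $\y$ yield, via Jensen applied to the averaged iterates $\bar\u,\bar\s$, a bound of the form $\E[L(\bar\u,\bar\s,\y)-L(\u^*,\s^*,\bar\y)] \le \frac{D_0}{T} + (\text{noise terms})$ uniformly over $\y\in\Y$, where $D_0$ collects the initial Lyapunov value. Because $f_i$ is recovered from $f_i^*$ by taking the max over $\y^{(i)}$, the left side upper-bounds $\E[F(\bar\u,\bar\s)-F(\u^*,\s^*)]$ (this is where the equivalence between~\eqref{eq:primal} and~\eqref{eq:pd} is used, together with $f_i$ convex l.s.c.). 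Setting the noise terms to $\order{\epsilon}$ by the stated step-size choices and requiring $D_0/T=\order{\epsilon}$ gives the claimed $T=\order{\frac{nC_g^2C_f^2}{S\epsilon^2}+\frac{C_f^2\sigma_1^2}{B\epsilon^2}+\frac{nC_f^2\sigma_2^2}{BS\epsilon^2}+\frac{\delta^2}{S\epsilon^2}+\frac{n\sigma_0^2}{BS\epsilon^2}}$. I expect the hardest step to be the coordinate-wise potential argument for $\s$ combined with the non-smoothness of $g_i$: one must show that leaving coordinates $i\notin\S_t$ untouched still yields a valid global descent after reweighting, without any smoothness of $g_i$ in $\s^{(i)}$, and that the biased-looking interaction between the freshly updated $\y_{t+1}^{(i)}$ and the independent batch $\tilde\B_t$ is genuinely unbiased — which is exactly why the algorithm draws $\B_t,\tilde\B_t$ independently.
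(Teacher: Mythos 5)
Your proposal follows essentially the same route as the paper's proof: a block-coordinate primal-dual gap analysis built on the framework of \citet{wang2023alexr}, with the key new step being the coordinate-sampling reweighting for the $\s$ block (the paper's Lemma~\ref{lem:descent_inner_single}, term $\D_1$: $\E\bigl[(\s^{(i)}-\bs_{t+1}^{(i)})^2\bigr]=\tfrac{S}{n}(\s^{(i)}-\bs_{t+1}^{(i)})^2+\tfrac{n-S}{n}(\s^{(i)}-\s_t^{(i)})^2$), followed by Jensen and Fenchel--Young to pass from the averaged primal-dual gap to the primal suboptimality. One small caution on constants: the $\s$-block Lyapunov weight should be $\tfrac{1}{2\beta S}$, not $\tfrac{n}{2\beta S}$ (the $1/n$ in the objective's coordinate averaging cancels one factor of $n$ against the $n/S$ reweighting), and the $\y$-block telescoping in the paper actually requires three virtual sequences $\by_t,\hy_t,\ty_t$ to close, but these are details your sketched potential argument would surface.
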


    %\begin{proof}
    %    See Appendix \ref{app:iteration_complexity_cvx}.
    %\end{proof}

     \textbf{Remark.} The proof is included in Appendix \ref{app:iteration_complexity_cvx}. The above convergence rate implies a parallel speed-up in terms of the positive batch size $S$ and negative batch size $B$. When we use full information at each iteration, which means $\sigma_0=0, \sigma_1=0, \sigma_2=0, \delta=0, S=n_+$, the above complexity reduces to $O(1/\epsilon^2)$, which is a standard complexity for non-smooth convex optimization~\citep{nesterov2018lectures}. In addition, the dominating term $O(n/(S\epsilon^2))$ matches the lower bound proved in  \citet{wangnear}.  %With the extra parameter $\s$, the result in Theorem \ref{thm:iteration_complexity_cvx} still matches the lower complexity bound in Theorem 6 in \citet{wangnear}. 

\subsection{Non-convex Case}

    Now we consider the non-convex case when $g_i$ is weakly convex as stated in the following assumption.    \begin{asm}[weakly convexity of $g_i$]
    \label{asm:lip_ncvx}
        For any $i\in[n]$, we suppose that $g_i(\u,\s^{(i)})$ is $\rho$-weakly convex to $\u$ and $\s^{(i)}$ for any $\u\in\U$ and $\s^{(i)}\in\S_i$, i.e., $g_i(\cdot) + \frac{\rho}{2}\Norm{\cdot}^2$ is convex, where $\rho$ is a positive number.
    \end{asm} 

It is sometimes  difficult to find an $\epsilon$-stationary point $(\u,\s)$ of the non-smooth function $F$, i.e., $\dist(0, \partial F (\u,\s)) \leq \epsilon$. For example, an $\epsilon$-stationary point of function $f(\x) = |\x|$ does not exist for $0 \leq \epsilon < 1$ unless it is the optimal solution. To address this problem, \citep{davis2018stochastic} proposed using the stationarity of the Moreau envelope of the problem as the convergence metric, which has become a standard metric for solving weakly convex problems.
    
    Given a $\rho$-weakly convex function $f: \R^d \rightarrow \R$, its Moreau envelope is constructed as
    \begin{align}
    \label{eq:moreau_envelope}
        f_{\gamma}(\x)\coloneqq \min_{\w\in\R^d}\crk{f(\w)+ \frac{1}{2\gamma}\Norm{\w-\x}^2},
    \end{align}
    where $\gamma$ is a positive constant. For a $\rho$-weakly convex function $f$, it can be shown that $f_{\gamma}$ is smooth when $\frac{1}{\gamma} > \rho$ \citep{davis2019proximally} and its gradient is
    \begin{align}
    \label{eq:proximal_map}
        \nabla f_{\gamma}(\x) = \frac{1}{\gamma}(\x-\prox_{\gamma}f(\x)), 
    \end{align}
    where
    \begin{align}
        \prox_{\gamma}f(\x) \coloneqq \argmin_{\w}\{f(\w)+ \frac{1}{2\gamma}\Norm{\w-\x}^2\}.
    \end{align}
    Notice that when $\frac{1}{\gamma} > \rho$, the minimization in problem (\ref{eq:moreau_envelope}) is strongly convex, which ensures $\prox_{\gamma}f(\x)$ is uniquely defined. Moreover, for any point $\x \in \R^d$, the proximal point $\x^{\dagger} \coloneqq \prox_{\gamma}f(\x)$ satisfies \citep{hu2023non}
    \begin{align}
        &\Norm{\x^{\dagger}-\x} = \gamma\Norm{\nabla f_{\gamma}(\x)}, \quad f_{\gamma}(\x^{\dagger}) \leq f_{\gamma}(\u), \quad \dist(0, \partial f(\x^{\dagger})) \leq \Norm{\nabla f_{\gamma}(\x)}.
    \end{align}
    Thus if $\Norm{\nabla f_{\gamma}(\x)} \leq \epsilon$, we can say $\x$ is close to a point $\x^{\dagger}$ that is $\epsilon$-stationary, which is called nearly $\epsilon$-stationary solution of $f(\x)$. Given an iterate $\x_t$, a common idea is using the stochastic subgradient method (SSG) to approximately solve (\ref{eq:moreau_envelope}) with $\x = \x_t$, namely, to compute a solution $\x_{t+1}$ such that 
    \begin{align}
    \label{eq:subgrad}
        \x_{t+1} \approx \prox_{\gamma}(\x_t) = \argmin_{\x} \crk{f(\x) + \frac{1}{2\gamma}\Norm{\x-\x_t}^2}.
    \end{align}
    Then $\x_{t+1}$ returned by the SSG method will then be used in the next iterate.  Inspired by \citet{rafique2022weakly}, we consider the following update according to equation (\ref{eq:subgrad})
    \begin{align}
        &(\u_{t+1},\s_{t+1},\y_{t+1}) \approx \argmin_{\u\in\U,\s\in\S} \argmax_{\y\in\Y} \left\{L_{\gamma}(\u,\s,\y;\u_t,\s_t)\right\}, \nonumber\\
        &\text{where }L_{\gamma}(\u,\s,\y;\u',\s') \coloneqq \frac{1}{n}\sum_{i=1}^n \left(\y^{(i)}g_i(\u,\s^{(i)}) - f_i^*(\y^{(i)})\right) + \frac{1}{2\gamma}\Norm{\u-\u'}^2 + \frac{1}{2n\gamma}\Norm{\s-\s'}^2.   
    \end{align}

    \begin{theorem}
    \label{thm:iteration_complexity_nvx}
        Under Assumptions \ref{asm:lip}, \ref{asm:var} and \ref{asm:lip_ncvx},
        %, letting $\eta_t =  \order{\frac{\epsilon^2}{C_g^2C_f^2}\vee \frac{B\epsilon^2}{C_f^2\sigma_1^2}\vee\frac{S\epsilon^2}{\delta^2}\vee\frac{1}{C_f\rho}}, \beta_t =  \order{\frac{\epsilon^2}{C_g^2C_f^2}\vee \frac{B\epsilon^2}{C_f^2\sigma_2^2}\vee\frac{1}{C_f\rho}}$,  $\alpha_t = \order{\frac{B\epsilon^2}{\sigma_0^2}}$ and $K_t=\order{\frac{n\sigma_0^2}{BS\epsilon^4} \vee \frac{\gamma}{\eta_t} \vee \frac{n\gamma}{S\beta_t}}$
    STACO2  with $\gamma \leq \frac{1}{2C_f\rho}$, $\eta_t = \order{\epsilon^2}$, $\beta_t = \order{\epsilon^2}$, $\alpha_t = \order{B\epsilon^2}$, and $K_t = \order{\frac{n}{BS\epsilon^4}\lor \frac{1}{\eta_t}\lor \frac{n}{S\beta_t}}$ can converge to an $\epsilon$-stationary point of $\Phi_\gamma(\u, \s)$ in $ \order{\frac{ C_f^2\sigma_1^2}{B\epsilon^4} + \frac{\delta^2}{S\epsilon^4} + \frac{nC_g^2C_f^2}{S\epsilon^4} + \frac{nC_f^2\sigma_2^2}{BS\epsilon^4} + \frac{n\sigma_0^2}{BS\epsilon^6}}$ iterations, where  $\Phi_\gamma(\u, \s)=\min_{\tilde{\u},\tilde{\s}}F(\tilde{\u},\tilde{\s})+\frac{1}{2\gamma}\Norm{\tilde{\u}-\u}^2+\frac{1}{2\gamma n}\Norm{\tilde{\s}-\s}^2$ is a Moreau envelope of $F(\u, \s)$.
     \end{theorem}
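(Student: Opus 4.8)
The plan is to treat STACO2 as an inexact proximal-point method on the weakly convex objective $F$ and run the Moreau-envelope argument that is standard for weakly convex minimization \citep{rafique2022weakly}, using Theorem~\ref{thm:iteration_complexity_cvx} as the per-stage solver. Write $z=(\u,\s)$ and equip the product space with the weighted norm $\|z\|^2:=\Norm{\u}^2+\tfrac1n\Norm{\s}^2$, so that $\Phi_\gamma$ is exactly the Moreau envelope of $F$ in this norm, $\prox_{\gamma F}(z)=\argmin_{z'}\{F(z')+\tfrac1{2\gamma}\|z'-z\|^2\}$ is well defined and unique, and $\nabla\Phi_\gamma(z)=\tfrac1\gamma\big(z-\prox_{\gamma F}(z)\big)$; an $\epsilon$-stationary point means $\mathbf{E}\|\nabla\Phi_\gamma(z)\|^2\le\epsilon^2$. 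The weighting is not arbitrary: because each summand $f_i(g_i(\u,\s^{(i)}))$ touches $\s$ only through $\s^{(i)}$ and $f_i$ is $C_f$-Lipschitz, Assumption~\ref{asm:lip_ncvx} makes $F$ $C_f\rho$-weakly convex \emph{precisely} with respect to $\|\cdot\|$ (apply $f_i(v)=\sup_{\lambda\in\dom f_i^*}[\lambda v-f_i^*(\lambda)]$ with $\lambda\ge0$ and sum), which is exactly why the regularizers added in equation~(\ref{eq:sub}) carry the weights $1$ and $1/n$. I read outer stage $t$ as an inexact evaluation of $\hat z_t:=\prox_{\gamma F}(z_t)$ with $z_t=(\u_{t,0},\s_{t,0})$: the inner loop approximately solves the saddle problem attached to $\psi_t(z):=F(z)+\tfrac1{2\gamma}\|z-z_t\|^2$, which has unique primal minimizer $\hat z_t$, optimal value $\Phi_\gamma(z_t)$, and is $\big(\tfrac1\gamma-C_f\rho\big)$-strongly convex, hence at least $\tfrac1{2\gamma}$-strongly convex under $\gamma\le\tfrac1{2C_f\rho}$.

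First I would verify that the inner loop is literally STACO1 on a \emph{convex}-concave saddle problem, so Theorem~\ref{thm:iteration_complexity_cvx} applies. Using the reformulation of $L_t$ displayed before the theorem, choose $\tau^{(i)}=1/\rho$ for every $i$: then each term $\y^{(i)}\big(g_i(\u,\s^{(i)})+\tfrac{1}{2\tau^{(i)}}\Norm{\u-\u_{t,0}}^2+\tfrac{1}{2\tau^{(i)}}(\s^{(i)}-\s^{(i)}_{t,0})^2\big)$ is convex in $(\u,\s^{(i)})$ because $\y^{(i)}\ge0$ and $g_i+\tfrac\rho2\Norm{\cdot}^2$ is convex. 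Since $f_i$ is $C_f$-Lipschitz and nondecreasing, $\y^{(i)}\in\dom f_i^*\subseteq[0,C_f]$ along the run, so the leftover quadratic coefficients satisfy $\tfrac1{2\gamma}-\tfrac1{2n}\sum_i\y^{(i)}/\tau^{(i)}\ge\tfrac1{2\gamma}-\tfrac{C_f\rho}{2}\ge\tfrac1{4\gamma}$ and $\tfrac1\gamma-\y^{(i)}\rho\ge\tfrac1{2\gamma}>0$, using $\gamma\le\tfrac1{2C_f\rho}$. Thus $\psi_t$ is a convex FCCO objective of the form~(\ref{eq:primal}) (with $g_i$ replaced by the convex $\tilde g_i^{(t)}$, still $\order{C_g}$-Lipschitz and satisfying Assumption~\ref{asm:var} with the same variance parameters up to bounded factors) plus a simple $\tfrac1{2\gamma}$-strongly convex quadratic, to which the analysis behind Theorem~\ref{thm:iteration_complexity_cvx} extends directly. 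The key consistency check is that the extra terms $\tfrac1\gamma(\s^{(i)}_{t,k}-\s^{(i)}_{t,0})$ and $\tfrac1\gamma(\w_{t,k}-\w_{t,0})$ in equations~(\ref{eqn:ss})--(\ref{eqn:ww}) are exactly the gradients of these quadratics and, like the $\s$-block, act only on the sampled coordinates, so STACO2's inner iteration coincides with STACO1 run on $\psi_t$; the algorithm never needs $\tau^{(i)}$, which cancels when the $\u,\s^{(i)}$ gradients are formed. With $\eta_t,\beta_t=\order{\epsilon^2}$, $\alpha_t=\order{B\epsilon^2}$, and $K_t$ taken as the iteration count of Theorem~\ref{thm:iteration_complexity_cvx} at target accuracy $\epsilon^2$, this yields $\mathbf{E}\big[\psi_t(z_{t+1})-\Phi_\gamma(z_t)\big]\le\epsilon_t$ with $\epsilon_t=\Theta(\epsilon^2)$, where $z_{t+1}=(\bar{\u}_t,\bar{\s}_t)$.

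Next I would run the outer telescoping. Strong convexity of $\psi_t$ gives $\tfrac1{2\gamma}\|z_{t+1}-\hat z_t\|^2\le\psi_t(z_{t+1})-\Phi_\gamma(z_t)$, hence $\tfrac1{2\gamma}\mathbf{E}\|z_{t+1}-\hat z_t\|^2\le\epsilon_t$. Plugging $z'=\hat z_t$ into the definition of $\Phi_\gamma(z_{t+1})$ and using $F(\hat z_t)=\Phi_\gamma(z_t)-\tfrac1{2\gamma}\|\hat z_t-z_t\|^2$ yields the one-step inequality $\mathbf{E}\Phi_\gamma(z_{t+1})\le\mathbf{E}\Phi_\gamma(z_t)-\tfrac1{2\gamma}\mathbf{E}\|z_t-\hat z_t\|^2+\epsilon_t$. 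Telescoping over $t=0,\dots,T-1$, using $\Phi_\gamma\ge F^*:=\inf F$ and $\|\nabla\Phi_\gamma(z_t)\|=\tfrac1\gamma\|z_t-\hat z_t\|$,
\[
\min_{0\le t<T}\mathbf{E}\|\nabla\Phi_\gamma(z_t)\|^2\ \le\ \frac{2}{\gamma T}\Big(\Phi_\gamma(z_0)-F^*+\textstyle\sum_{t=0}^{T-1}\epsilon_t\Big).
\]
Taking $\epsilon_t=\Theta(\epsilon^2)$ with a small enough constant and $T=\order{1/\epsilon^2}$ makes the right-hand side at most $\epsilon^2$, i.e.\ returns an $\epsilon$-stationary point of $\Phi_\gamma$. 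For the total iteration count I multiply $T=\order{1/\epsilon^2}$ by $K_t$: since $\eta_t,\beta_t=\order{\epsilon^2}$ already push the variance-type contributions ($\sigma_1,\sigma_2,\delta$, and the $C_gC_f$ deterministic term) below $\epsilon^2$ without extra iterations, the only term that forces $K_t$ up to $\order{\epsilon^{-4}}$ is the one carrying the function-value noise $\sigma_0^2$ entering the dual/tracking update~(\ref{eqn:yy}), giving $K_t=\order{\tfrac{n}{BS\epsilon^4}\lor\tfrac1{\eta_t}\lor\tfrac n{S\beta_t}}$, while the remaining terms contribute $\order{\epsilon^{-2}}$ per stage; summing over stages gives the stated $\order{\tfrac{C_f^2\sigma_1^2}{B\epsilon^4}+\tfrac{\delta^2}{S\epsilon^4}+\tfrac{nC_g^2C_f^2}{S\epsilon^4}+\tfrac{nC_f^2\sigma_2^2}{BS\epsilon^4}+\tfrac{n\sigma_0^2}{BS\epsilon^6}}$.

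The main obstacle is the inner/outer interface in the second step. One must (i) legitimize the $\tau^{(i)}$-splitting even though the algorithm is oblivious to it; (ii) control the inner iterates $\s_{t,k},s'_{t,k}$ and the pairwise loss values \emph{without} a priori boundedness assumptions — the claimed improvement over SOTA — which should follow from the strong convexity injected by $\gamma$ keeping $z_{t,k}$ in a bounded ball around $z_t$, but must be made quantitative so that the constants $C_g,\sigma_0,\sigma_1,\sigma_2,\delta$ feeding Theorem~\ref{thm:iteration_complexity_cvx} do not degrade; and (iii) keep the block-coordinate bookkeeping aligned, since the proximal correction on $\s$ hits the same sampled coordinate as $\s$ itself, so that the effective regularizer seen by the inner loop is exactly that of $\psi_t$ and the convergence of the rarely-updated primal block $\s$ can still be tracked (the point the excerpt flags as the novel difficulty relative to \citet{wang2023alexr}). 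A secondary subtlety is the complexity accounting itself: pinning down which error terms must be driven to $\order{\epsilon^2}$ (only the $\sigma_0$ term, which is why the leading order degrades to $\epsilon^{-6}$) versus those that are automatically $\order{\epsilon^2}$ once $\eta_t,\beta_t=\order{\epsilon^2}$.
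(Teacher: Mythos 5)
Your high-level blueprint matches the paper's: convexify the per-stage subproblem by a $\tau^{(i)}$-splitting of the regularizer, run the STACO1 machinery on the resulting convex-concave saddle, and wrap that in a standard inexact-proximal-point / Moreau-envelope telescope. Your outer loop (plugging $\hat z_t$ into the definition of $\Phi_\gamma(z_{t+1})$, using strong convexity of $\psi_t$, then telescoping) is a slightly more direct variant of the paper's, which instead invokes inequality (6) of \citet{rafique2018non} together with $(\tfrac1\gamma-C_f\rho)$-strong convexity of $\Phi_\gamma(\cdot;\u_t,\s_t)$; both are standard and equivalent in effect.

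The genuine gap is at the interface you yourself flag as ``obstacle (i).'' You fix $\tau^{(i)}=1/\rho$ and then assert that $\psi_t$ becomes ``a convex FCCO objective of the form~(\ref{eq:primal}) $\ldots$ plus a simple $\tfrac1{2\gamma}$-strongly convex quadratic,'' so that Theorem~\ref{thm:iteration_complexity_cvx} extends ``directly.'' This is not true at the level at which the analysis operates. Working on the Lagrangian $L_t(\u,\s,\y)$, the splitting leaves a residual $R_t(\u,\s,\y)=(\tfrac1{2\gamma}-\tfrac{\rho}{2n}\sum_i\y^{(i)})\Norm{\u-\u_{t,0}}^2+\tfrac1{2n\gamma}\Norm{\s-\s_{t,0}}^2-\tfrac{\rho}{2n}\sum_i\y^{(i)}(\s^{(i)}-\s^{(i)}_{t,0})^2$, which is bilinearly coupled in $(\y,\u,\s)$, \emph{not} a simple quadratic independent of the dual variable. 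When you form the gap $L_t(\u_{t,k+1},\s_{t,k+1},\y)-L_t(\u,\s,\by_{t,k+1})$ needed for the descent lemma, the residual produces cross-terms $R_t(\u_{t,k+1},\s_{t,k+1},\y)-R_t(\u,\s,\by_{t,k+1})$ in which the dual multiplier multiplies $\Norm{\u-\u_{t,0}}^2$ and $(\s^{(i)}-\s^{(i)}_{t,0})^2$ at unrelated iterates, and none of the STACO1 terms $\C_1,\ldots,\C_3,\D_1,\ldots,\D_6$ from the convex proof absorb them. The paper's Lemma~\ref{lem:descent_inner} sidesteps this precisely by choosing the \emph{iterate-dependent} $\bar{\tau}^{(i)}_{t,k}=\gamma\by^{(i)}_{t,k+1}$ (and $\tau^{(i)}_{t,k}=\gamma\y^{(i)}_{t,k+1}$ for sampled coordinates). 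With that choice $\tfrac{\by^{(i)}_{t,k+1}}{2\bar\tau^{(i)}_{t,k}}=\tfrac1{2\gamma}$ identically, so $R_t(\cdot,\cdot,\by_{t,k+1})\equiv0$, the leftover quadratics telescope into the $\D_5,\D_6$ terms $\tfrac1\gamma\E\Norm{\u_{t,k}-\u_{t,k+1}}^2$ and $\tfrac1{n\gamma}\E\Norm{\s_{t,k}-\bs_{t,k+1}}^2$, and these are absorbed under the explicit constraint $\eta_t,\beta_t\le\tfrac\gamma8$ that your proposal never records. Convexity of $g_i+\tfrac1{2\bar\tau^{(i)}_{t,k}}\Norm{\cdot}^2$ still holds because $1/\bar\tau^{(i)}_{t,k}=1/(\gamma\by^{(i)}_{t,k+1})\ge1/(\gamma C_f)\ge\rho$ by $\gamma\le\tfrac1{2C_f\rho}$ and $\by^{(i)}_{t,k+1}\le C_f$. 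So your outer telescoping and complexity bookkeeping are fine, but the inner descent lemma with $\tau^{(i)}=1/\rho$ does not follow from the convex case and would need a separate, nontrivial treatment of the $\y$-coupled residual --- the one piece of the paper's argument your sketch does not supply.
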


    %\begin{proof}
     %   See Appendix \ref{app:iteration_complexity_nvx}.
    %\end{proof}

    \textbf{Remark} The proof is included in Appendix \ref{app:iteration_complexity_nvx}. We compare the above result with the complexity of SOTA and  SONX. In particular, SOTA has a complexity of $\order{\frac{n}{\epsilon^6}}$ result, which cannot show any mini-batch speedup. % Besides, they need some additional assumptions such as the upper bound for the objective function and $\u$ in their convergence rate analysis, which may not be reasonable in real cases.  
     SONX has an iteration complexity of $\order{\frac{n}{B^{1/2}S\epsilon^6}}$ in Theorem C.4 \citep{hu2023non}. % where $\Delta:=F_\gamma (\u_0, \s_0)-\min_{\u,\s}F(\u, \s)$.  However, the term $\Delta$ depends on $n$ dimensional variable $\s$, which could be $O(n$) and therefore their complexity could be as bad as $\order{\frac{n^2}{B^{1/2}S\epsilon^6}}$. 
     In comparison, our complexity $\order{\frac{n}{BS\epsilon^6}}$  has a better dependence on $B$. % but may also enjoy better dependence on $n$.  %and achieves full mini-batch speedup. This is due to that our final result is dominant by dual variable term with $\D_{\Y}^2$, and the convergence rate of dual variable can enjoy full mini-batch speedup.

\section{Experiments}
We evaluate the empirical performance of our proposed algorithm against baselines for Two-way Partial AUC Maximization (TPAUC) in a convex setting for learning linear models and a non-convex setting for learning deep models.

\subsection{Settings}
    % \subsection{Datasets}
    {\bf Datasets.} For linear model experiments, we use three datasets in \citep{chang2011libsvm}, namely HIGGS, SUSY, and ijcnn1. %Covtype contains 581, 012 data with feature dimension 54,  HIGGS contains 11,000,000 data with feature dimension 28, SUSY contains 5,000,000 data with feature dimension 18, and ijcnn1 contains 141, 691 data with feature dimension 22.%
    For SUSY and HIGGS, we use the first 80\% of the data as the training dataset and the remaining 20\% as the testing dataset. For ijcnn1, we follow the existing split in \citep{chang2011libsvm}. To create imbalanced datasets for HIGGS and SUSY (ijcnn1 itself is imbalanced), we randomly remove 99.5\% positive data. For deep learning model experiments, we use two molecule datasets from the Stanford Open Graph Benchmark (OGB) website \citep{hu2020open} and two biomedical image datasets from MedMNIST \citep{medmnistv2}, namely moltox21 (the No.0 target), molmuv (the No.1 target), nodulemnist3d, and adrenalmnist3d. Those four datasets are naturally imbalanced. The task in molecular datasets is to predict certain properties of molecules, and the task in biomedical image datasets is binary classification. The statistics of datasets are presented in Table \ref{tb:dataset_sta} in Appendix \ref{sec:add}.

    % \subsection{Models}
    {\bf Models.} For linear model experiments, we let $h_{\w}(\x)= \w^{\top}\x$. In deep model experiments, for molecule datasets moltox21 and molmuv, we use Graph Isomorphism Network (GIN) \citep{xu2018powerful} as the backbone model, which has 5 mean-pooling layers with 64 hidden units and 0.5 dropout rate. For image datasets nodulemnist3d and adrenalmnist3d, we learn a convolutional neural network (CNN) and use ResNet18 \citep{he2016deep}. We utilize the sigmoid function for the final output layer to generate the prediction score and set the surrogate loss $\ell(\cdot)$ as the squared hinge loss with a margin parameter~\citep{zhu2022auc}. 

 % \subsection{Baselines}
    {\bf Baselines.} We evaluate our algorithms, STACO1 and STACO2, by comparing their training and testing performance against various baselines, while STACO1 is for linear model and STACO2 is for deep model. Specifically, we benchmark our methods against other approaches that optimizes different objectives, including CE for optimizing the cross-entropy loss, AUCM for optimizing an AUC min-max margin loss \citep{yuan2021federated}, SOTAs for optimizing a soft TPAUC loss \citep{zhu2022auc}, SOTA~\citep{zhu2022auc}, and SONX for optimizing the same TPAUC loss as ours \citep{hu2023non}, and PAUCI for optimizing an instance-wise TPAUC loss \citep{shao2022asymptotically}. %Notably, our methods are also designed to optimize the DRO-CVaR Loss.

    % \subsection{Target Measures}
    {\bf Evaluation Metrics.} For linear and deep learning model experiments, we evaluate TPAUC with two settings, i.e., TPR $\geq 0.5$ and FPR $\leq 0.5$, and TPR $\geq 0.25$ and FPR $\leq 0.75$.

    % \subsection{Experiment Settings}
    {\bf Hyperparameter Tuning.} In linear model experiments, the model is trained by 3000 iterations, and the learning rate is decreased by 10-fold on the 500th, 1500th, and 2500th iterations for all methods. For deep learning experiments, the model is trained by 60 epochs and the learning rate is decreased by 10-fold after every 20 epochs for all methods. In addition, we pre-train the model for deep learning experiments following previous studies~\citep{yuan2021federated,zhu2022auc}. The pre-trained model is trained for 60 epochs using CE loss with an Adam optimizer on the training datasets, and the initial learning rate is 1e-3 which is decreased by 10-fold on the 30th and 45th epochs. We tune the step sizes of STACO1, STACO2, SOTA, PAUCI, and AUCM in the range \{1e-2, 1e-1, 5e-1\}, and tune the step sizes of SONX, SOTAs, and CE in the range \{1e-3, 1e-2, 1e-1\}. For STACO1, STACO2, SOTA, and SONX,  we fix the margin parameter of the surrogate loss $\ell$ as 0.5, and tune the rate parameter $\theta_0,\theta_1$ in $\{0.4, 0.5, 0.75\}$ for reporting testing performance. For SONX, we fix the moving average parameter  as 0.9 and tune the momentum parameter in the range \{0, 1e-3, 1e-2, 1e-1\}. For AUCM, we choose the momentum parameter as 0.9, the margin parameter of the surrogate loss as 0.5, and tune the hyperparameter $\gamma$ that controls consecutive epoch-regularization in \{100, 500, 1000\}. For SOTAs, we fix $\gamma_0 = \gamma_1 = 0.9$ and tune $\lambda,\lambda'$ in \{0.1, 1.0, 10\}. For PAUCI, we tune $k$ in [1, 10], $c_1, c_2, \mu, \lambda$ in [0, 1], $m$ in [10, 100] and $\kappa$ in [2, 6]. For all algorithms, we choose the weight decay parameter as 2e-4. Without specific statements, each algorithm samples 64 data points in each iteration. We execute all experiments using 5-fold-cross-validation to evaluate testing performance based on the best validation performance and report the average and standard deviation over multiple runs.

    \subsection{Results}

    \begin{figure*}[hbtp]
        \centering
        \subfloat[HIGGS]
        {\includegraphics[width=0.245\textwidth]{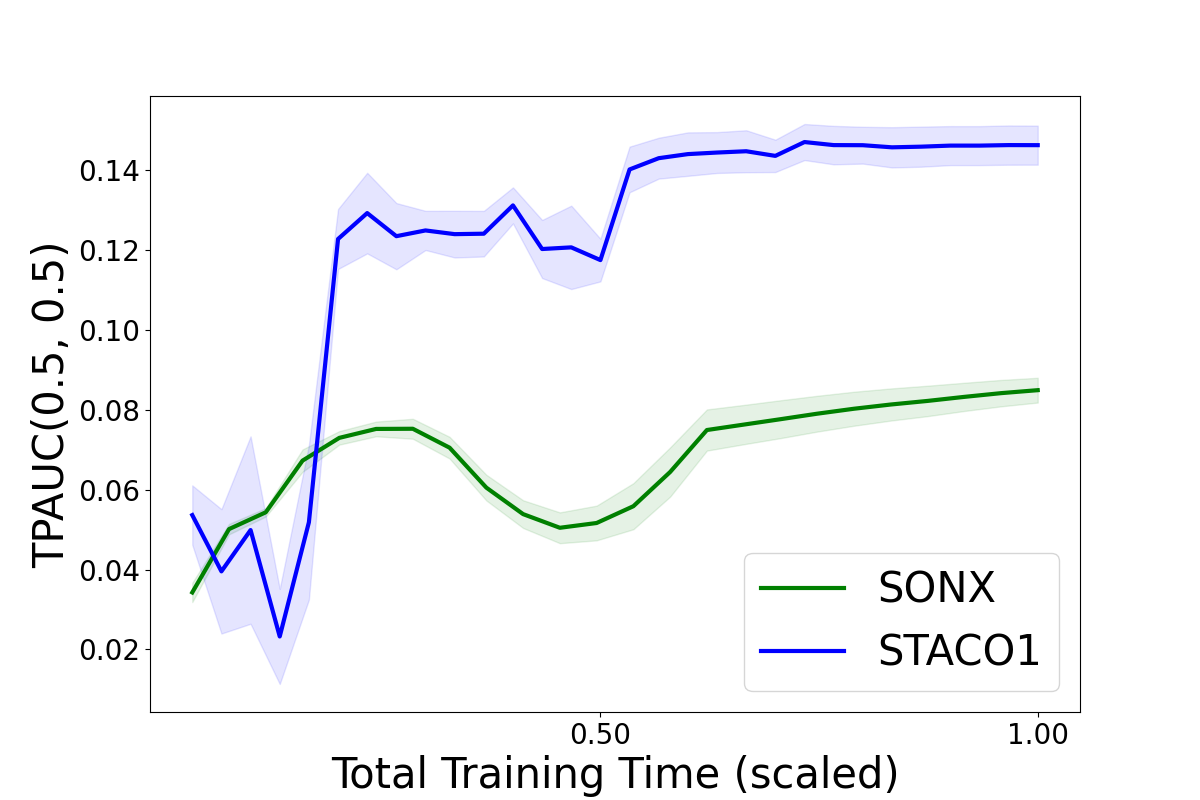}}
        \subfloat[SUSY]
        {\includegraphics[width=0.245\textwidth]{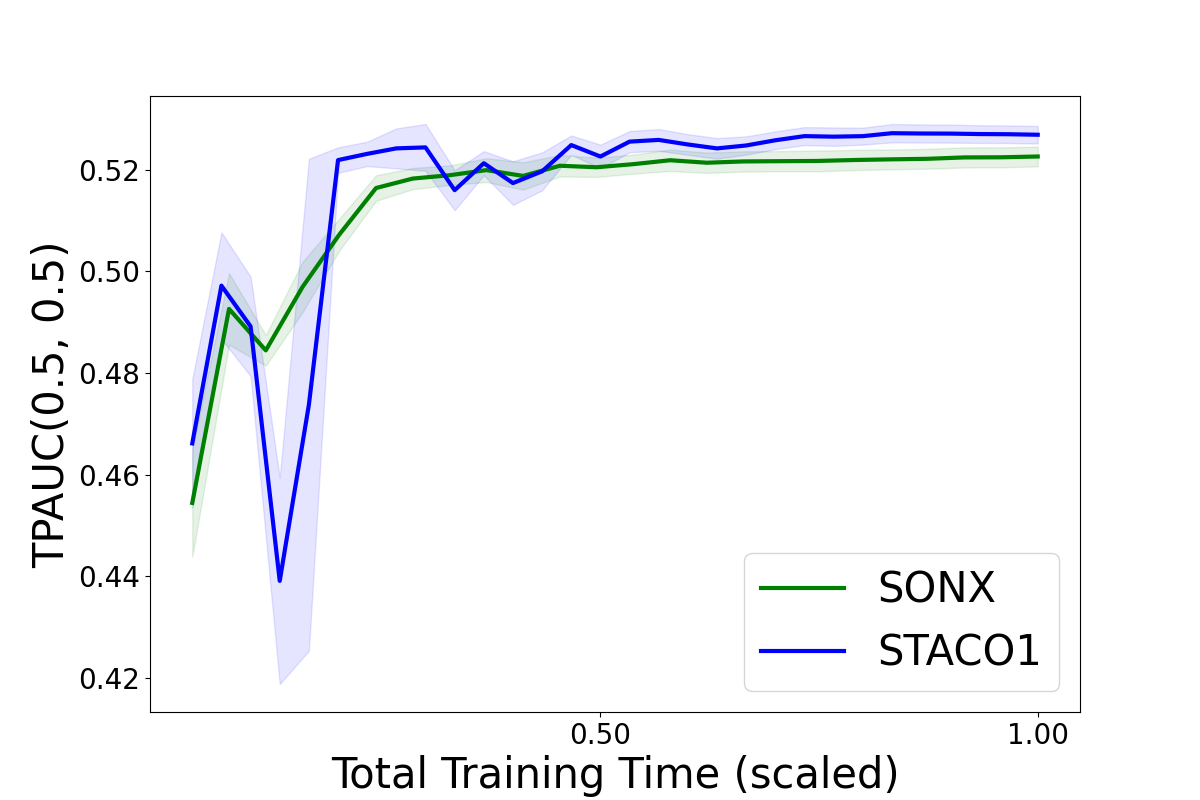}}
        \subfloat[HIGGS]
        {\includegraphics[width=0.245\textwidth]{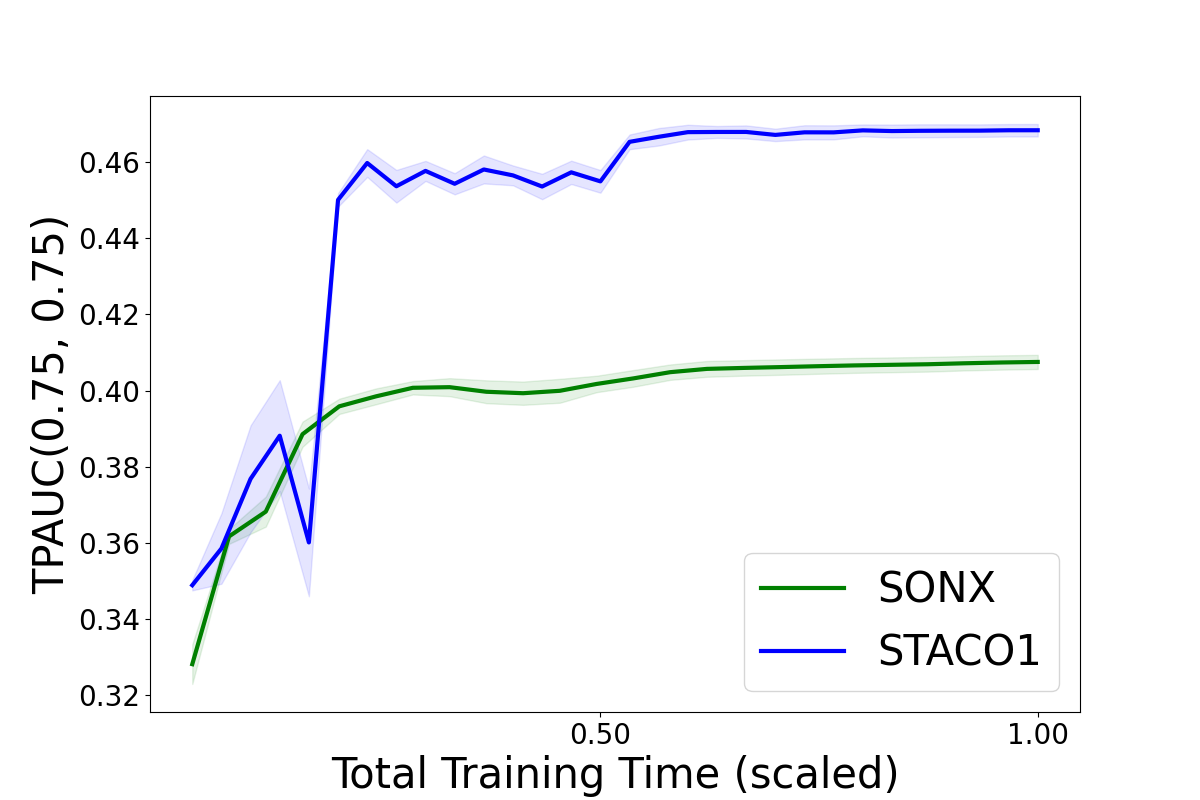}}
        \subfloat[SUSY]
        {\includegraphics[width=0.245\textwidth]{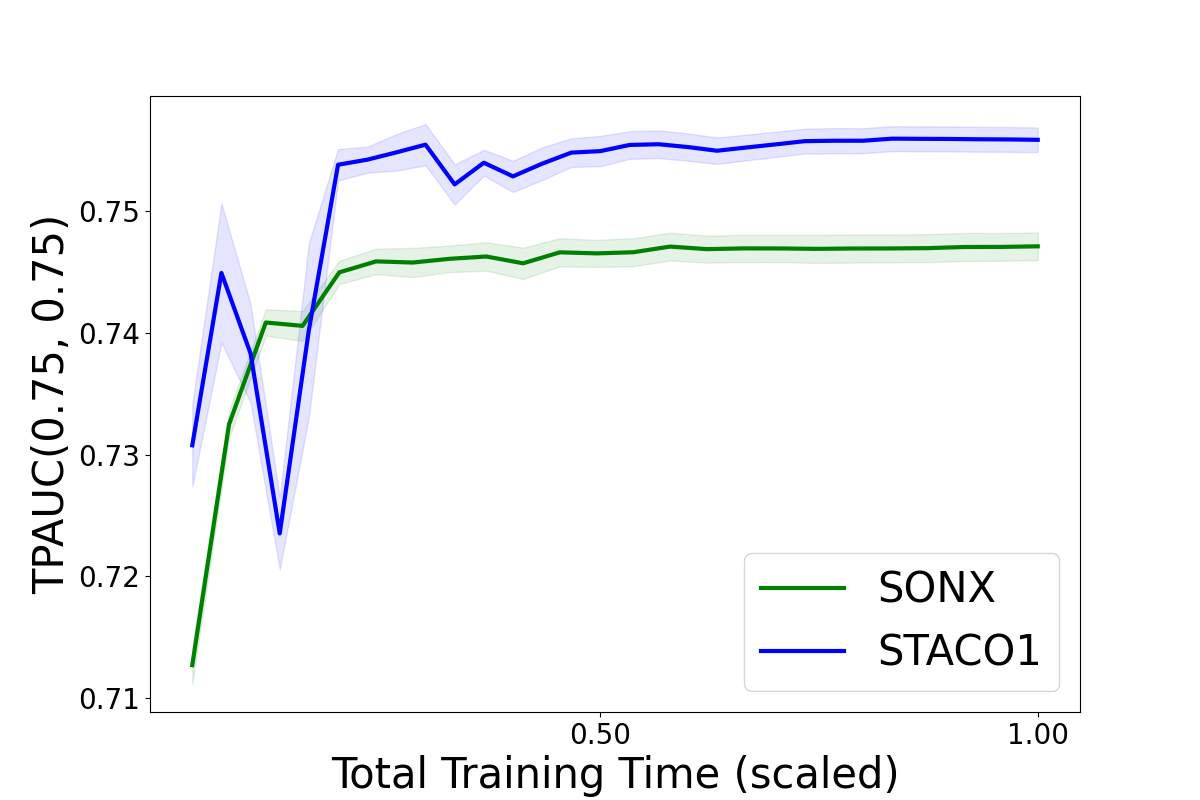}}
        \caption{Training TPAUC Curves of STACO1 and SONX on two different datasets. The first two shows the TPAUC (0.5, 0.5) results, and the last two shows the TPAUC (0.75, 0.75) results.}
        \label{fig:linear_tpauc}

        \centering
        \subfloat[molmuv(t1)]
        {\includegraphics[width=0.245\textwidth]{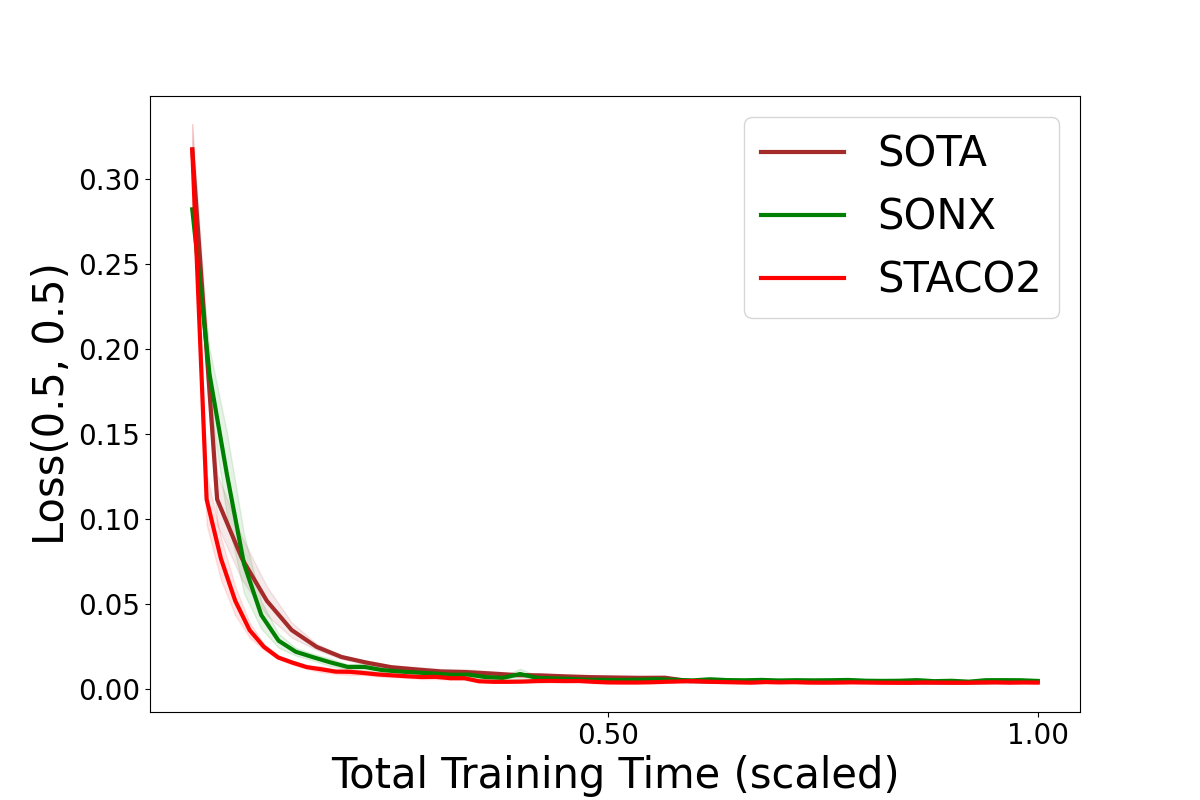}}
        \subfloat[moltox21(t0)]
        {\includegraphics[width=0.245\textwidth]{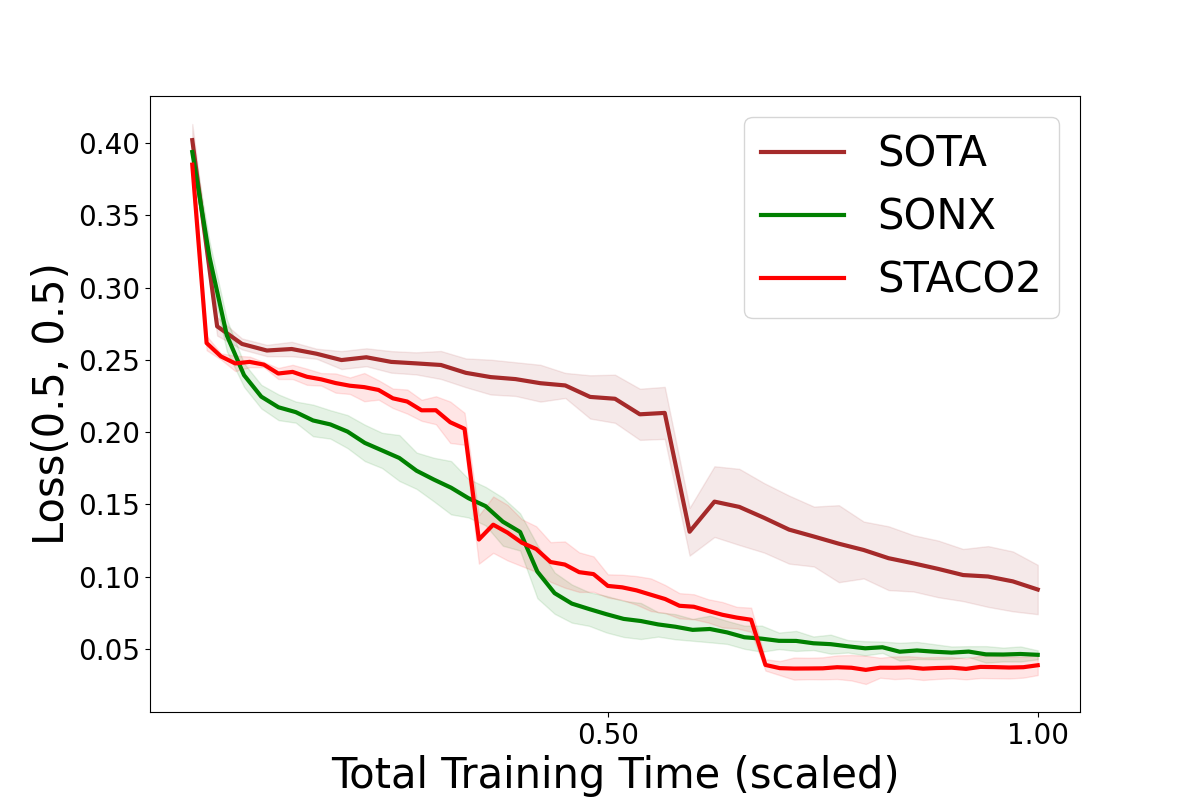}}
        \subfloat[adrenalmnist3d]
        {\includegraphics[width=0.245\textwidth]{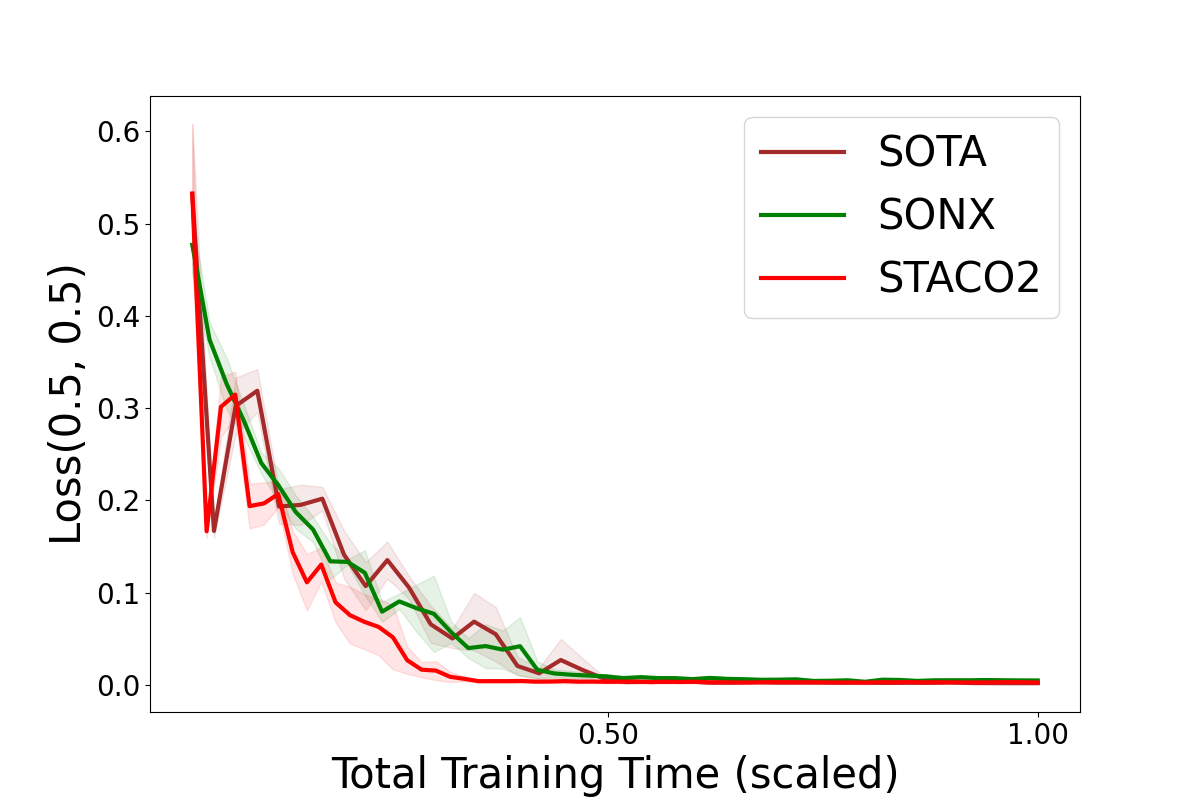}}
        \subfloat[nodulemnist3d]
        {\includegraphics[width=0.245\textwidth]{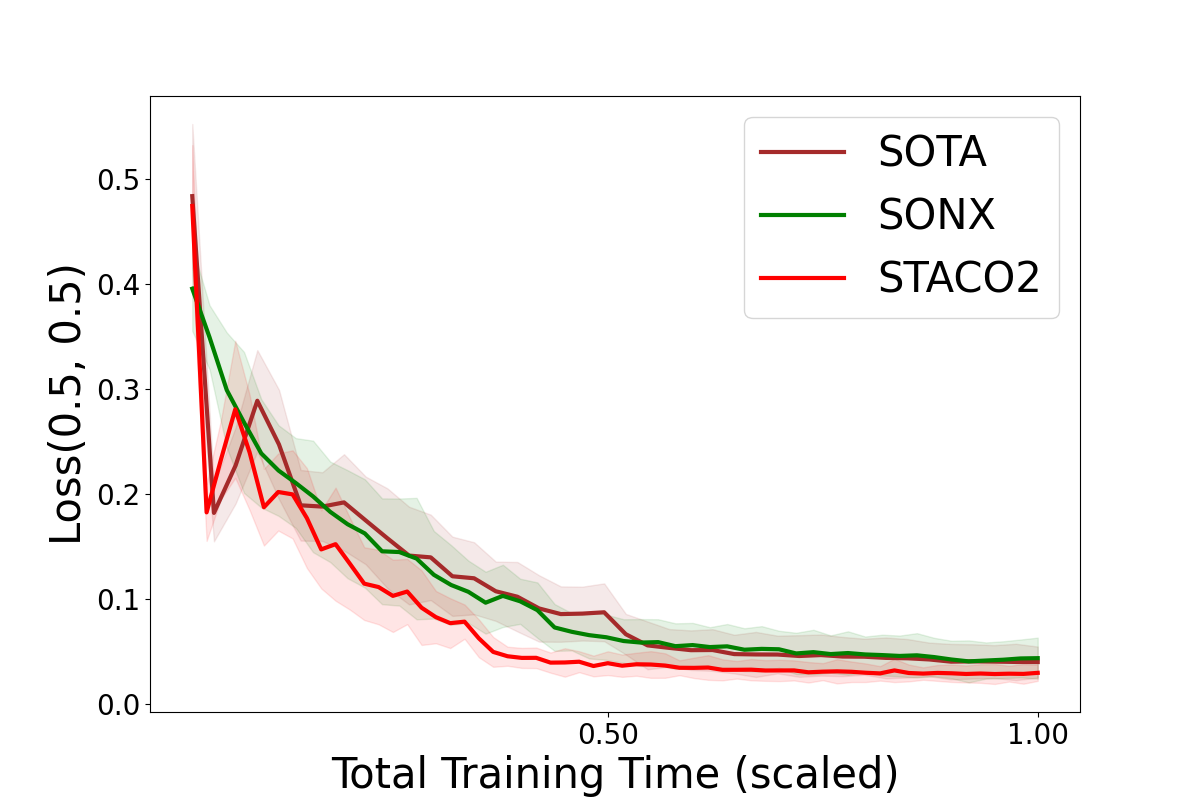}}\\
        \subfloat[molmuv(t1)]
        {\includegraphics[width=0.245\textwidth]{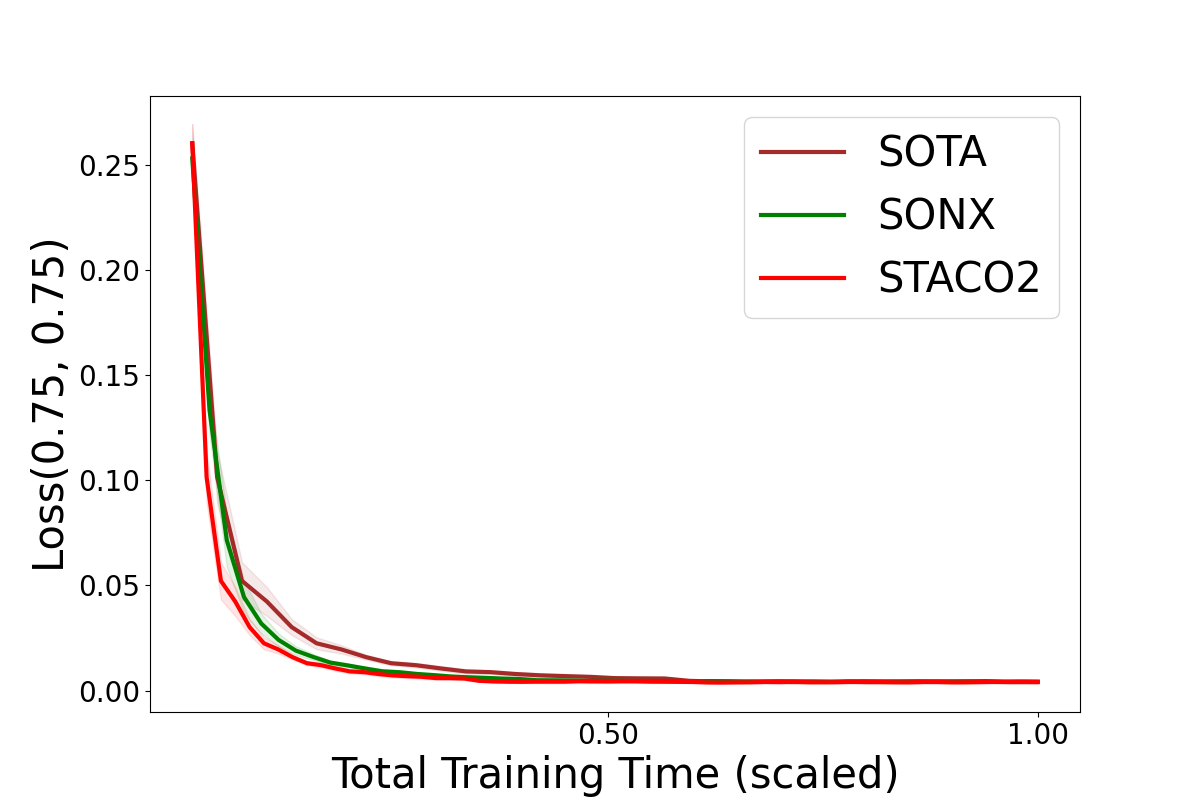}}
        \subfloat[moltox21(t0)]
        {\includegraphics[width=0.245\textwidth]{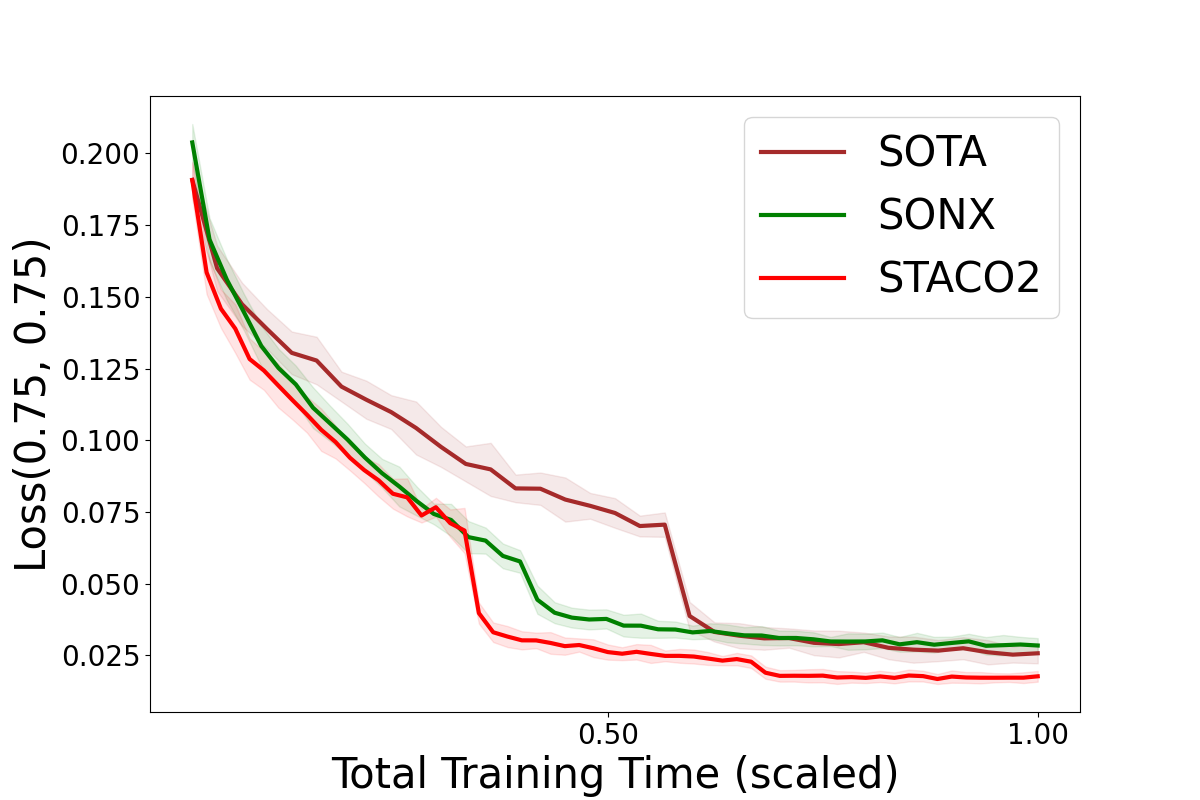}}
        \subfloat[adrenalmnist3d]
        {\includegraphics[width=0.245\textwidth]{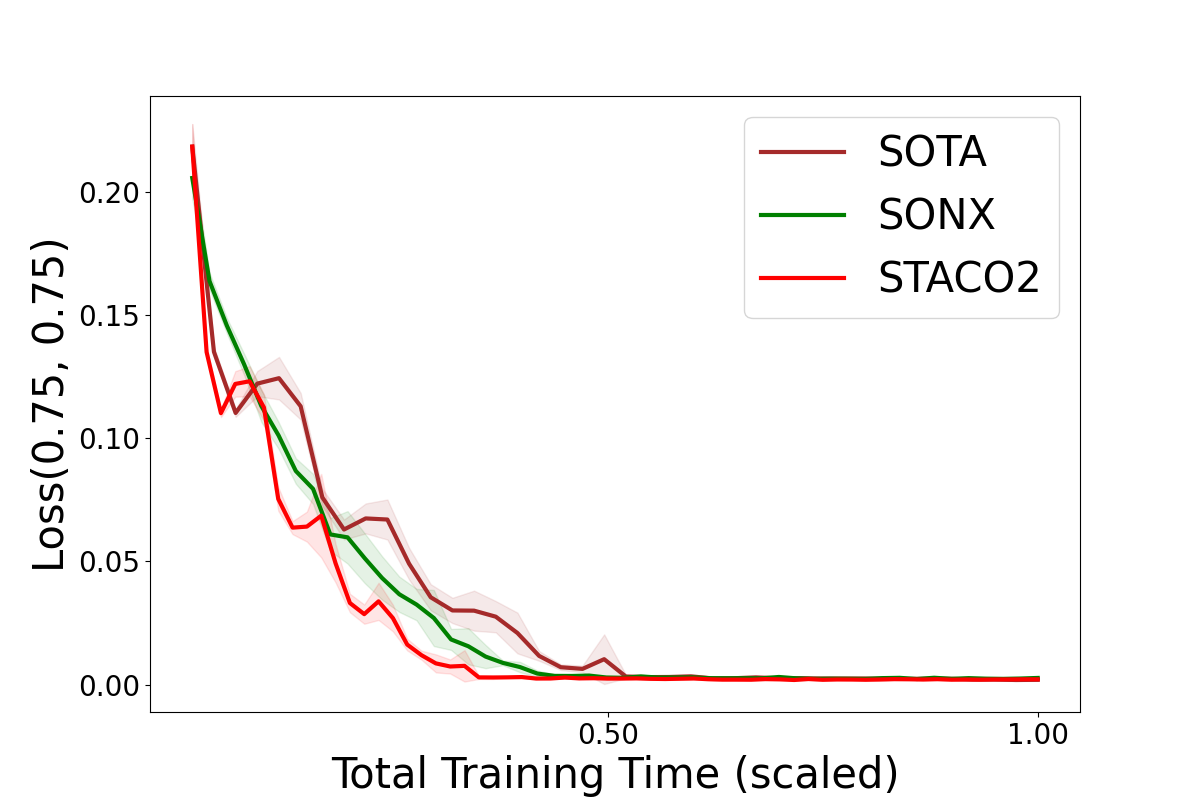}}
        \subfloat[nodulemnist3d]
        {\includegraphics[width=0.245\textwidth]{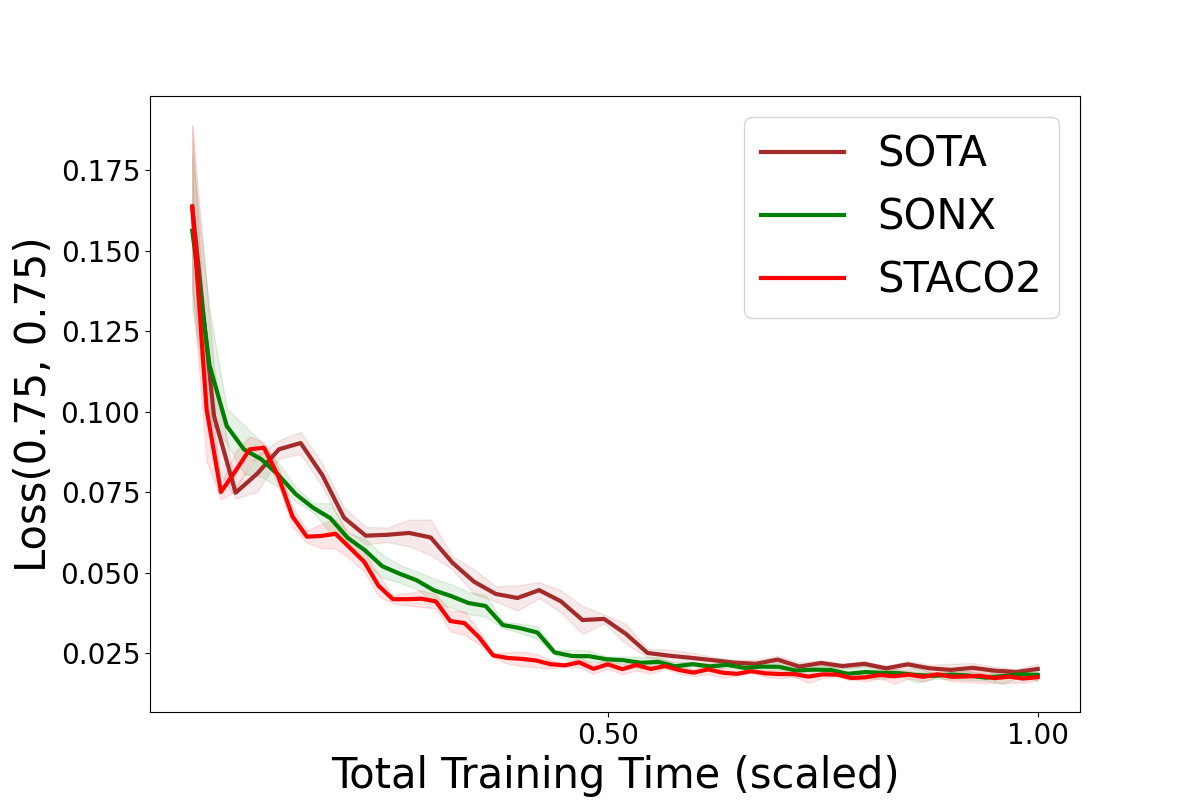}}
        \caption{Training Loss Curves of STACO2, SOTA, and SONX on four different datasets. The first row shows the Loss (0.5, 0.5) results, and the second row shows the Loss (0.75, 0.75) results.}
        \label{fig:nonlinear_loss}

        %\centering
        %\subfloat[ogbg-moltox21 (t0)]
        %{\includegraphics[width=0.245\textwidth]{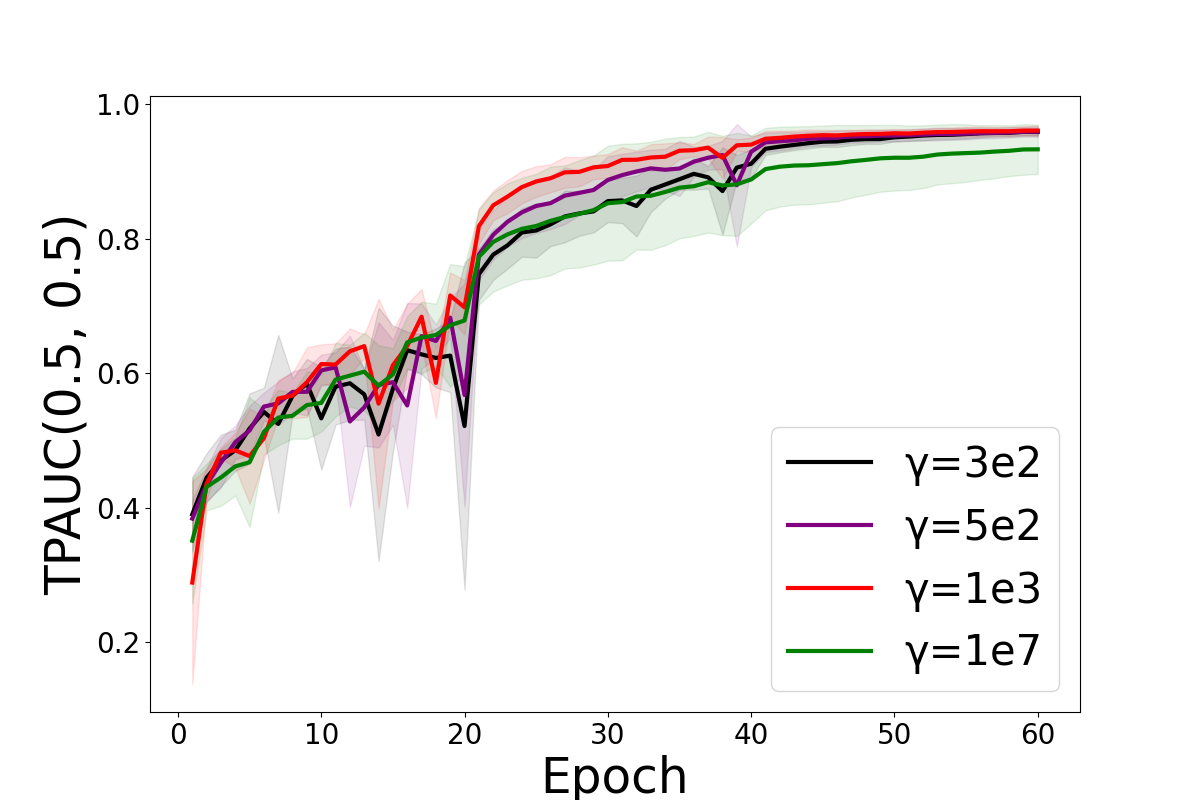}}
        %\subfloat[nodulemnist3d]
        %{\includegraphics[width=0.245\textwidth]{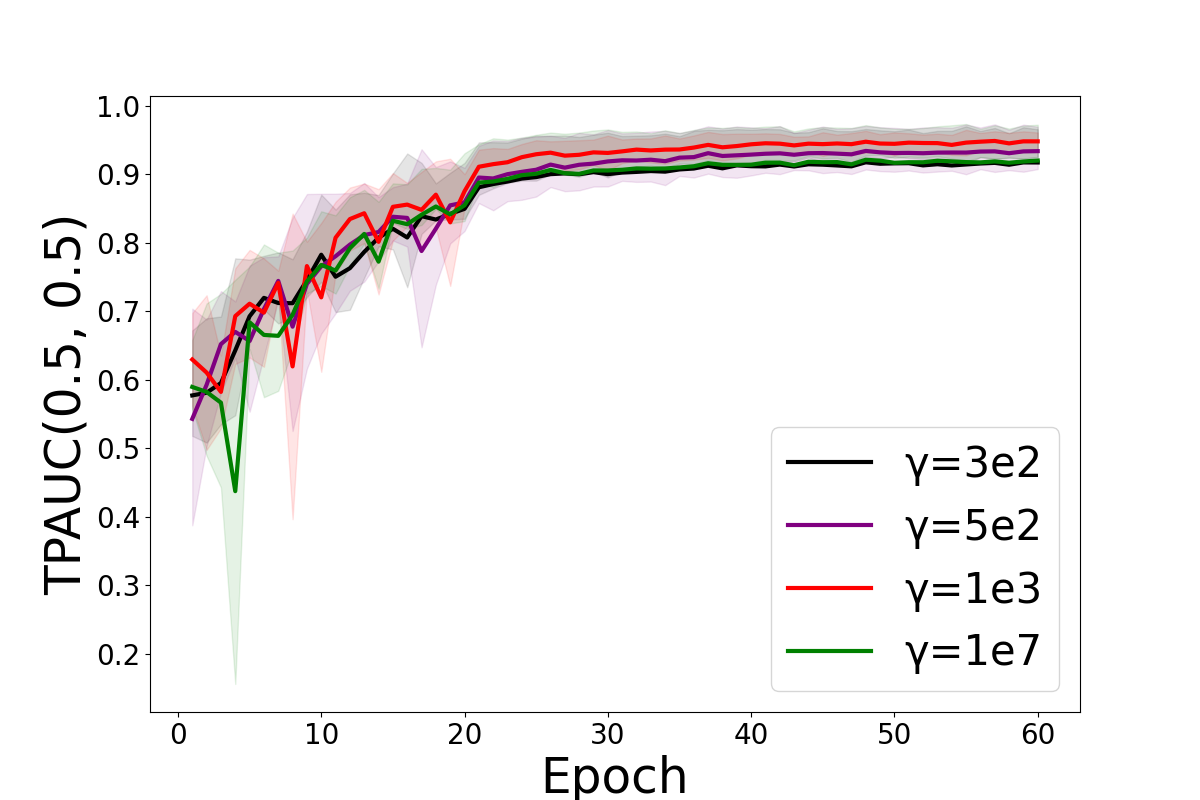}}
       % \caption{Training Curves of STACO2 for TPAUC optimization with different $\gamma$ on four datasets: ogbg-molmuv (t1), ogbg-moltox21 (t0), nodulemnist3d, and adrenalmnist3d. The y-axis is the TPAUC (0.5, 0.5).}
        %\label{fig:nonlinear_tr_0.5_gamma}
        \centering
        \subfloat[Batch Size=16]
        {\includegraphics[width=0.25\textwidth]{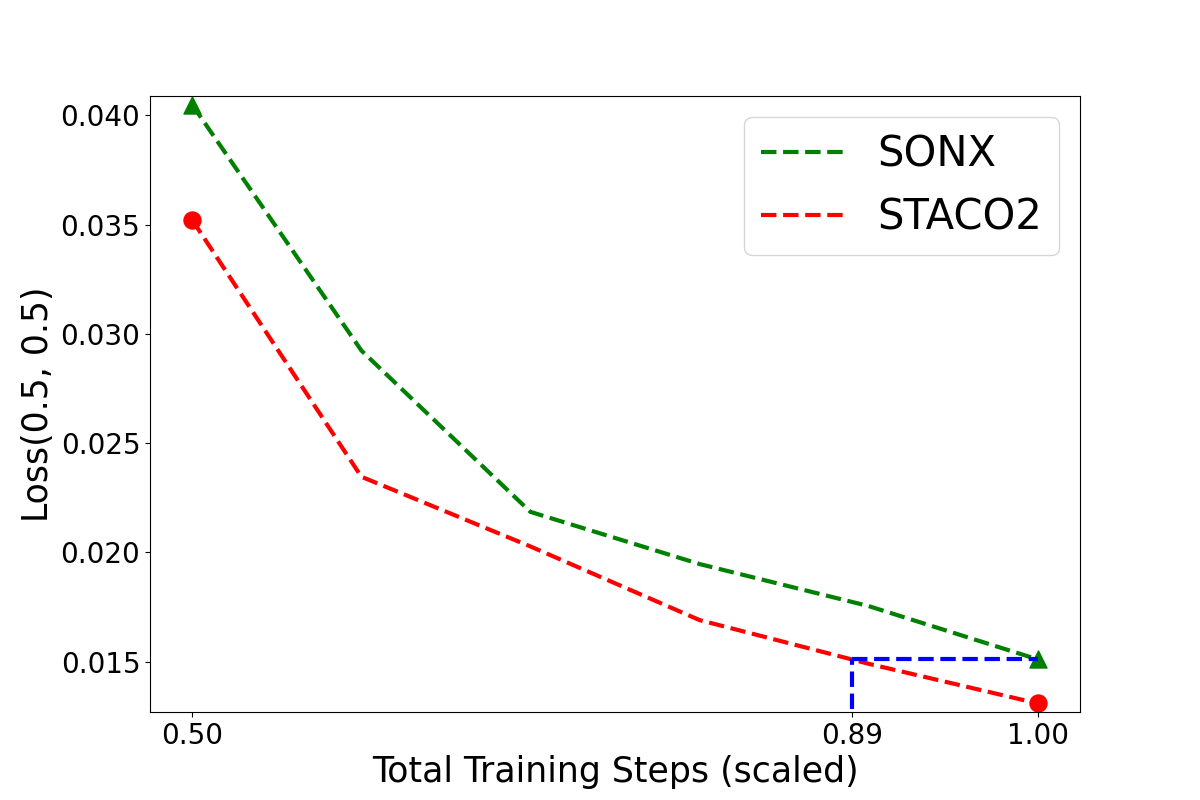}}
        \subfloat[Batch Size=64]
        {\includegraphics[width=0.25\textwidth]{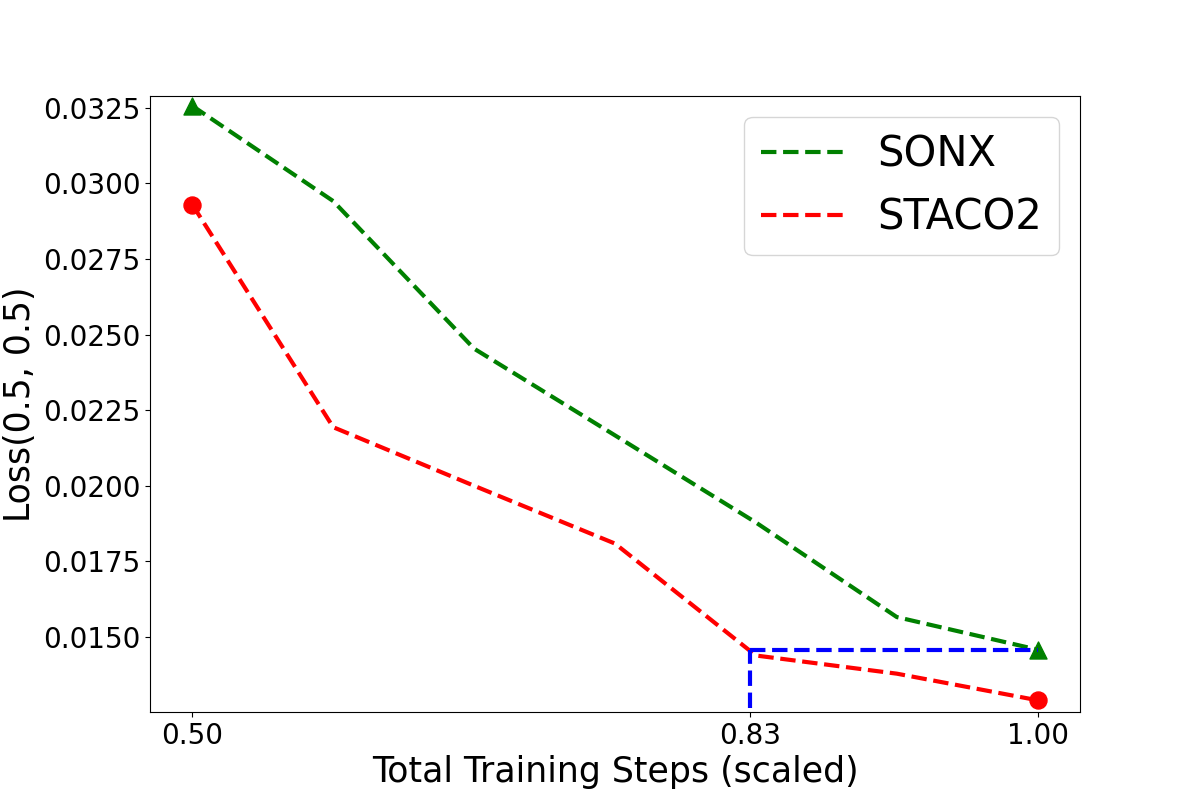}}
        \subfloat[Batch Size=256]
        {\includegraphics[width=0.25\textwidth]{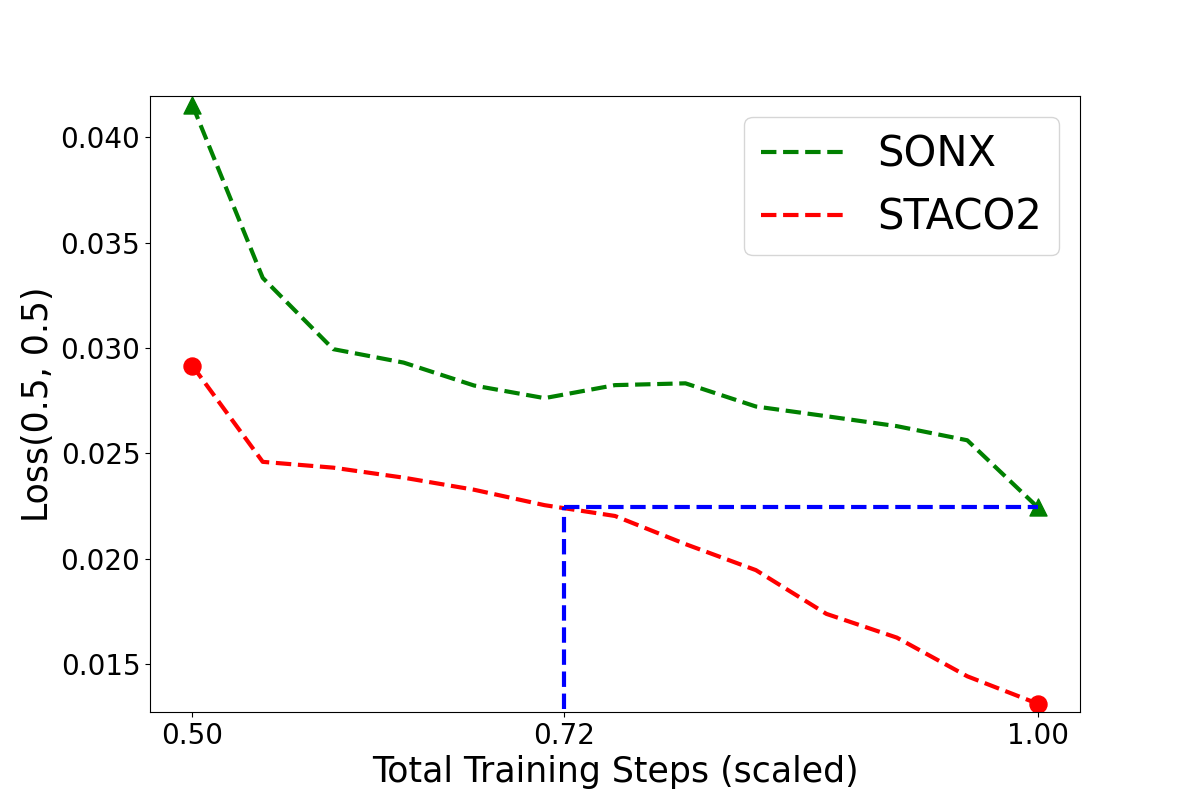}}
        \caption{Negative sample batch size ($B$) benefits of STACO2 over SONX for training on ogbg-molmuv (t1) at 16, 64, and 256 batch size.}
        \label{fig:nonlinear_tr_0.5_bs}
    \end{figure*}

    \begin{table*}
    \centering
    \caption{TPAUC on the test data of linear and deep models. $(\theta_0, \theta_1)$ represents TPR $\geq 1-\theta_0$, FPR $\leq \theta_1$. Results are reported as mean(std).}
           \scalebox{0.73}{	
    \begin{tabular}{c|l|ccc|cccc}
    \toprule
    \multirow{2}{*}{\textbf{Metrics}} & \multirow{2}{*}{\textbf{Methods}} & \multicolumn{3}{c|}{{\bf Linear Model}} & \multicolumn{4}{c}{{\bf Deep Model}}\\\cline{3-9}
    & & HIGGS                & SUSY          & ijcnn1  & molmuv(t1) & moltox21(t0) &  nodulemnist3d &  adrenalmnist3d  \\ \hline
                                           & CE     & 0.041(0.001)                  & 0.300(0.010)          & 0.230(0.017)                                      & 0.715(0.166)           & 0.267(0.042)             & 0.657(0.037)                 & 0.507(0.094)                  \\ %\cline{2-6} 
                                           & AUCM             & 0.122(0.001)         & 0.512(0.015)          & 0.487(0.098)                       & 0.722(0.114)           & 0.279(0.038)             & 0.672(0.021)                 & \textbf{0.554(0.022)}                  \\ %\cline{2-6} 
    \multirow{2}{*}{$(0.5, 0.5)$}     & SOTAs                     & 0.108(0.001)     & 0.484(0.001)          & 0.637(0.030)                      & 0.821(0.110)           & 0.325(0.030)             & 0.688(0.019)                 & 0.498(0.090)                  \\ %\cline{2-6} 
                                           & PAUCI	 & 0.138(0.002)	& 0.519(0.001)	& 0.664(0.018)	& 0.820(0.046)	& 0.283(0.032)	& 0.684(0.021)	& 0.541(0.042) \\
                                           & SONX            & 0.110(0.009)              & 0.516(0.001)          & 0.633(0.094)                               & 0.865(0.061)           & 0.286(0.023)             & 0.654(0.035)                 & 0.540(0.042)                  \\ %\cline{2-9} 
                                           & {\bf STACO}     & \textbf{0.158(0.003)}       & \textbf{0.520(0.001)} & \textbf{0.682(0.054)}                            & \textbf{0.904(0.048)}  & \textbf{0.325(0.023)}    & \textbf{0.707(0.005)}        & 0.546(0.047)         \\ \hline
                                           & CE                          & 0.354(0.002)                   & 0.612(0.006)          & 0.581(0.014)                       & 0.871(0.058)            & 0.627(0.035)              & 0.825(0.016)                 & 0.750(0.055)         \\ %\cline{2-6} 
                                           & AUCM                    & 0.435(0.004)                  & 0.726(0.002)          & 0.728(0.061)                       & 0.851(0.066)            & 0.630(0.027)              & 0.831(0.016)                 & 0.772(0.014)                  \\ %\cline{2-6} 
    \multirow{2}{*}{$(0.75, 0.75)$}   & SOTAs  & 0.441(0.002)                 & 0.746(0.009)          & 0.813(0.016)                                       & 0.821(0.070)            & 0.614(0.056)              & 0.838(0.012)                 & 0.763(0.054)                  \\% \cline{2-6} 
                                           & PAUCI	& 0.4742(0.003)	& 0.749(0.007)	& 0.830(0.030)	& 0.883(0.024)	& 0.616(0.030)	& 0.823(0.014)	&0.7642(0.015) \\
                                           & SONX                    & 0.447(0.009)                   & 0.748(0.000)          & 0.810(0.049)                & 0.927(0.029)            & 0.626(0.028)              & 0.832(0.013)                 & 0.772(0.021)                 \\ %\cline{2-9} 
                                           & {\bf STACO}         & \textbf{0.484(0.004)} & \textbf{0.752(0.000)} & \textbf{0.839(0.024)}                  & \textbf{0.945(0.024)}   & \textbf{0.638(0.041)}     & \textbf{0.856(0.003)}        & \textbf{0.780(0.013)}                 \\ 
                                           \bottomrule
    \end{tabular}}
    \label{tab:te}
    \vspace*{-0.1in}
    \end{table*}

    \textbf{Training Results.} Under two different metrics, we compare the training performance of the linear model between STACO1 and SONX in Figure \ref{fig:linear_tpauc}, and the deep learning model among STACO2, SOTA, and SONX in Figure \ref{fig:nonlinear_loss}. We exclude SOTA from linear model experiments since SOTA is designed for optimizing deep learning models. In the linear model experiments as shown in Figure \ref{fig:linear_tpauc}, we plot the TPAUC values throughout the training process. The results demonstrate that STACO1 exhibits strong and stable performance, consistently outperforming SONX on both the HIGGS and SUSY datasets in the (0.5, 0.5) and (0.75, 0.75) settings. These findings indicate that STACO1 is more efficient than SONX in maximizing TPAUC. %In particular, when optimizing TPAUC with metric $(0.75, 0.75)$, STACO1 shows great improvement, which means STACO1 has better performance than SONX with looser TPR and FPR restrictions.
    {We also observed that in Figure \ref{fig:linear_tpauc}, across all datasets, there is an abrupt drop and subsequent rise in performance. This is due to the excessively large step size. Once the step size is reduced,  training returns to normal.}
    
    In the nonlinear model experiments as shown in Figure \ref{fig:nonlinear_loss}, STACO2 demonstrates competitive performance in terms of training loss reduction across all four datasets compared to SONX and SOTA. In both the (0.5, 0.5) and (0.75, 0.75) settings, STACO2 achieves lower or comparable loss values while maintaining a stable training trajectory. These results indicate that STACO2 is effective in minimizing loss and optimizing model performance, further supporting its advantage over SONX and SOTA.
    
    Due to space limit, we present more training results in Figure \ref{fig:linear_loss}, \ref{fig:nonlinear_tpauc} in Appendix \ref{sec:add}.

    \textbf{Testing Results.} Under two different metrics, we present the testing results for linear and deep learning models in Table \ref{tab:te}. For the linear model, STACO1 consistently outperforms the baseline methods across various datasets, demonstrating its robustness and strong generalization capability across different datasets and evaluation criteria. Similarly, for the nonlinear model, STACO2 achieves significant improvements over existing methods. Notably, compared to SONX, STACO2 exhibits a more pronounced advantage in testing performance than in training, suggesting superior generalization when optimizing the exact TPAUC loss.

    We do not include SOTA \citep{zhu2022auc} in the above comparison, since SOTA is quite similar to STACO2 thus they have similar testing results. However, we must point out that the convergence of SOTA is much slower than STACO2 since it has to update all the coordinates of $\s$ in problem (\ref{eq:tpauc_ori}). As shown in Figure \ref{fig:nonlinear_loss}, STACO2 is significantly faster than SOTA.

    \begin{figure}[t]
        \centering

        \centering
        \subfloat[moltox21(t0)]
        {\includegraphics[width=0.245\textwidth]{figure/ogbg-moltox21_3_train_gamma.png}}
        \subfloat[nodulemnist3d]
        {\includegraphics[width=0.245\textwidth]{figure/nodulemnist3d_3_train_gamma.png}}
        \subfloat[moltox21(t0)]
        {\includegraphics[width=0.245\textwidth]{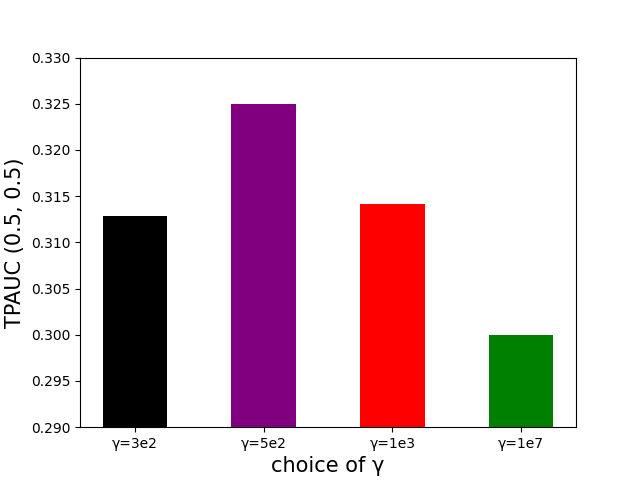}}
        \subfloat[nodulemnist3d]
        {\includegraphics[width=0.245\textwidth]{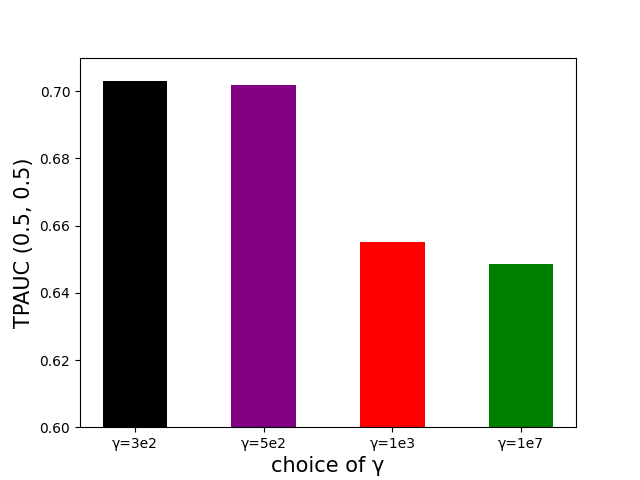}}
        \caption{First two figures shows the TPAUC (0.5,0.5) training curves of STACO2 with different $\gamma$; last two figures shows the TPAUC (0.5, 0.5) testing results of STACO2 with different $\gamma$. The experiment is conducted on datasets ogbg-moltox21(t0) and nodulemnist3d.}
        \label{fig:nonlinear_tr_0.5_gamma}
    \end{figure}

    \subsection{Ablation Study}

    \textbf{Effect of Batch Size.} We examine the impact of negative batch size $B$ on the performance of STACO2 and SONX to verify the mini-batch speedup of STACO2 over SONX. Specifically, we tune the negative batch size $B$ in [16, 64, 256]. In Figure \ref{fig:nonlinear_tr_0.5_bs}, we present the training loss curve for STACO2 and SONX on dataset ogbg-molmuv (t1). Our results show that as batch size increases, STACO2 exhibits greater convergence improvement compared to SONX, indicating that it benefits more from a larger batch size. This observation is consistent with Theorem \ref{thm:iteration_complexity_nvx}, i.e., STACO2 can achieve full mini-batch speedup than SONX. 

    \textbf{Effect of Epoch Decay Factor.} We examine the impact of epoch decay parameter $\gamma$ on the training performance of STACO2. In Theorem $\ref{thm:iteration_complexity_nvx}$, $\gamma$ must be less or equal than $\frac{1}{2C_f \rho}$, where $C_f$ is the Lipschitz constant for function $f_i$ and $\rho$ is the {weakly-convexity} parameter for function $g_i$. In TPAUC maximization problem, $C_f$ is $1$. However, $\rho$ in practice is difficult to determine. Therefore, we tune $\gamma$ in the range \{300, 500, 1000\} in the experiment. Additionally, we conduct $\gamma=$1e7 case for our ablation study. Notably, STACO2 reduces to STACO1 if $\gamma$ equals an infinitely large number. The results are presented in Figure \ref{fig:nonlinear_tr_0.5_gamma}. We observe that an appropriate value of $\gamma$ can yield better training results, verifying Theorem $\ref{thm:iteration_complexity_nvx}$ and demonstrating the importance of the epoch decay parameter $\gamma$ for primal-dual algorithms in deep learning. 

    {\textbf{Effect of Surrogate Loss $\ell$.} We investigate how the choice of surrogate loss function $\ell$ influences the final experimental results. Specifically, we consider three common losses: square hinge loss, square loss, and hinge loss, and evaluate their performance across various datasets. The results show that our algorithm STACO performs consistently well and remains stable across different surrogate losses, indicating that the choice of $\ell$ has limited impact on the final performance.

    \begin{table}[ht]
    \centering
    \caption{Comparison of performance metrics using different nonsmooth losses across datasets. Each entry is reported as mean(std).}
    \scalebox{0.66}{
    \begin{tabular}{l|ccc|ccc|ccc}
    \toprule
    \multirow{2}{*}{\textbf{Methods}} & \multicolumn{3}{c|}{\textbf{HIGGS}} & \multicolumn{3}{c|}{\textbf{SUSY}} & \multicolumn{3}{c}{\textbf{ijcnn1}} \\
    \cline{2-10}
    & hinge square & square & hinge & hinge square & square & hinge & hinge square & square & hinge \\
    \midrule
    CE     & 0.354(0.002) & 0.376(0.003) & 0.341(0.004) & 0.612(0.004) & 0.590(0.004) & 0.639(0.005) & 0.581(0.003) & 0.560(0.004) & 0.604(0.003) \\
    AUCM   & 0.435(0.002) & 0.462(0.003) & 0.411(0.004) & 0.726(0.003) & 0.748(0.004) & 0.699(0.004) & 0.728(0.004) & 0.752(0.003) & 0.701(0.005) \\
    SOTAs  & 0.441(0.003) & 0.467(0.004) & 0.415(0.004) & 0.746(0.003) & 0.773(0.003) & 0.719(0.002) & 0.813(0.002) & 0.840(0.004) & 0.787(0.004) \\
    PAUCI  & 0.474(0.003) & 0.500(0.004) & 0.451(0.003) & 0.749(0.004) & 0.724(0.003) & \textbf{0.777(0.004)} & 0.830(0.002) & 0.857(0.003) & 0.808(0.003) \\
    SONX   & 0.447(0.003) & 0.472(0.003) & 0.420(0.004) & 0.748(0.003) & 0.773(0.003) & 0.723(0.003) & 0.810(0.002) & 0.785(0.003) & \textbf{0.836(0.004)} \\
    STACO  & \textbf{0.484(0.004)} & \textbf{0.511(0.003)} & \textbf{0.458(0.004)} & \textbf{0.752(0.000)} & \textbf{0.779(0.003)} & 0.725(0.004) & \textbf{0.839(0.024)} & \textbf{0.866(0.004)} & 0.812(0.004) \\
    \bottomrule
    \end{tabular}}
    \label{tab:nonsmooth_losses}
    \end{table}}

    {
    \subsection{Training Efficiency}
        To demonstrate the training efficiency of our algorithm, we compare the per-iteration runtime of STACO, PAUCI, SONX, and SOTA across four benchmark datasets, as shown in Table~\ref{tab:time}. STACO consistently achieves the lowest runtime per iteration across all datasets. Notably, it surpasses the second-best method, SONX, by a substantial margin, particularly on larger datasets such as molmuv and moltox21. These results highlight the superior computational efficiency of STACO, making it a compelling choice for large-scale or time-sensitive applications.

    \begin{table}[]
    \centering
    \caption{Training time per iteration (in seconds) on different datasets.}
    \begin{tabular}{c|cccc}
    \hline
    \textbf{Methods} & \textbf{molmuv} & \textbf{moltox21} & \textbf{nodulemnist3d} & \textbf{adrenalmnist3d} \\ \hline
    SOTA                                  & 14.80           & 8.01              & 2.02                   & 2.23                    \\
    SONX                                  & 9.78            & 4.54              & 1.62                   & 1.76                    \\
    PAUCI                                 & 12.54           & 5.72              & 2.51                   & 2.74                    \\
    STACO                                 & \textbf{8.72}   & \textbf{3.96}     & \textbf{1.40}          & \textbf{1.48}           \\ \hline
    \end{tabular}
    \label{tab:time}
    \end{table}
    }

\section{Conclusion}
    In this paper, we proposed two novel stochastic primal-dual double block-coordinate algorithms for optimizing two-way partial AUC (TPAUC), effectively addressing imbalanced data classification. By leveraging stochastic updates for both primal and dual variables, our methods achieve improved convergence rates in both convex and non-convex settings. Empirical results demonstrate faster convergence and superior generalization across benchmark datasets, establishing a new state-of-the-art in TPAUC optimization for real-world applications.
    %addressing the challenges of imbalanced data classification. By leveraging stochastic updates for both primal and dual variables, our methods efficiently handle both convex and non-convex settings. Theoretical analyses confirm our improved convergence rates, while extensive experiments demonstrate superior performance in terms of faster convergence and enhanced generalization across multiple benchmark datasets. These advancements establish a new state-of-the-art in TPAUC optimization and provide practical solutions for real-world applications in machine learning.

\section*{Acknowledgments}
We are grateful to the reviewers' comments. LZ, BW, TY were partially supported by NSF grants \#2306572 and \#2147253. 

\bibliography{main}
\bibliographystyle{tmlr}

\appendix
\newpage
\section{Vanilla Algorithm}
\label{app:vanilla}

    \begin{algorithm}[H]
    \caption{Simplified STACO1}
    \label{alg:single}
        \begin{algorithmic}[1] 
            \State Initialize $\u_0\in\U$, $\s_0\in\S$, $\y_0 \in \Y$
            \For{$t=0,1,\dotsc,T-1$}
                \State Sample a batch $\S_t\subset \{1,\dotsc,n\}$, $|\S_t| = S$ 
                \For{each $i\in\S_t$} 
                    \State Sample independent size-$B$ mini-batches $\B_t^{(i)}, \tilde{\B}_{t}^{(i)}$ from $\mathbb{P}_i$
                    \State Compute $\hat{g}_t^{(i)}(\B_t^{(i)}) = g_i(\u_t,\s_t^{(i)};\B_t^{(i)})$
                    \State Compute $\hat{G}_{t,1}^{(i)}(\tilde{\B}_t^{(i)})  \in \partial_{\u} g_i (\u_t,\s_t^{(i)};\tilde{\B}_t^{(i)})$, $\hat{G}_{t,2}^{(i)}(\tilde{\B}_t^{(i)})  \in \partial_{\s^{(i)}} g_i (\u_t,\s_t^{(i)};\tilde{\B}_t^{(i)})$
                    \State $\y_{t+1}^{(i)} = \argmax_{\y^{(i)}\in\Y_i}\left\{\y^{(i)} \hat{g}_t^{(i)}({\B}_t^{(i)})  - f_i^*(\y^{(i)}) - \frac{1}{2\alpha} \left(\y^{(i)} - \y_t^{(i)}\right)^2\right\}$
                    \State $\s_{t+1}^{(i)} = \s_t^{(i)} - \beta\frac{1}{S}\sum_{i\in\S_t} \y_{t+1}^{(i)} \hat{G}_{t,2}^{(i)}(\tilde{\B}_t^{(i)})$
                \EndFor
                \State For each $i\notin \S_t$, $\y_{t+1}^{(i)} = \y_t^{(i)},\s_{t+1}^{(i)} = \s_t^{(i)}$
                \State $\u_{t+1} = \u_t - \eta\frac{1}{S}\sum_{i\in\S_t} \y_{t+1}^{(i)} \hat{G}_{t,1}^{(i)}(\tilde{\B}_t^{(i)})$
            \EndFor
            \State $\bar{\u}=\frac{1}{T}\sum_{t=0}^{T-1}\u_{t+1},\bar{\s}=\frac{1}{T}\sum_{t=0}^{T-1}\s_{t+1}$
            \State Return $\bar{\u},\bs$
            %\EndProcedure
        \end{algorithmic}
    \end{algorithm}
    
    \begin{algorithm}[H]
        \caption{Simplified STACO2}
        \label{alg:double}
        \begin{algorithmic}[1] 
            \State Initialize $\u_0\in\U,\s_0\in\S$
            \For{$t=0,1,\dotsc,T-1$}
                \State Initialize $\y_{t,0} \in \Y$
                \State Set $\u_{t,0}=\u_{t}, \s_{t,0}=\s_{t}$
                \For{$k=0,1,\dotsc,K_t-1$}
                    \State Sample a batch $\S_{t,k}\subset \{1,\dotsc,n\}$, $|\S_{t,k}| = S$ 
                    \For{each $i\in\S_{t,k}$} 
                        \State Sample independent size-$B$ mini-batches $\B_{t,k}^{(i)}$, $\tilde{\B}_{t,k}^{(i)}$ from $\mathbb{P}_i$
                        \State Compute $\hat{g}_{t,k}^{(i)}(\B_{t,k}^{(i)}) = g_i(\u_{t,k},\s_{t,k}^{(i)};\B_{t,k}^{(i)})$
                        \State Compute $\hat{G}_{t,k,1}^{(i)}(\tilde{\B}_{t,k}^{(i)})  \in \partial_{\u} g_i (\u_{t,k},\s_{t,k}^{(i)};\tilde{\B}_{t,k}^{(i)}), \hat{G}_{t,k,2}^{(i)}(\tilde{\B}_{t,k}^{(i)})  \in \partial_{\s^{(i)}} g_i (\u_{t,k},\s_{t,k}^{(i)};\tilde{\B}_{t,k}^{(i)})$
                        \State $\y_{t, k+1}^{(i)} = \argmax_{\y^{(i)}\in\Y_i}\left\{\y^{(i)} {\hat{g}_{t,k}^{(i)}({\B}_{t,k}^{(i)})} - f_i^*(\y^{(i)}) - \frac{1}{2\alpha_{t}} \left(\y^{(i)} - \y_{t,k}^{(i)}\right)^2\right\}$
                        \State ${\s_{t,k+1}^{(i)} = \argmin_{\s^{(i)}\in\S_i}\left\{\inner{\s^{(i)}}{\frac{1}{S}\sum_{i\in\S_{t,k}} \y_{t,k+1}^{(i)} \hat{G}_{t,k,2}^{(i)}(\tilde{\B}_{t,k}^{(i)})+\frac{1}{\gamma}(\s_{t,k}^{(i)}-\s_{t,0}^{(i)})} + \frac{1}{2\beta_t} \prt{\s^{(i)} - \s_{t,k}^{(i)}}^2\right\}}$
                    \EndFor
                    \State For each $i\notin \S_{t,k}$, $\y_{t,k+1}^{(i)} = \y_{t,k}^{(i)},\s_{t,k+1}^{(i)} = \s_{t,k}^{(i)}$  
                    % \State $\x_{t,k+1} =$
                    % \State $\argmin\left\{\inner{\x}{\frac{1}{S}\sum_{i\in\S_{t,k}} \y_{t,k+1}^{(i)}\prt{ \hat{G}_{t,k}^{(i)}(\tilde{\B}_{t,k}^{(i)})+\frac{1}{\tau_{t,k}}(\x_{t,k}-\x_{t,0})}} + \frac{1}{2\eta_t} \Norm{\x - \x_{t,k}}^2 + (\frac{1}{2\gamma}-\frac{\frac{1}{S}\sum_{i\in\S_{t,k}} \y_{t,k+1}^{(i)}}{2\tau_{t,k}})\Norm{\x - \x_{t,0}}^2\right\}$
                    \State ${\u_{t,k+1} = \argmin_{\u\in\U}\left\{\inner{\u}{\frac{1}{S}\sum_{i\in\S_{t,k}} \y_{t,k+1}^{(i)} \hat{G}_{t,k,1}^{(i)}(\tilde{\B}_{t,k}^{(i)})+\frac{1}{\gamma}(\u_{t,k}-\u_{t,0})} + \frac{1}{2\eta_t} \Norm{\u - \u_{t,k}}^2\right\}}$
                \EndFor
                \State Compute $\bar{\u}_{t} = \frac{1}{K_t}\sum_{k=0}^{K_t - 1} \u_{t,k+1},\bar{\s}_{t} = \frac{1}{K_t}\sum_{k=0}^{K_t - 1} \s_{t,k+1}$
                \State Set $\u_{t+1}=\bar{\u}_{t}, \s_{t+1}=\bar{\s}_{t}$
            \EndFor
            \State Return $\u_{T},\s_{T}$
    %\EndProcedure
    \end{algorithmic}
    \end{algorithm}
\clearpage 

    \begin{table}[H]
    \centering
    \caption{Datasets Statistics (for nodulemnist3d and adrenalmnist3d, we follow the given training, validation and testing split). The percentage in parenthesis represents the proportion of positive samples.}
    \begin{tabular}{ccc}
    \hline
    Dataset            & Train (Validation)           & Test            \\ \hline
    % covtype            & 9800 (0.5\%)                  & 9800 (0.5\%)    \\
    HIGGS              & 4157561 (0.5\%)               & 1039299 (0.5\%) \\
    SUSY               & 2181312 (0.5\%)               & 544489 (0.5\%)  \\
    ijcnn1             & 49990 (9.71\%)                & 91701 (9.5\%)   \\
    ogbg-moltox21 (t0) & 6556 (4.2\%)                  & 709 (4.5\%)     \\
    ogbg-molmuv (t1)   & 13025 (0.17\%)                & 1709 (0.35\%)   \\
    nodulemnist3d      & 1,158 (25.4\%) / 165 (25.4\%) & 310 (20.6\%)    \\
    adrenalmnist3d     & 1,188 (21.8\%) / 98 (22.4\%)  & 298 (23.1\%)    \\ \hline
    \end{tabular}
    \label{tb:dataset_sta}
    \end{table}

    \begin{figure}[H]
        \centering
        \subfloat[ijcnn1]
        {\includegraphics[width=0.28\textwidth]{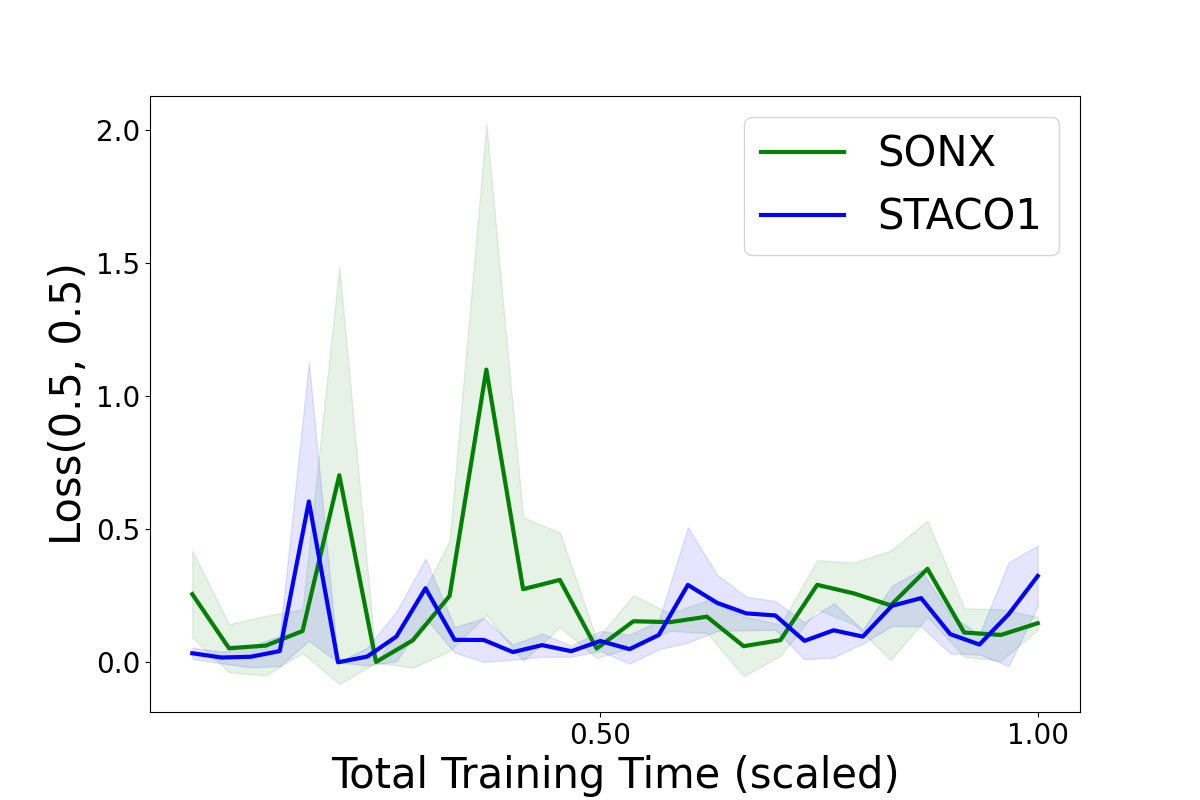}}
        \subfloat[SUSY]
        {\includegraphics[width=0.28\textwidth]{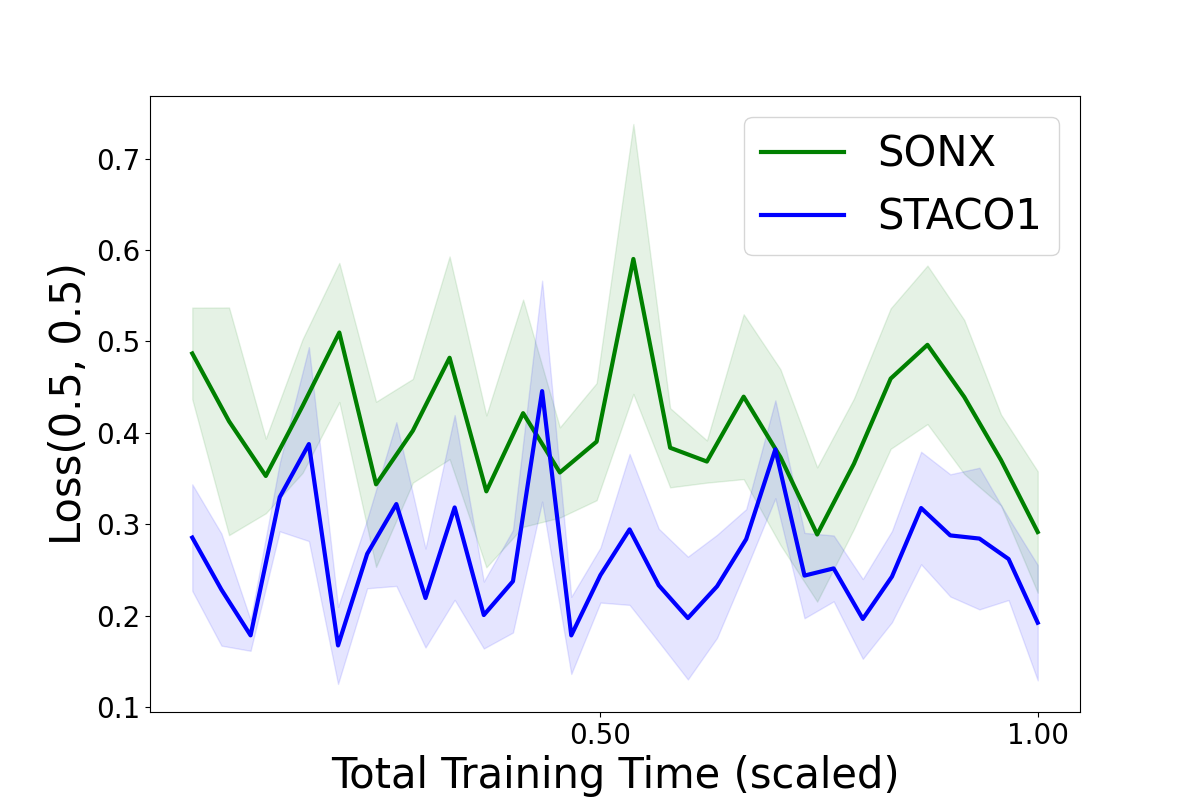}}
        \subfloat[higgs]
        {\includegraphics[width=0.28\textwidth]{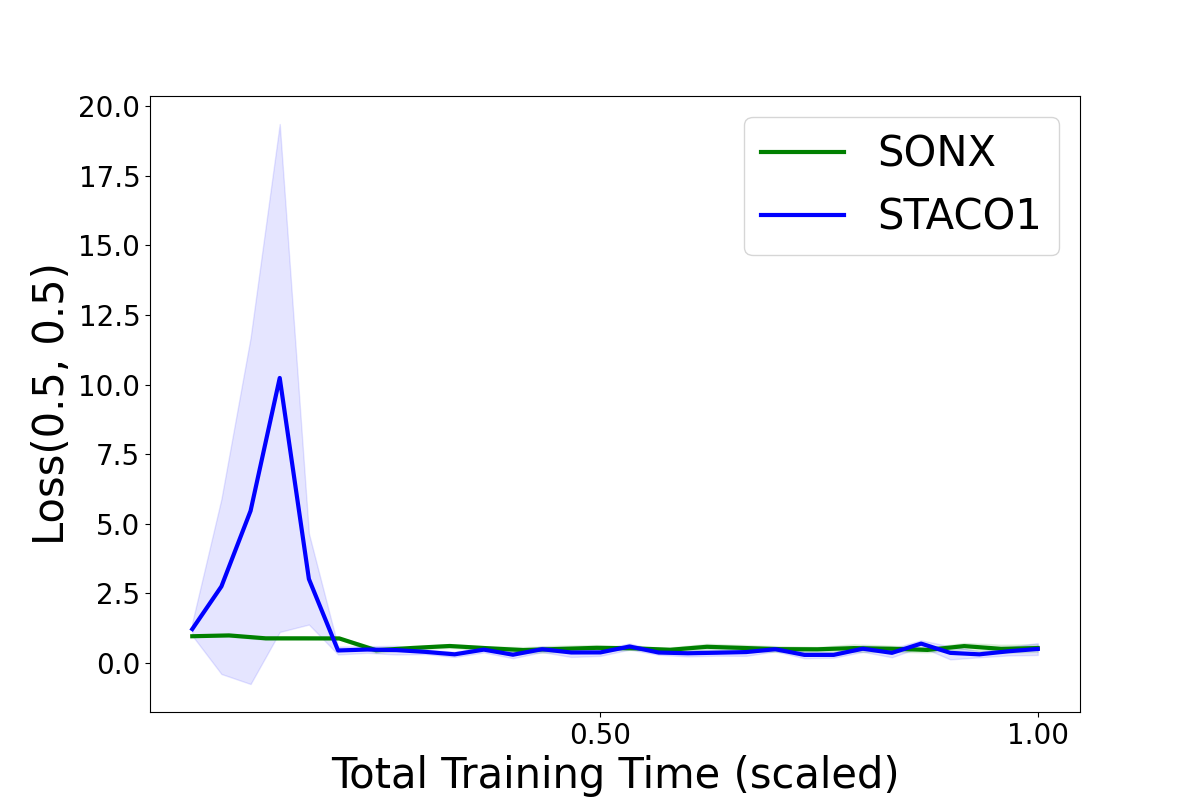}}\\
        \subfloat[ijcnn1]
        {\includegraphics[width=0.28\textwidth]{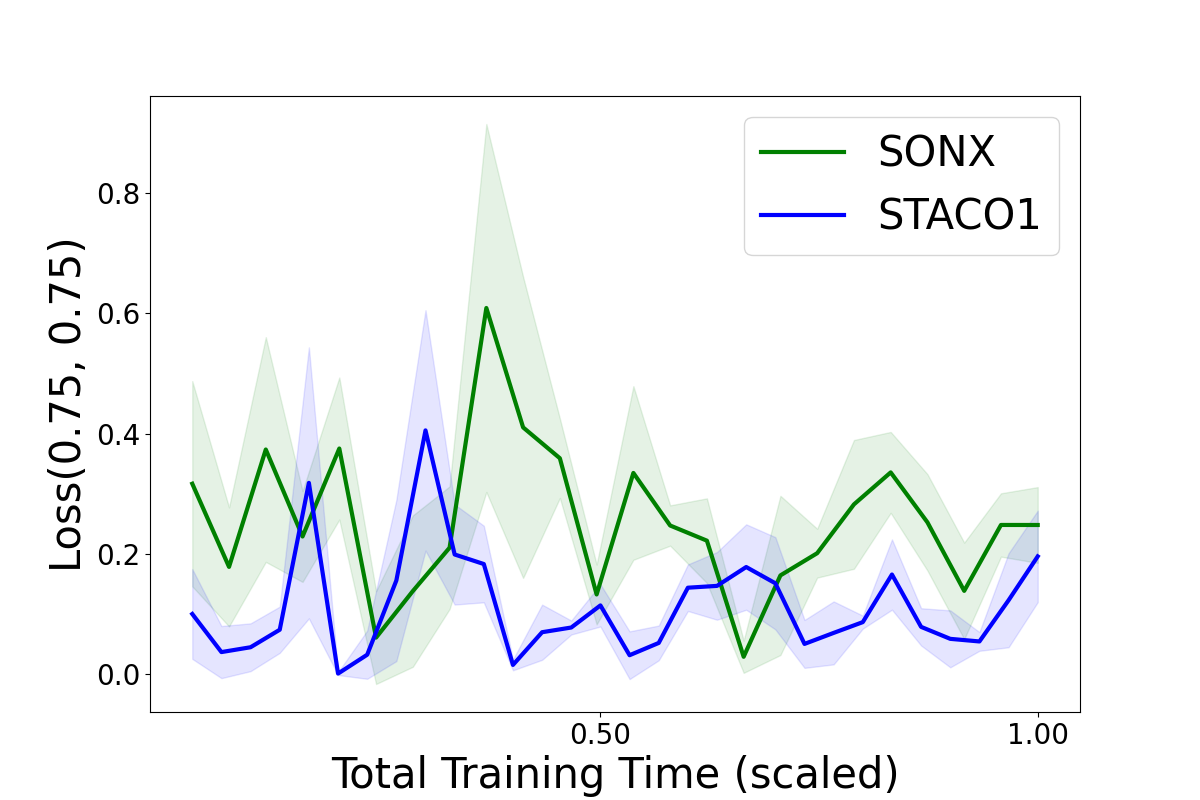}}
        \subfloat[SUSY]
        {\includegraphics[width=0.28\textwidth]{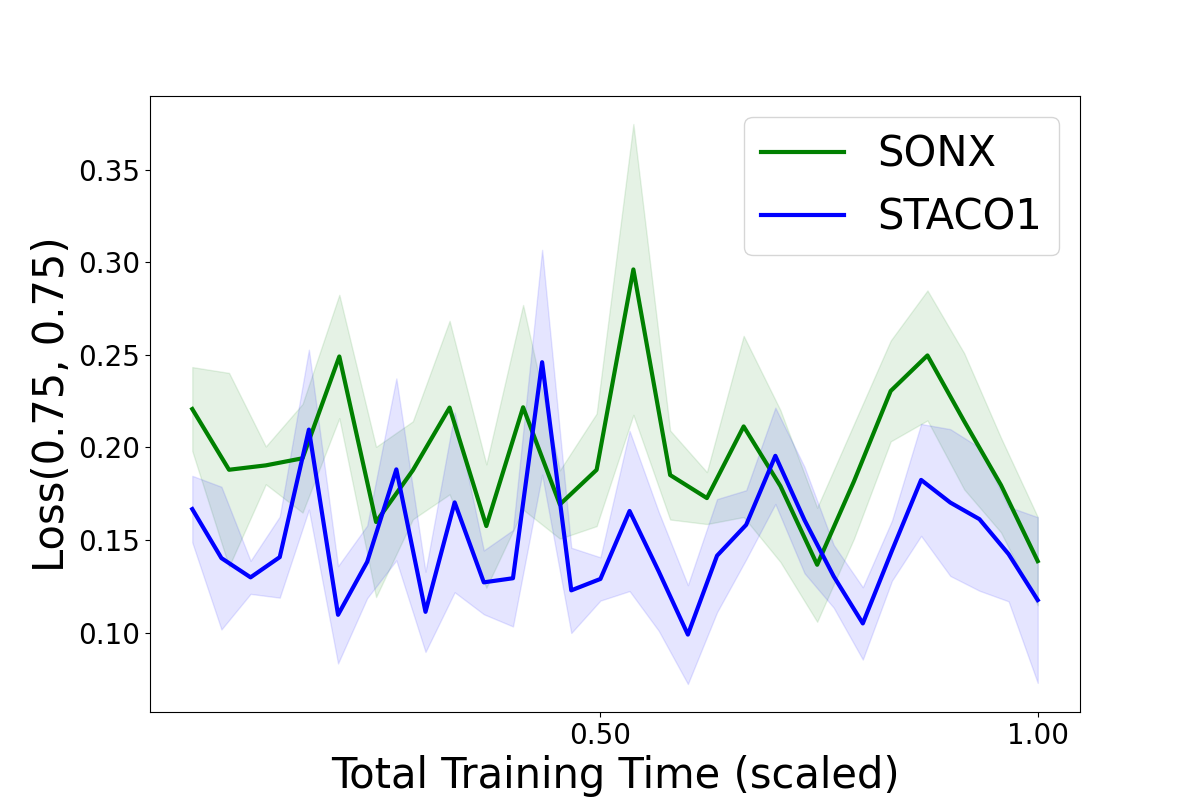}}
        \subfloat[higgs]
        {\includegraphics[width=0.28\textwidth]{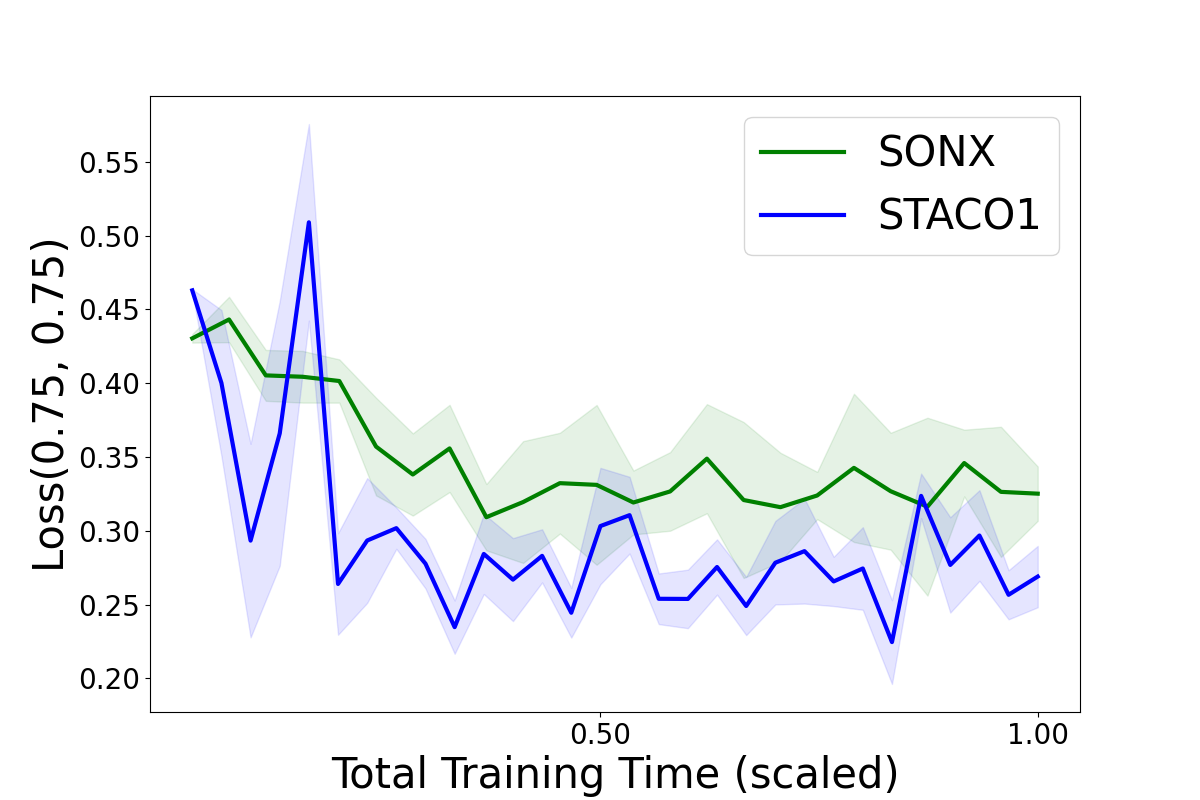}}
        \caption{Training loss Curves of STACO1 and SONX on three different datasets. The first row shows the Loss (0.5, 0.5) results, and the second row shows the Loss (0.75, 0.75) results.}
        \label{fig:linear_loss}

        \centering
        \subfloat[molmuv(t1)]
        {\includegraphics[width=0.245\textwidth]{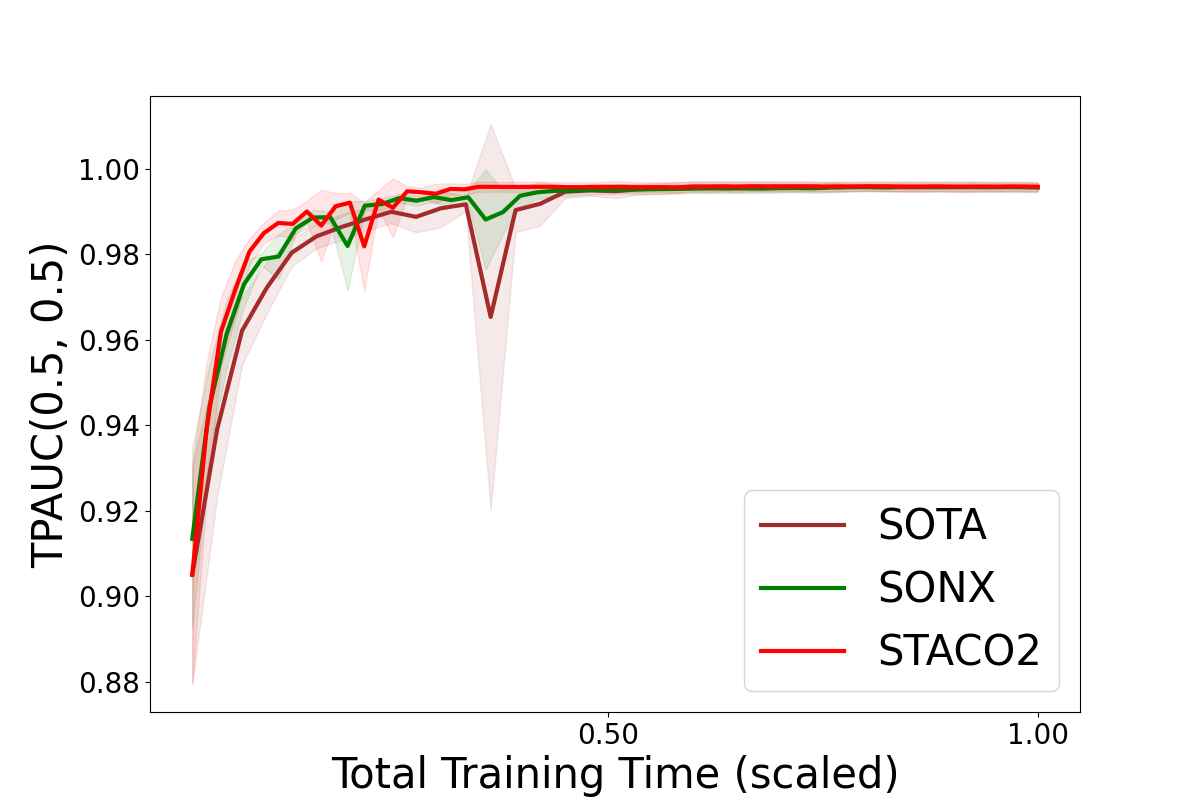}}
        \subfloat[moltox21(t0)]
        {\includegraphics[width=0.245\textwidth]{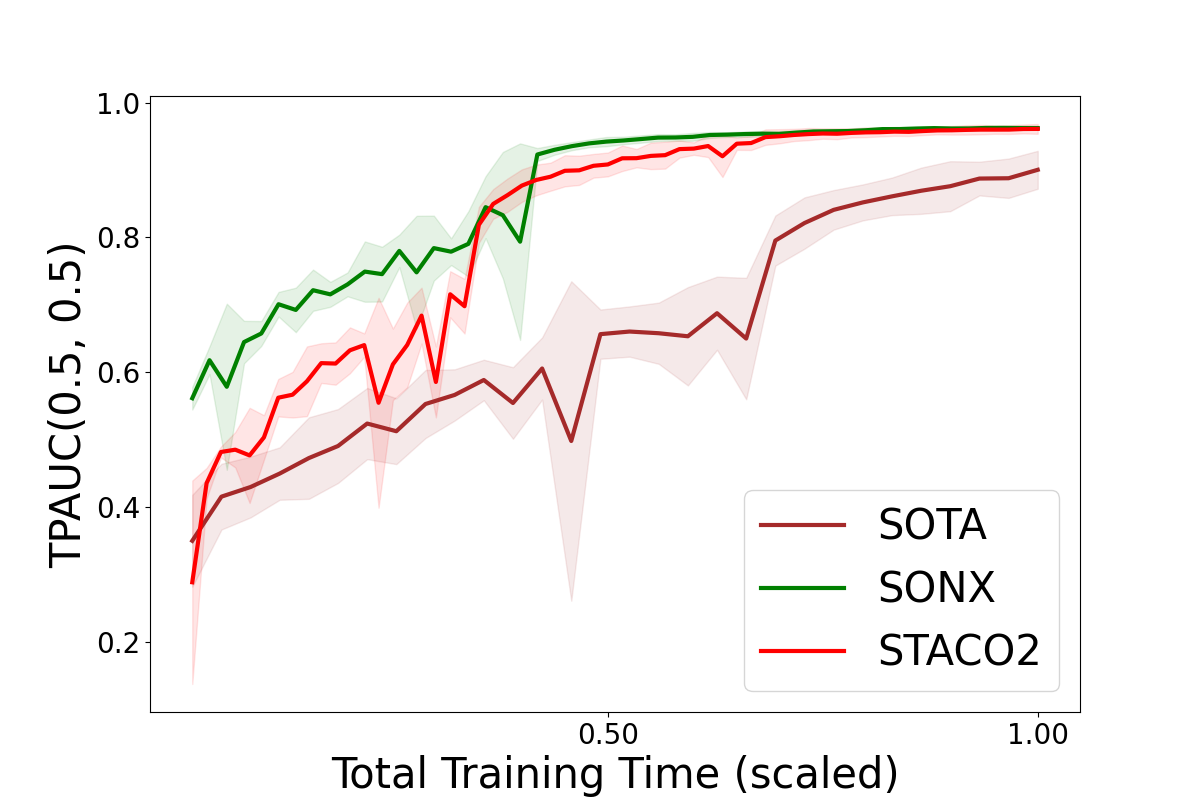}}
        \subfloat[nodulemnist3d]
        {\includegraphics[width=0.245\textwidth]{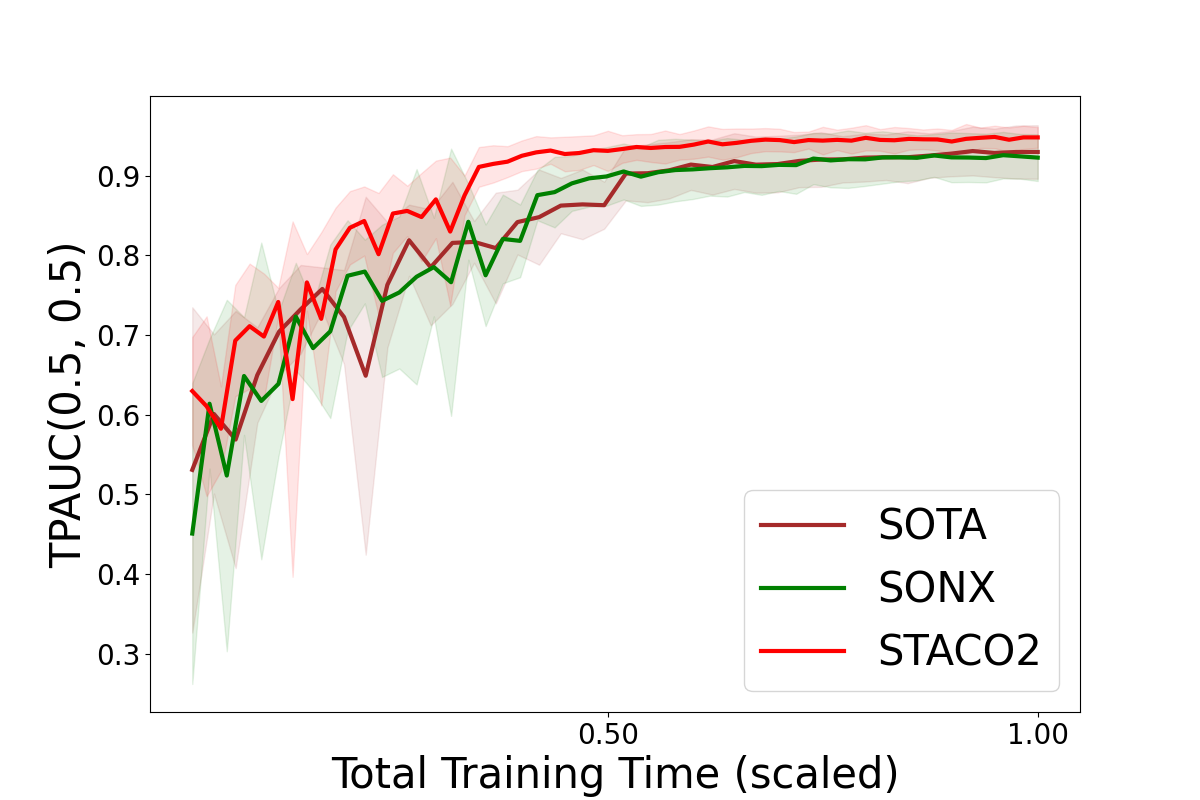}}
        \subfloat[adrenalmnist3d]
        {\includegraphics[width=0.245\textwidth]{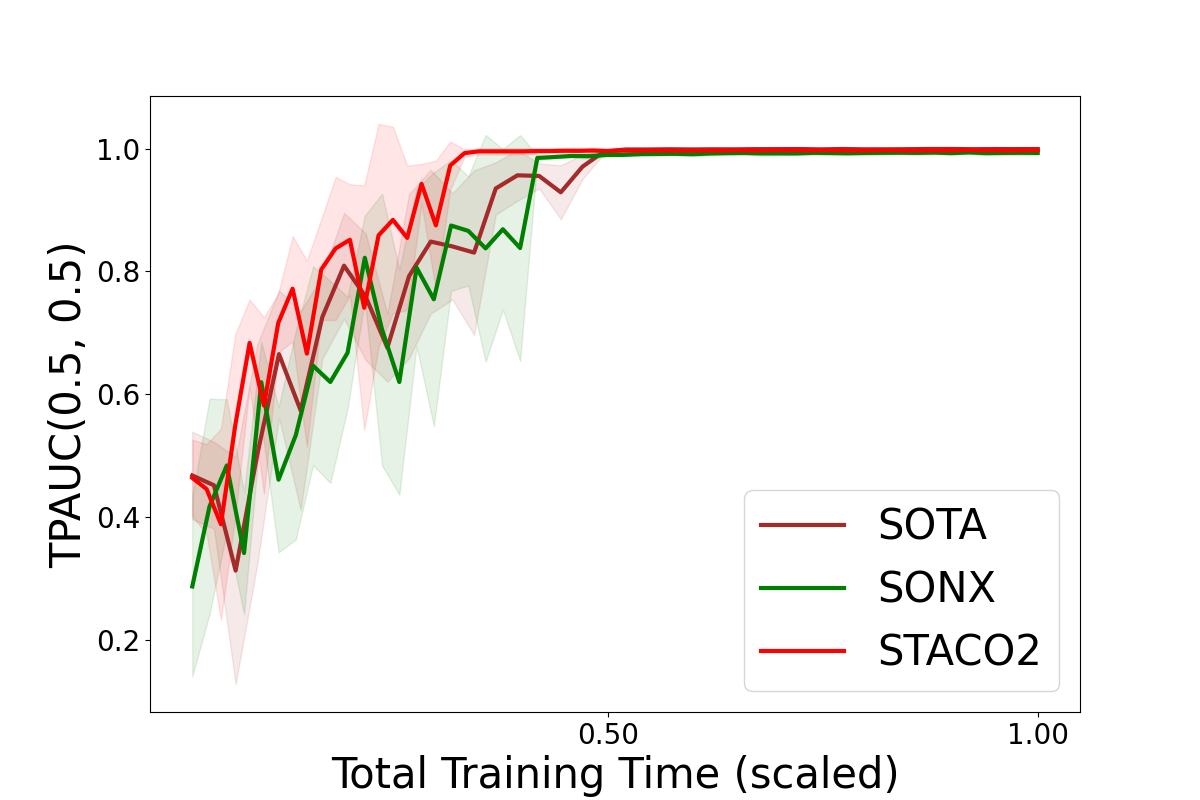}}\\
        \subfloat[molmuv(t1)]
        {\includegraphics[width=0.245\textwidth]{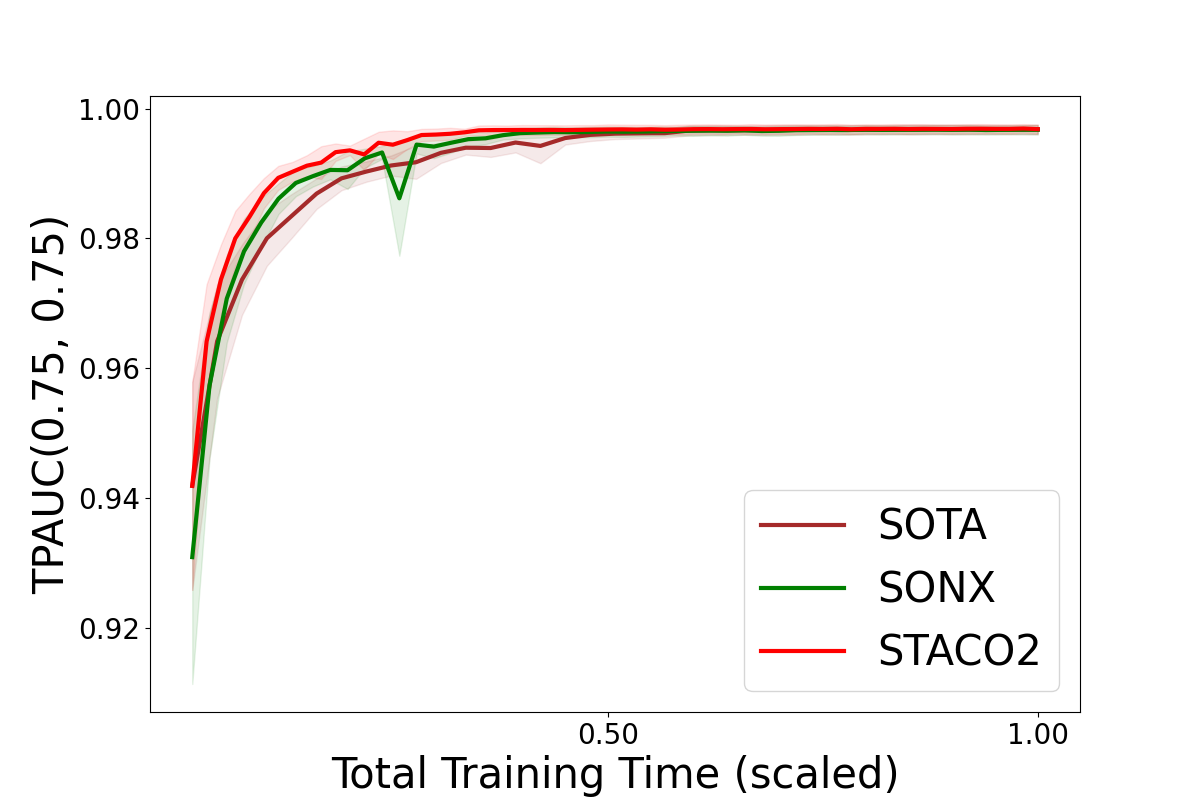}}
        \subfloat[moltox21(t0)]
        {\includegraphics[width=0.245\textwidth]{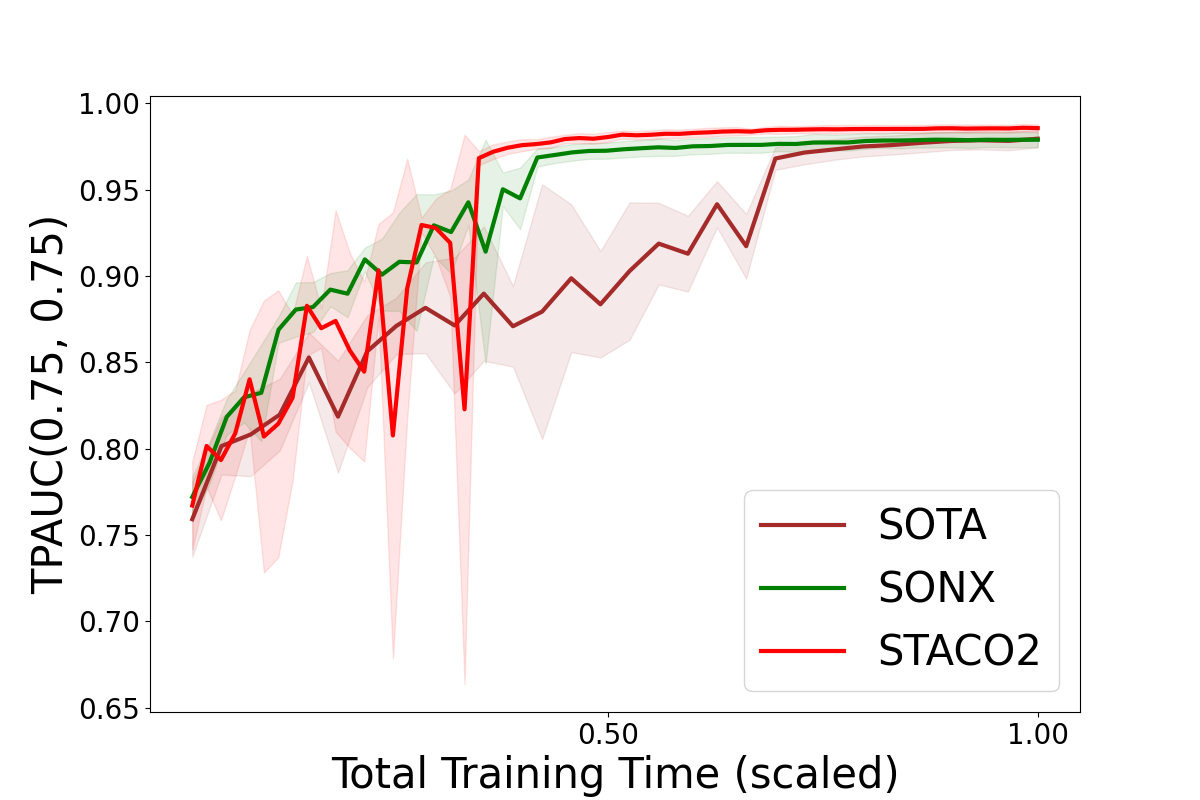}}
        \subfloat[nodulemnist3d]
        {\includegraphics[width=0.245\textwidth]{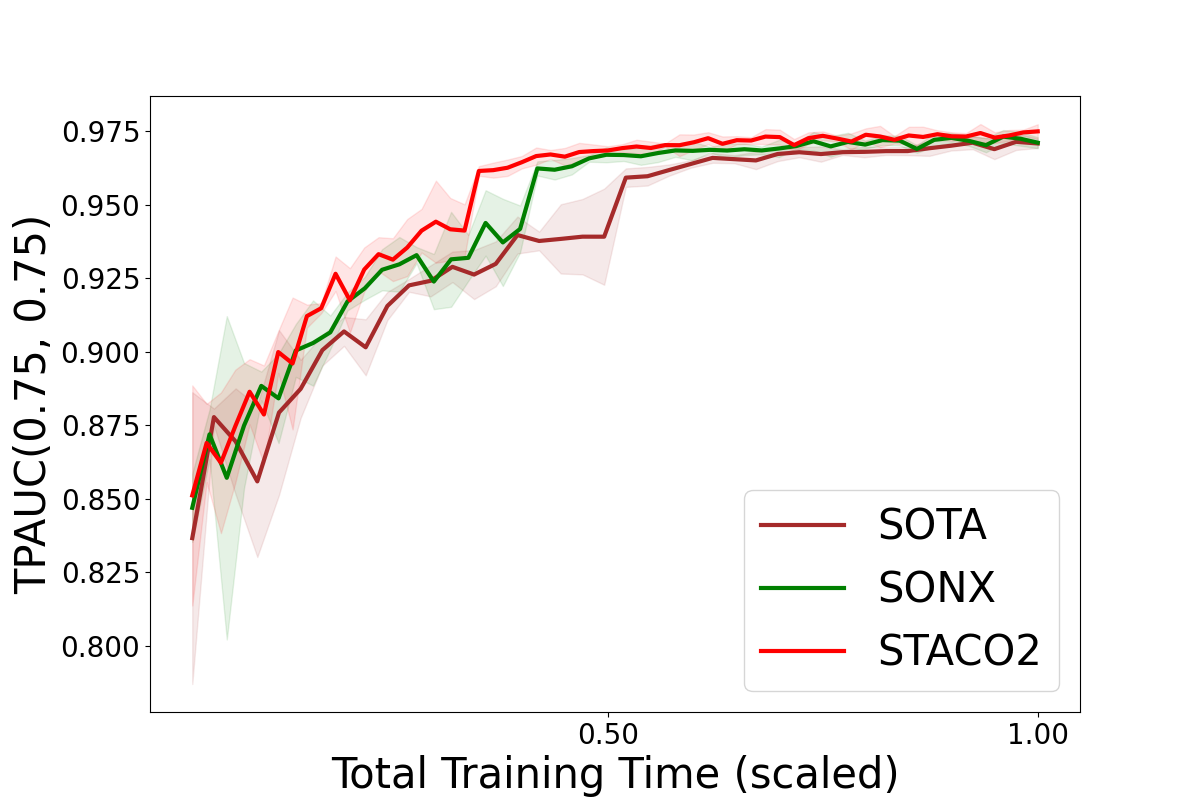}}
        \subfloat[adrenalmnist3d]
        {\includegraphics[width=0.245\textwidth]{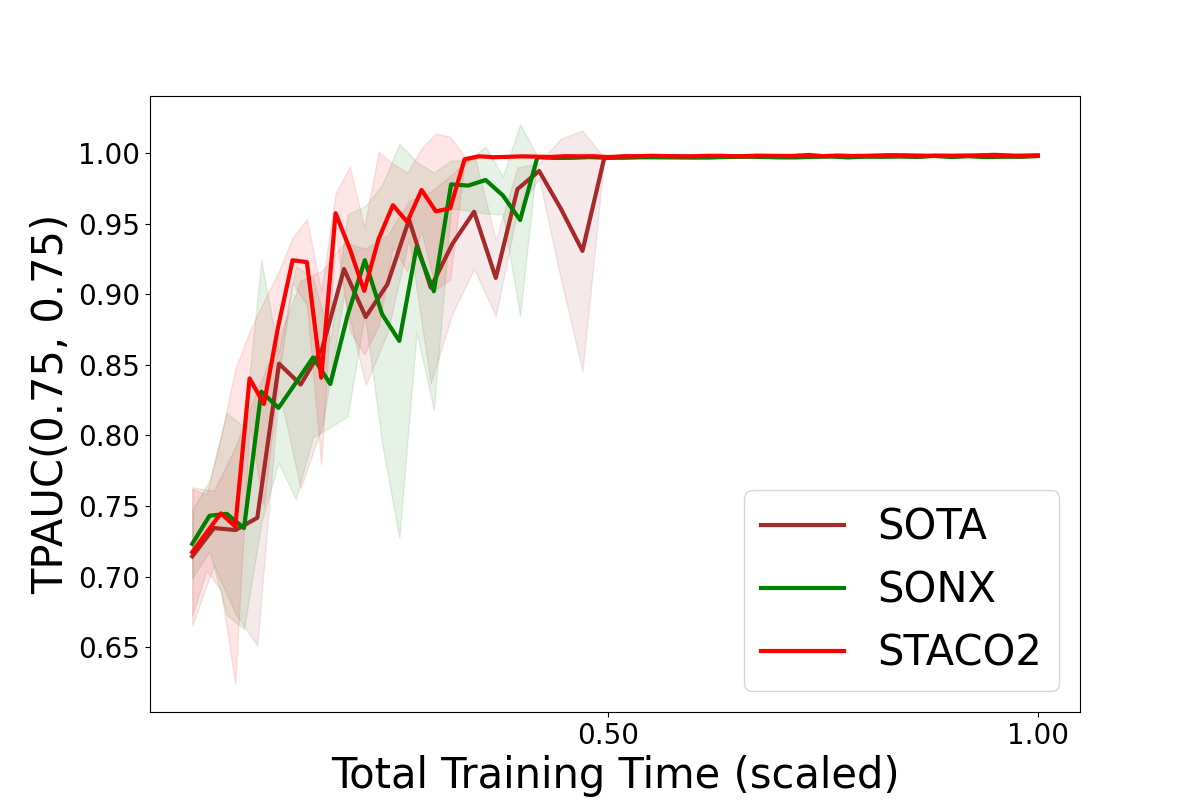}}
        \caption{Training TPAUC Curves of STACO2, SOTA, and SONX on four different datasets. The first row shows the TPAUC (0.5, 0.5) results, and the second row shows the TPAUC (0.75, 0.75) results.}
        \label{fig:nonlinear_tpauc}
    \end{figure}

\section{More Experiment Results}
\label{sec:add}

\subsection{Additional plots for training loss curves}
    Figure \ref{fig:linear_loss} presents the training loss curves of STACO1 and SONX across three different datasets under two evaluation settings, (0.5, 0.5) and (0.75, 0.75). The results indicate that STACO1 consistently achieves lower and more stable loss values compared to SONX across all datasets. Notably, the variance in training loss is lower for STACO1, suggesting improved stability during optimization. {The only exception is Figure 5c, which corresponds to optimizing the loss with $(0.5, 0.5)$ weights. We believe this is a limitation of the primal-dual algorithm, which involves two learning rates. In practical applications, improper tuning of these rates may lead to training instability.} Besides, it is important to note that the loss curve is less stable compared to deep learning experiments, primarily due to the absence of pretraining for the linear model.

\subsection{Additional plots for training TPAUC curves}
     Figure \ref{fig:nonlinear_tpauc} presents the training TPAUC curves for STACO2, SOTA, and SONX across the four datasets. In both TPAUC (0.5, 0.5) and TPAUC (0.75, 0.75) settings, STACO2 demonstrates competitive performance compared to SOTA and SONX, with better stability and faster convergence speed. Specifically, in some cases, STACO2 achieves superior results, particularly in later training stages, indicating its effectiveness in optimizing TPAUC objectives.

\section{Proof}

\subsection{Preliminary Lemmas}
    Throughout the proof, for a space $\X$, we define its diameter with respect to the measure $\psi(\cdot)=\frac{1}{2}\Norm{\cdot}^2$ as $D_{\X} \coloneqq \brk{\max_{\x\in\X} \psi(\x) - \min_{\x\in\X} \psi(\x)}^{1/2}$. Besides, $a\asymp b$ means that there exists $c,C >0$ such that $cb\leq a\leq Cb$. We first present a lemma here that will be useful in our later analysis.

    \begin{lem}[Lemma 4 in \citet{wangnear}]
    \label{lem:proximal_update}
        Suppose that the function $\phi: \X \rightarrow \R$ is on a convex, closed domain $\X$ and $\phi$ is $\mu$-convex with respect to Euclidean distance function $d(\x,\y) \coloneqq \frac{1}{2}\Norm{\x-\y}^2$ for any $\x,\x' \in \X$, i.e., $\phi(\x) \geq \phi(\x')+\inner{\phi'(\x')}{\x-\x'}+\mu d(\x,\x')$, $\forall \x,\x' \in\X$. For $\hat{\x} = \argmin_{\x\in\X} \{\phi(\x) + \eta d(\underline{\x}, \x)\}$, we obtain
        \begin{align}
            \phi(\hat{\x})-\phi(\x) \leq \eta d(\x,\underline{\x})-(\eta+\mu)d(\x,\hat{\x})-\eta d(\hat{\x},\underline{\x}),\quad \forall \x\in\X.
        \end{align}
    \end{lem}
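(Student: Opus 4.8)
The statement to prove is Lemma~\ref{lem:proximal_update}, a standard three-point inequality for proximal-type updates. The plan is to exploit the first-order optimality condition of the proximal subproblem together with the $\mu$-strong convexity of $\phi$.

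\textbf{Approach.} First I would write down the optimality condition for $\hat\x = \argmin_{\x\in\X}\{\phi(\x)+\eta d(\underline\x,\x)\}$. Since $\hat\x$ minimizes $\psi(\x):=\phi(\x)+\eta d(\underline\x,\x)$ over the convex set $\X$, there exists a subgradient $\phi'(\hat\x)\in\partial\phi(\hat\x)$ such that $\inner{\phi'(\hat\x)+\eta(\hat\x-\underline\x)}{\x-\hat\x}\ge 0$ for all $\x\in\X$; equivalently $\inner{\phi'(\hat\x)}{\x-\hat\x}\ge \eta\inner{\underline\x-\hat\x}{\x-\hat\x}$. Next I would use $\mu$-convexity of $\phi$ at the pair $(\hat\x,\x)$, namely $\phi(\x)\ge \phi(\hat\x)+\inner{\phi'(\hat\x)}{\x-\hat\x}+\mu\,d(\x,\hat\x)$, which rearranges to $\phi(\hat\x)-\phi(\x)\le -\inner{\phi'(\hat\x)}{\x-\hat\x}-\mu\,d(\x,\hat\x)$. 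Combining with the optimality bound gives $\phi(\hat\x)-\phi(\x)\le -\eta\inner{\underline\x-\hat\x}{\x-\hat\x}-\mu\,d(\x,\hat\x)$.

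\textbf{Key step: the three-point identity.} The remaining piece is the exact Euclidean identity
\[
\inner{\underline\x-\hat\x}{\x-\hat\x}=d(\x,\hat\x)+d(\hat\x,\underline\x)-d(\x,\underline\x),
\]
which follows from expanding $d(\x,\hat\x)=\tfrac12\Norm{\x-\hat\x}^2$ etc. and cancelling the quadratic terms. Substituting this into the inequality above yields
\[
\phi(\hat\x)-\phi(\x)\le \eta\, d(\x,\underline\x)-\eta\, d(\x,\hat\x)-\eta\, d(\hat\x,\underline\x)-\mu\, d(\x,\hat\x),
\]
which is exactly the claimed bound after grouping the two $d(\x,\hat\x)$ terms into $-(\eta+\mu)d(\x,\hat\x)$.

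\textbf{Main obstacle.} There is no serious obstacle; this is a textbook argument. The only points requiring mild care are: (i) handling subgradients correctly when $\phi$ is nonsmooth (using the variational inequality form of optimality rather than setting a gradient to zero), and (ii) making sure the three-point identity is applied with the correct signs, since it is easy to flip $\underline\x-\hat\x$ versus $\hat\x-\underline\x$. Since the lemma is quoted from \citet{wang2023alexr}, I would simply cite it and, if a self-contained proof is desired, present the two-line derivation above.
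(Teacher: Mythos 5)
Your derivation is correct, and note that the paper itself does not prove this lemma---it is imported verbatim as Lemma~14 from \citet{wang2023alexr}---so the comparison is against the standard argument rather than against a proof in the present manuscript. Your steps are exactly the textbook three-point proximal argument: (i) variational-inequality optimality of $\hat\x$ gives $\inner{\phi'(\hat\x)}{\x-\hat\x}\ge\eta\inner{\underline\x-\hat\x}{\x-\hat\x}$; (ii) $\mu$-convexity at $(\x,\hat\x)$ gives $\phi(\hat\x)-\phi(\x)\le-\inner{\phi'(\hat\x)}{\x-\hat\x}-\mu\,d(\x,\hat\x)$; (iii) the polarization identity $\inner{\underline\x-\hat\x}{\x-\hat\x}=d(\x,\hat\x)+d(\hat\x,\underline\x)-d(\x,\underline\x)$ finishes it. I verified the sign conventions and the identity, and the grouping into $-(\eta+\mu)d(\x,\hat\x)$ is right. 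One small point worth making explicit: the optimality condition only supplies \emph{some} subgradient $\phi'(\hat\x)\in\partial\phi(\hat\x)$ satisfying the variational inequality, and you then invoke $\mu$-convexity with that \emph{same} subgradient; this is harmless here because a $\mu$-strongly-convex function satisfies the lower bound for every subgradient, but it is a sensible thing to state so the two uses of $\phi'(\hat\x)$ are visibly consistent.
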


\subsection{Convex Case}

    In this section, we present the proof of the convex case. We begin by defining virtual sequences for Algorithm $\ref{alg:single}$. {The virtual sequences $\by$ and $\bs$ are calculated with full coordinates, which is easier to bound in analyze. Thus, we also hope to bound the difference between true sequences and virtual sequences.}
    \begin{definition}[virtual sequence]
        In Algorithm $\ref{alg:single}$, a virtual sequence $\{\by_{t}\}$ is defined as follows: 
        \begin{align}
            \bar{\y}_{t+1}^{(i)} &= \argmax_{\y^{(i)}\in\Y_i}\left\{\y^{(i)} \hat{g}_t^{(i)}({\B}_{t}^{(i)})  - f_i^*(\y^{(i)}) - \frac{1}{2\alpha} \left(\y^{(i)} - \y_{t}^{(i)}\right)^2\right\} \quad i \in [n].
        \end{align}        
        Additionally, a virtual sequence $\{\bs_{t}\}$ is defined as follows: 
        \begin{align}
            \bar{\s}_{t+1}^{(i)} &= \argmin_{\s^{(i)}\in\S_i}\left\{\prt{\y_{t+1}^{(i)}\hat{G}_{t,2}^{(i)}(\tilde{\B}_{t}^{(i)})}\cdot{\s^{(i)}}
            + \frac{1}{2\beta}\left(\s^{(i)} - \s_{t}^{(i)}\right)^2\right\} \quad i \in [n].
        \end{align}
    \end{definition}

    Next, we present a useful lemma, {which is helpful in bounding $\y$ related error term}.
    \begin{lem}[Lemma 9 in \citet{wangnear}]
    \label{lem:y_bound_single}
        Suppose $\{\by_{t}\}, \{\hat{\y}_{t}\}$ are virtual sequences for any $t\geq 0$ in Algorithm \ref{alg:single}. Then, for any $\lambda_1 > 0, \y \in \Y$, it follows that:
        \begin{align}
            \E\brk{\frac{1}{2n\alpha}\prt{\Norm{\y-\y_{t}}^2 - \Norm{\y-\by_{t}}^2 - \Norm{\by_{t}-\y_{t}}^2}} &\leq \frac{1}{2\alpha S}\prt{\Norm{\y-\y_{t}}^2 - \Norm{\y-\y_{t+1}}^2} + \frac{\lambda_1}{2\alpha S}\prt{\Norm{\y-\hat{\y}_{t}}^2 - \Norm{\y-\hat{\y}_{t+1}}^2}\nonumber\\
            &\quad- \frac{1}{2\alpha n}(1-\frac{1}{\lambda_1 S})\Norm{\by_{t+1}-\y_{t}}^2.
        \end{align}
    \end{lem}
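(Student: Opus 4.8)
The plan is to reduce the block-coordinate dual step to the standard proximal inequality by passing through the full-coordinate \emph{virtual} iterate, and to pay for the subsampling using the auxiliary shadow sequence $\{\hat\y_t\}$. Let $\by_{t+1}$ be the virtual iterate obtained by running the $\y$-update of Algorithm~\ref{alg:single} from $\y_t$ on \emph{all} coordinates; then $\y_{t+1}^{(i)}=\by_{t+1}^{(i)}$ for $i\in\S_t$ and $\y_{t+1}^{(i)}=\y_t^{(i)}$ for $i\notin\S_t$. Since $\S_t$ is a uniform size-$S$ subset of $[n]$ drawn \emph{after} the per-coordinate mini-batches $\{\B_t^{(i)}\}_{i\in[n]}$, each index lands in $\S_t$ with probability $S/n$ conditionally on the history and those mini-batches, which gives the sampling identity
\begin{align}
\E_{\S_t}\brk{\Norm{\y-\y_{t+1}}^2}\;=\;\tfrac{S}{n}\Norm{\y-\by_{t+1}}^2+\bigl(1-\tfrac{S}{n}\bigr)\Norm{\y-\y_t}^2 ,\nonumber
\end{align}
equivalently $\tfrac1n\bigl(\Norm{\y-\y_t}^2-\Norm{\y-\by_{t+1}}^2\bigr)=\tfrac1S\bigl(\Norm{\y-\y_t}^2-\E_{\S_t}\brk{\Norm{\y-\y_{t+1}}^2}\bigr)$. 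This ``$\tfrac1n \to \tfrac1S$'' conversion is precisely the mechanism that will later yield the $S$-speedup.

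Applying this identity to the first two summands of the lemma's left-hand side and keeping the third, the left-hand side equals, in expectation, $\tfrac1{2\alpha S}\E\brk{\Norm{\y-\y_t}^2-\Norm{\y-\y_{t+1}}^2}-\tfrac1{2\alpha n}\E\brk{\Norm{\by_{t+1}-\y_t}^2}$, which already produces the first term of the right-hand side. It remains to bound the residual $-\tfrac1{2\alpha n}\E\brk{\Norm{\by_{t+1}-\y_t}^2}$ by $-\tfrac1{2\alpha n}\bigl(1-\tfrac1{\lambda_1 S}\bigr)\E\brk{\Norm{\by_{t+1}-\y_t}^2}+\tfrac{\lambda_1}{2\alpha S}\E\brk{\Norm{\y-\hat\y_t}^2-\Norm{\y-\hat\y_{t+1}}^2}$; after cancellation this is the claim
\begin{align}
\E\brk{\Norm{\y-\hat\y_{t+1}}^2-\Norm{\y-\hat\y_t}^2}\;\le\;\tfrac1{\lambda_1^2 n}\,\E\brk{\Norm{\by_{t+1}-\y_t}^2} .\nonumber
\end{align}
This is where the shadow sequence earns its keep: $\hat\y_{t+1}$ is the full-coordinate proximal update of $\hat\y_t$ formed with the \emph{same} per-coordinate linear terms $\hat g_t^{(i)}(\B_t^{(i)})$ that define $\by_{t+1}$, so applying Lemma~\ref{lem:proximal_update} coordinatewise with $\mu=0$ (each $f_i^*+\iota_{\Y_i}$ is convex, not strongly convex) expands $\Norm{\y-\hat\y_t}^2-\Norm{\y-\hat\y_{t+1}}^2$ into a nonnegative $\Norm{\hat\y_{t+1}-\hat\y_t}^2$ plus a linear-model gap, and the firm non-expansiveness of the one-dimensional prox maps---$\by_{t+1}$ and $\hat\y_{t+1}$ share the linear term and differ only in their centers $\y_t$ and $\hat\y_t$---controls the cross terms against $\Norm{\by_{t+1}-\y_t}^2$ via Young's inequality with weight $\lambda_1$; the $\hat\y$-terms then telescope over $t$.

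Two points need care. First, the sampling identity is legitimate only because $\by_{t+1}$ and $\hat\y_{t+1}$ are measurable with respect to the $\sigma$-algebra generated by $(\u_t,\s_t,\y_t,\hat\y_t)$ and $\{\B_t^{(i)}\}_{i\in[n]}$ but \emph{not} $\S_t$: the prox centers for the unsampled coordinates $i\notin\S_t$ must be defined by an independent ``virtual'' draw of $\B_t^{(i)}$, and only after conditioning on all of this do we take $\E_{\S_t}$ and then the outer expectation. Second---and I expect this to be the real work---the constants must be tracked so that the split of $\Norm{\by_{t+1}-\y_t}^2$ and its absorption into the $\hat\y$-telescope land exactly on the coefficients $\tfrac1S$, $\tfrac{\lambda_1}{S}$, and $\tfrac1{2\alpha n}(1-\tfrac1{\lambda_1 S})$ stated in the lemma; no single estimate is delicate, but the bookkeeping is unforgiving. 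The linear coefficient $\hat g_t^{(i)}(\B_t^{(i)})=g_i(\u_t,\s_t^{(i)};\B_t^{(i)})$ now carries the extra primal block $\s_t^{(i)}$, but since it enters the dual step only through this scalar, the argument is structurally identical to Lemma 5 of \citet{wang2023alexr}.
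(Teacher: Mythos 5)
The first thing to say is that the paper itself contains no proof of this statement: it is imported verbatim as Lemma~5 of \citet{wang2023alexr}, and the auxiliary sequence $\hat{\y}_t$ is never even defined in this paper, so your reconstruction can only be measured against the external ALEXR argument it points to. The first half of your proposal is sound and is indeed the standard mechanism: extend the dual update to all coordinates via independent ``virtual'' mini-batch draws so that $\by_{t+1}$ is measurable with respect to a $\sigma$-algebra independent of $\S_t$, then use the marginal inclusion probability $S/n$ to convert $\tfrac{1}{n}\prt{\Norm{\y-\y_t}^2-\Norm{\y-\by_{t+1}}^2}$ into $\tfrac{1}{S}\prt{\Norm{\y-\y_t}^2-\E_{\S_t}\Norm{\y-\y_{t+1}}^2}$ (and you silently correct the index typo $\by_t\to\by_{t+1}$ in the printed statement, which is right).

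The gap is in the second half. You reduce the lemma to the claim $\E\brk{\Norm{\y-\hat{\y}_{t+1}}^2-\Norm{\y-\hat{\y}_t}^2}\le\tfrac{1}{\lambda_1^2 n}\E\Norm{\by_{t+1}-\y_t}^2$ and then assert it from a definition of $\hat{\y}$ you invent (``full-coordinate prox update from $\hat{\y}_t$ with the same linear terms'') together with firm nonexpansiveness and an unspecified Young step; none of this is carried out, and since the left-hand side is affine in $\y$, a bound of this form holding uniformly over $\y$ does not follow from nonexpansiveness in any obvious way. More tellingly, under your own reading---a fixed comparator $\y$ and the exact sampling identity---the lemma is trivially true with $\hat{\y}_{t+1}\equiv\hat{\y}_t$ and with no $(1-\tfrac{1}{\lambda_1 S})$ degradation, so your framing cannot be what the lemma is actually about. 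The reason the correction term exists is how the inequality is used downstream: in the proof of Theorem~\ref{thm:iteration_complexity_cvx} the comparator is ultimately $\tilde{\y}$, chosen as a function of $\bar{\u},\bar{\s}$, hence correlated with every $\S_t$; for such a random $\y$ the identity $\E_{\S_t}\Norm{\y-\y_{t+1}}^2=\tfrac{S}{n}\Norm{\y-\by_{t+1}}^2+(1-\tfrac{S}{n})\Norm{\y-\y_t}^2$ that your argument rests on is no longer valid. The ghost sequence $\hat{\y}$, which evolves independently of the comparator, is introduced precisely to absorb the resulting $\y$-dependent mismatch between the $\tfrac1S\sum_{i\in\S_t}$ and $\tfrac1n\sum_{i=1}^n$ averages through a Young-type estimate, and the $\tfrac{1}{\lambda_1 S}$ haircut on the $\Norm{\by_{t+1}-\y_t}^2$ coefficient is exactly the price of that decoupling; after summation the $\hat{\y}$-telescopes are bounded by $D_{\Y}^2$ uniformly in $\y$, which is what later permits maximizing over $\y$ before taking expectations. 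Your proposal neither defines $\hat{\y}$ so as to achieve this decoupling nor performs the corresponding estimate, so the essential content of the lemma remains unproved.
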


    % \begin{lem}
    %     \begin{align}
    %         \frac{1}{n}\sum_{i=1}^n \E\brk{(\y^{(i)}-\by_{t+1}^{(i)})\prt{g_i(\u_{t+1},\s_{t+1})-\hat{g}_{t}^{(i)}({\B}_{t}^{(i)})}} \leq \frac{C_g^2}{\lambda_4}\E\Norm{\u_{t+1}-\u_{t}}^2 + \frac{S C_g^2}{\lambda_4 n^2}\E\Norm{\bs_{t+1}-\s_{t}}^2 + 4\lambda_4 C_f^2 + \frac{\lambda_2}{2n}\prt{\E\Norm{\y-\tilde{\y}_{t}}^2 - \E\Norm{\y-\tilde{\y}_{t+1}}^2} \nonumber\\
    %         &\quad + \frac{\sigma_0^2}{2B\lambda_2} + \frac{\lambda_3\sigma_0^2}{2B} + \frac{\E\Norm{\by_{t+1}-\y_{t}}^2}{4\lambda_3 n}.
    %     \end{align}
    % \end{lem}

     {We define that $\mathcal{G}_t$ is the $\sigma$-algebra generated by $\{\mathcal{B}_0,\mathcal{S}_0, \cdots, \mathcal{B}_{t-1},\mathcal{S}_{t-1},\mathcal{B}_t\}$ and $\mathcal{F}_t$ is the $\sigma$-algebra generated by $\{\mathcal{B}_0,\mathcal{S}_0, \cdots, \mathcal{B}_{t-1},\mathcal{S}_{t-1},\mathcal{B}_t,\mathcal{S}_t\}$. Note that $\mathcal{G}_t \subset \mathcal{F}_t$ and $\y_{t+1}$ is $\mathcal{F}_t$-measurable. Now we proceed to show the descent lemma.}

    \begin{lem}[Descent Lemma]
    \label{lem:descent_inner_single}
        Under Assumption \ref{asm:lip} and \ref{asm:var}, suppose $\{\by_{t}\},\{\ty_{t}\},\{\hy_{t}\},\{\bs_{t}\}$ are virtual sequences for Algorithm \ref{alg:single}. Then, for any $t \in [0,T-1]$, taking expectation over $\F_t$, it holds that:
        \begin{align}
            &\E\brk{L(\u_{t+1},\s_{t+1},\y)-L(\u,\s,\by_{t+1})} \nonumber\\
            &\leq \frac{1}{2\eta}\prt{\Norm{\u-\u_{t}}^2-\E\Norm{\u-\u_{t+1}}^2}
            + \frac{1}{2\beta S}\left(\Norm{\s-\s_{t}}^2
            -\E\Norm{\s-\s_{t+1}}^2\right) + \frac{1}{\alpha S}\prt{\Norm{\y-\y_{t}}^2-\E\Norm{\y-\y_{t+1}}^2} \nonumber\\
            &\quad+ \frac{1}{\alpha S}\prt{\Norm{\y-\hy_{t}}^2-\E\Norm{\y-\hy_{t+1}}^2} + \frac{1}{\alpha S}\left(\Norm{\y-\ty_{t}}^2 - \E\Norm{\y-\ty_{t+1}}^2\right) \nonumber\\
            &\quad+ 64\Omega C_f^2 C_g^2 + \frac{S\alpha\sigma_0^2}{2Bn} + \frac{\alpha\sigma_0^2}{2B} + \frac{\eta C_f^2 \sigma_1^2}{B} + \frac{\eta \delta^2}{S} + \frac{\beta C_f^2\sigma_2^2}{B},
        \end{align}
        where $\Omega=\max\{\eta,\beta\}$.
    \end{lem}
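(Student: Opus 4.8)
The plan is to control the one-step quantity $L(\u_{t+1},\s_{t+1},\y)-L(\u,\s,\by_{t+1})$ by writing $L=\Psi-\Phi^*$ with $\Psi(\u,\s,\y)=\frac1n\sum_i\y^{(i)}g_i(\u,\s^{(i)})$ and $\Phi^*(\y)=\frac1n\sum_i f_i^*(\y^{(i)})$, and splitting it into a \emph{dual part} $\bigl[\Psi(\u,\s,\y)-\Phi^*(\y)\bigr]-\bigl[\Psi(\u,\s,\by_{t+1})-\Phi^*(\by_{t+1})\bigr]$ and a \emph{primal part} $\Psi(\u_{t+1},\s_{t+1},\y)-\Psi(\u,\s,\y)$. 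Since each $\y^{(i)}\ge 0$ and, in the convex regime, each $g_i$ is convex in $(\u,\s^{(i)})$, the primal part splits further into a $\u$-direction and an $\s^{(i)}$-direction, both driven by the linearized proximal steps of Algorithm~\ref{alg:single}.

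For the dual part I would apply Lemma~\ref{lem:proximal_update} to the $\argmax$ defining $\by_{t+1}^{(i)}$ with $\mu=0$ (using only convexity of $f_i^*$), bounding $[\y^{(i)}\hat g_t^{(i)}-f_i^*(\y^{(i)})]-[\by_{t+1}^{(i)}\hat g_t^{(i)}-f_i^*(\by_{t+1}^{(i)})]$ by $\frac{1}{2\alpha}\bigl((\y^{(i)}-\y_t^{(i)})^2-(\y^{(i)}-\by_{t+1}^{(i)})^2-(\by_{t+1}^{(i)}-\y_t^{(i)})^2\bigr)$. Averaging over $i$ and feeding the right-hand side into Lemma~\ref{lem:y_bound_single} converts it, in conditional expectation $\E_t[\cdot]$, into the three telescoping quantities in $\y_{t+1},\hy_{t+1},\ty_{t+1}$ plus a negative multiple of $\Norm{\by_{t+1}-\y_t}^2$. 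It then remains to replace the stochastic value $\hat g_t^{(i)}$ first by $g_i(\u_t,\s_t^{(i)})$ and then by $g_i(\u,\s^{(i)})$. The first replacement produces $\frac1n\sum_i(\y^{(i)}-\by_{t+1}^{(i)})\bigl(g_i(\u_t,\s_t^{(i)})-\hat g_t^{(i)}\bigr)$, which I split as $(\y^{(i)}-\y_t^{(i)})(\cdot)+(\y_t^{(i)}-\by_{t+1}^{(i)})(\cdot)$: the first piece is $\F_t$-conditionally mean zero because $\y_t$ is $\F_t$-measurable, and the second is absorbed by Young's inequality against the negative $\Norm{\by_{t+1}-\y_t}^2$ slack, leaving a residual of order $\alpha\sigma_0^2/B$ (a size-$B$ mini-batch divides the variance of $\hat g_t^{(i)}$ by $B$); tracking whether this coupling enters through the dense dual average or the sparse update pattern accounts for the two terms $\frac{S\alpha\sigma_0^2}{2Bn}+\frac{\alpha\sigma_0^2}{2B}$. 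The second replacement produces $\frac1n\sum_i(\y^{(i)}-\by_{t+1}^{(i)})\bigl(g_i(\u,\s^{(i)})-g_i(\u_t,\s_t^{(i)})\bigr)$, which is precisely the coupling term that will cancel against a matching contribution from the primal side.

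On the primal side I would read the updates of Algorithm~\ref{alg:single} as linearized proximal steps, $\u_{t+1}=\argmin_{\u}\{\inner{\hat g_{t,1}}{\u}+\frac{1}{2\eta}\Norm{\u-\u_t}^2\}$ with $\hat g_{t,1}=\frac1S\sum_{i\in\S_t}\y_{t+1}^{(i)}\hat G_{t,1}^{(i)}(\tilde\B_t^{(i)})$, and analogously for $\s^{(i)}$ on the sampled block. Lemma~\ref{lem:proximal_update} then gives, for $\u$, the term $\frac{1}{2\eta}\bigl(\Norm{\u-\u_t}^2-\Norm{\u-\u_{t+1}}^2\bigr)$ plus $\frac{\eta}{2}\E_t\Norm{\hat g_{t,1}}^2$, and for $\s$, combining the per-coordinate bound with a uniform-sampling identity in the spirit of Lemma~\ref{lem:y_bound_single}, the $\frac{1}{2\beta S}$-telescoping plus a $\frac{\beta}{2S}$-weighted second moment. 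Passing to $\E_t$ and using that $\tilde\B_t$ is independent of $\B_t$ (so $\hat G_{t,1}^{(i)}$ is conditionally independent of $\by_{t+1}^{(i)}$), together with uniform sampling of $\S_t$ and the identity $\y_{t+1}^{(i)}=\by_{t+1}^{(i)}$ on the sampled block, identifies the expected descent direction with $\frac1n\sum_i\by_{t+1}^{(i)}G_1^{(i)}(\u_t,\s_t^{(i)})$ and its $\s$-analogue; convexity of $g_i$ turns $\inner{\frac1n\sum_i\by_{t+1}^{(i)}G_1^{(i)}}{\u_t-\u}$ (plus the $\s$-term) into $\frac1n\sum_i\by_{t+1}^{(i)}\bigl(g_i(\u_t,\s_t^{(i)})-g_i(\u,\s^{(i)})\bigr)$, modulo the evaluation mismatches $\u_t$ vs.\ $\u_{t+1}$ and $\s_t^{(i)}$ vs.\ $\s_{t+1}^{(i)}$, each of norm $\order{\eta C_f C_g}$ or $\order{\beta C_f C_g}$ by $\abs{\y^{(i)}}\le C_f$ and the $C_g$-Lipschitzness of $g_i$; these feed the single term $64\Omega C_f^2 C_g^2$. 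Adding this to the leftover dual coupling term collapses all $\by_{t+1}$-weighted pieces and leaves only $\frac1n\sum_i\y^{(i)}\bigl(g_i(\u_t,\s_t^{(i)})-g_i(\u,\s^{(i)})\bigr)$, which recombines with $L(\u,\s,\by_{t+1})$ to give exactly the claimed inequality; the second moments $\E_t\Norm{\hat g_{t,1}}^2$ and $\E_t\Norm{\hat g_{t,2}}^2$ each split into a $\le C_f^2 C_g^2$ mean part (again feeding $64\Omega C_f^2 C_g^2$) and variance parts bounded via Assumption~\ref{asm:var}, giving $\frac{\eta C_f^2\sigma_1^2}{B}$, $\frac{\beta C_f^2\sigma_2^2}{B}$, and, from the block sampling of $\S_t$ in the $\u$-step, $\frac{\eta\delta^2}{S}$.

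The main obstacle is the bookkeeping forced by the virtual dual sequence $\by_{t+1}$: it depends on the inner-value batch $\B_t$, yet it simultaneously serves as the comparison dual in the gap, as the weight multiplying the primal subgradients, and as the point at which the $g_i$-estimation bias is measured, and all three roles must be reconciled before any intermediate inequality becomes sign-definite. This is exactly what forces the use of two independent negative mini-batches $\B_t,\tilde\B_t$ and the auxiliary virtual sequences $\hy_t,\ty_t$, and why the block sparsity of both $\y$ and (unlike the $\s$-free setting of \citet{wang2023alexr}) $\s$ cannot be treated by a naive unbiasedness argument but must be routed through a sampling identity of the type in Lemma~\ref{lem:y_bound_single}; the delicate quantitative point is to tune the Young's-inequality constants so that every stray term is absorbed into the single negative slack $\Norm{\by_{t+1}-\y_t}^2$, with no uncontrolled residual left over.
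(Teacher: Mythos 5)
Your proposal is correct and, despite a different initial slicing of $L(\u_{t+1},\s_{t+1},\y)-L(\u,\s,\by_{t+1})$, it follows essentially the same route as the paper's proof in Appendix~\ref{app:descent_inner_single}: you split around the comparison point $(\u,\s)$ with $\y$ as dual weight on the primal side, while the paper peels off $(\y^{(i)}-\by_{t+1}^{(i)})g_i(\u_{t+1},\s_{t+1})$ directly and keeps $\by_{t+1}^{(i)}$ as the weight on the telescoping terms; after the $\y^{(i)}$-weighted coupling in your ``primal part'' cancels against the matching piece in your ``dual part'' the leftover is the $\by_{t+1}^{(i)}$-weighted term $\frac1n\sum_i\by_{t+1}^{(i)}\bigl(g_i(\u_t,\s_t^{(i)})-g_i(\u,\s^{(i)})\bigr)$, which is precisely the piece the paper linearizes by convexity (your closing sentence says the leftover is $\y^{(i)}$-weighted; it should read $\by_{t+1}^{(i)}$-weighted, but this is a one-line algebra slip and the bookkeeping does close). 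Both proofs invoke Lemma~\ref{lem:proximal_update} three times (the $\y^{(i)}$-$\argmax$, the $\u$-step, the per-coordinate $\s^{(i)}$-step), use Lemma~\ref{lem:y_bound_single} to pass from the dense virtual-dual telescope to the sparse actual-dual one plus the $\hy$ auxiliary, use a uniform-sampling identity of the same flavor for the $\s$-block (the paper's $\D_1$ rewrite $\E[(\s^{(i)}-\bs_{t+1}^{(i)})^2]=\tfrac{S}{n}(\s^{(i)}-\bs_{t+1}^{(i)})^2+\tfrac{n-S}{n}(\s^{(i)}-\s_t^{(i)})^2$), route the $\hat g_t^{(i)}$ estimation error through Young's inequality against the negative $\Norm{\by_{t+1}-\y_t}^2$ slack together with the $\ty$ telescope (giving the two $\sigma_0^2$ terms with weights $\tfrac{S\alpha}{2Bn}$ and $\tfrac{\alpha}{2B}$ from the two Young parameters), exploit independence of $\tilde\B_t$ from $\B_t$ to unbias the primal subgradients and obtain the $\sigma_1^2,\sigma_2^2,\delta^2$ terms, and absorb the $\u_t\mapsto\u_{t+1}$, $\s_t\mapsto\bs_{t+1}$ evaluation mismatches into the single $64\Omega C_f^2C_g^2$ term via Lipschitzness. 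No genuine gap; the difference is cosmetic reorganization of the same cancellations.
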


    \begin{proof}
        See Appendix \ref{app:descent_inner_single}.
    \end{proof}

\subsubsection{Proof of Lemma \ref{lem:descent_inner_single}}
\label{app:descent_inner_single}

    \begin{proof}
    By Definition, we have
        \begin{align}
            &L(\u_{t+1},\s_{t+1},\y) - L(\u,\s,\by_{t+1}) \nonumber\\
            &= \frac{1}{n}\sum_{i=1}^n \left(\y^{(i)}g_i(\u_{t+1},\s_{t+1}) - f_i^*(\y^{(i)})\right) - \frac{1}{n}\sum_{i=1}^n \left(\by_{t+1}^{(i)}g_i(\u,\s) - f_i^*(\by_{t+1}^{(i)})\right) \nonumber\\
            &= \frac{1}{n}\sum_{i=1}^n  (\y^{(i)}-\by_{t+1}^{(i)})g_i(\u_{t+1},\s_{t+1}) - \frac{1}{n}\sum_{i=1}^{n}f_i^*(\y^{(i)}) + \frac{1}{n}\sum_{i=1}^{n}f_i^*(\by_{t+1}^{(i)}) \nonumber\\
            &\quad + \frac{1}{n}\sum_{i=1}^n\by_{t+1}^{(i)}\prt{g_i(\u_{t+1},\s_{t+1}) - g_i(\u_{t},\s_t)} + \frac{1}{n}\sum_{i=1}^n\by_{t+1}^{(i)}\prt{g_i(\u_{t},\s_{t}) - g_i(\u,\s)}.
        \end{align}
        {Using the convexity of $g_i$, we obtain the following upper bound:}
        \begin{align}
            &L(\u_{t+1},\s_{t+1},\y) - L(\u,\s,\by_{t+1}) \nonumber\\
            &\leq \underbrace{\frac{1}{n}\sum_{i=1}^n  (\y^{(i)}-\by_{t+1}^{(i)})g_i(\u_{t+1},\s_{t+1})}_{\textrm{I}} - \frac{1}{n}\sum_{i=1}^{n}f_i^*(\y^{(i)}) + \frac{1}{n}\sum_{i=1}^{n}f_i^*(\by_{t+1}^{(i)}) \nonumber\\
            &\quad + \frac{1}{n}\sum_{i=1}^n\by_{t+1}^{(i)}\prt{g_i(\u_{t+1},\s_{t+1}) - g_i(\u_{t},\s_{t})} \underbrace{+ \frac{1}{n}\sum_{i=1}^n\by_{t+1}^{(i)}\prt{\inner{G_{t,1}^{(i)}}{\u_{t}-\u} + {G_{t,2}^{(i)}}(\s_{t}-\s)}}_{\textrm{II}}.
        \end{align}
        {We now analyze terms I and II separately. For term I, we decompose as follows:}
        \begin{align}
            \textrm{I} &= \frac{1}{n}\sum_{i=1}^n  (\y^{(i)}-\by_{t+1}^{(i)})g_i(\u_{t+1},\s_{t+1}) \nonumber\\
            &= \frac{1}{n}\sum_{i=1}^n (\y^{(i)}-\by_{t+1}^{(i)})\hat{g}_{t}^{(i)}({\B}_{t}^{(i)}) + \frac{1}{n}\sum_{i=1}^n (\y^{(i)}-\by_{t+1}^{(i)})\prt{g_i(\u_{t+1},\s_{t+1})-\hat{g}_{t}^{(i)}({\B}_{t}^{(i)})} \nonumber
        \end{align}
        {For term II, by decomposition we obtain:}
        \begin{align}
            \textrm{II} &= \inner{\frac{1}{S}\sum_{i\in\S_{t}}\y_{t+1}^{(i)}\hat{G}_{t,1}^{(i)}(\tilde{\B}_{t}^{(i)}) - \frac{1}{n}\sum_{i=1}^n\by_{t+1}^{(i)}G_{t,1}^{(i)}}{\u-\u_{t+1}} - \inner{\frac{1}{S}\sum_{i\in\S_{t}}\y_{t+1}^{(i)}\hat{G}_{t,1}^{(i)}(\tilde{\B}_{t}^{(i)})}{\u-\u_{t+1}} \nonumber\\
            &\quad + \frac{1}{n}\sum_{i=1}^n\inner{\by_{t+1}^{(i)}G_{t,1}^{(i)}}{\u_{t}-\u_{t+1}} \nonumber\\
            &\quad + \frac{1}{n}\sum_{i=1}^n\inner{\by_{t+1}^{(i)}\prt{\hat{G}_{t,2}^{(i)}(\tilde{\B}_{t}^{(i)})-G_{t,2}^{(i)}}}{\s-\s_{t+1}} - \frac{1}{n}\sum_{i=1}^n\inner{\by_{t+1}^{(i)}\hat{G}_{t,2}^{(i)}(\tilde{\B}_{t}^{(i)})}{\s-\s_{t+1}} \nonumber\\
            &\quad + \frac{1}{n}\sum_{i=1}^n\inner{\by_{t+1}^{(i)}G_{t,2}^{(i)}}{\s_{t}-\s_{t+1}}.
        \end{align}
        {Combining all terms above, we arrive at the key inequality:}
        \begin{align}
        \label{eq:eq11}
            &L(\u_{t+1},\s_{t+1},\y) - L(\u,\s,\by_{t+1}) \nonumber\\
            &\leq \underbrace{\frac{1}{n}\sum_{i=1}^n (\y^{(i)}-\by_{t+1}^{(i)})\hat{g}_{t}^{(i)}({\B}_{t}^{(i)}) - \frac{1}{n}\sum_{i=1}^{n}f_i^*(\y^{(i)}) + \frac{1}{n}\sum_{i=1}^{n}f_i^*(\by_{t+1}^{(i)})}_{\C_1} \nonumber\\
            &\quad + \frac{1}{n}\sum_{i=1}^n (\y^{(i)}-\by_{t+1}^{(i)})\prt{g_i(\u_{t+1},\s_{t+1})-\hat{g}_{t}^{(i)}({\B}_{t}^{(i)})} + \frac{1}{n}\sum_{i=1}^n\by_{t+1}^{(i)}\prt{g_i(\u_{t+1},\s_{t+1}) - g_i(\u_{t},\s_{t,k})} \nonumber\\
            &\quad + \inner{\frac{1}{S}\sum_{i\in\S_{t}}\y_{t+1}^{(i)}\hat{G}_{t,1}^{(i)}(\tilde{\B}_{t}^{(i)}) - \frac{1}{n}\sum_{i=1}^n\by_{t+1}^{(i)}G_{t,1}^{(i)}}{\u-\u_{t+1}} \underbrace{- \inner{\frac{1}{S}\sum_{i\in\S_{t}}\y_{t+1}^{(i)}\hat{G}_{t,1}^{(i)}(\tilde{\B}_{t}^{(i)})}{\u-\u_{t+1}}}_{\C_2} \nonumber\\
            &\quad + \frac{1}{n}\sum_{i=1}^n\inner{\by_{t+1}^{(i)}G_{t,1}^{(i)}}{\u_{t}-\u_{t+1}} \nonumber\\
            &\quad + \frac{1}{n}\sum_{i=1}^n\inner{\by_{t+1}^{(i)}\prt{\hat{G}_{t,2}^{(i)}(\tilde{\B}_{t}^{(i)})-G_{t,2}^{(i)}}}{\s-\s_{t+1}} \underbrace{- \frac{1}{n}\sum_{i=1}^n\inner{\by_{t+1}^{(i)}\hat{G}_{t,2}^{(i)}(\tilde{\B}_{t}^{(i)})}{\s-\s_{t+1}}}_{\C_3} \nonumber\\
            &\quad + \frac{1}{n}\sum_{i=1}^n\inner{\by_{t+1}^{(i)}G_{t,2}^{(i)}}{\s_{t}-\s_{t+1}}.
        \end{align}
        {We now analyze the upper bounds of $\mathcal{C}_1$, $\mathcal{C}_2$, and $\mathcal{C}_3$ in turn.} For $\C_1$, invoking Lemma \ref{lem:proximal_update} and Lemma \ref{lem:y_bound_single}, it holds that
        \begin{align}
            \C_1 &\mathop{\leq}_{\text{Lemma } \ref{lem:proximal_update}} \brk{\frac{1}{2n\alpha}\prt{\Norm{\y-\y_{t}}^2 - \Norm{\y-\by_{t+1}}^2 - \Norm{\by_{t+1}-\y_{t}}^2}} \nonumber\\
            &\mathop{\leq}_{\text{Lemma } \ref{lem:y_bound_single}}  \frac{1}{2\alpha S}\prt{\Norm{\y-\y_{t}}^2 - \Norm{\y-\y_{t+1}}^2} + \frac{\lambda_1}{2\alpha S}\prt{\Norm{\y-\hat{\y}_{t}}^2 - \Norm{\y-\hat{\y}_{t+1}}^2} \nonumber\\
            &\qquad - \frac{1}{2\alpha_t n}(1-\frac{1}{\lambda_1 S})\Norm{\by_{t+1}-\y_{t}}^2.
        \end{align}
        For $\C_2$, noticing here we have
        \begin{align}
            \phi(\u) &\coloneqq \inner{\frac{1}{S}\sum_{i\in\S_{t}}\y_{t+1}^{(i)}\hat{G}_{t,1}^{(i)}(\tilde{\B}_{t}^{(i)})}{\u},
        \end{align}
        and $\phi(\cdot)$ is convex in Lemma \ref{lem:proximal_update}, it follows that
        \begin{align}
            \C_2 \mathop{\leq}_{\text{Lemma }\ref{lem:proximal_update}} \frac{1}{2\eta}\prt{\Norm{\u-\u_{t}}^2-\Norm{\u-\u_{t+1}}^2} - \frac{1}{2\eta}\Norm{\u_{t+1}-\u_{t}}^2.
        \end{align}
        For $\C_3$, in a similar manner, we can obtain
        \begin{align}
            \C_3 \mathop{\leq}_{\text{Lemma }\ref{lem:proximal_update}} \frac{1}{2n\beta}\prt{\Norm{\s-\s_{t}}^2-\Norm{\s-\bs_{t+1}}^2} - \frac{1}{2n\beta}\Norm{\bs_{t+1}-\s_{t}}^2.
        \end{align}
        Substituting the above inequality into (\ref{eq:eq11}), and taking expectation over $\mathcal{F}_{t}$, we can get
        \begin{align}
        \label{eq:eq22}
            &\E\brk{L(\u_{t+1},\s_{t+1},\y) - L(\u,\s,\by_{t+1})} \nonumber\\
            &\leq \frac{1}{2\alpha S}\prt{\Norm{\y-\y_{t}}^2 - \E\Norm{\y-\y_{t+1}}^2} + \frac{\lambda_1}{2\alpha S}\prt{\Norm{\y-\hat{\y}_{t}}^2 - \E\Norm{\y-\hat{\y}_{t+1}}^2} - \frac{1}{2\alpha n}(1-\frac{1}{\lambda_1 S})\E\Norm{\by_{t+1}-\y_{t}}^2 \nonumber\\
            &\quad + \frac{1}{2\eta}\prt{\Norm{\u-\u_{t}}^2-\E\Norm{\u-\u_{t+1}}^2} - \frac{1}{2\eta}\E\Norm{\u_{t+1}-\u_{t}}^2 + \underbrace{\frac{1}{2n\beta}\prt{\Norm{\s-\s_{t}}^2-\E\Norm{\s-\bs_{t+1}}^2} - \frac{1}{2n\beta}\E\Norm{\bs_{t+1}-\s_{t}}^2}_{\D_1} \nonumber\\
            &\quad + \underbrace{\frac{1}{n}\sum_{i=1}^n \E\brk{(\y^{(i)}-\by_{t+1}^{(i)})\prt{g_i(\u_{t+1},\s_{t+1})-\hat{g}_{t}^{(i)}({\B}_{t}^{(i)})}}}_{\D_2} + \underbrace{\frac{1}{n}\sum_{i=1}^n\E\brk{\by_{t+1}^{(i)}\prt{g_i(\u_{t+1},\s_{t+1}) - g_i(\u_{t},\s_{t})}}}_{\D_3} \nonumber\\
            &\quad + \underbrace{\E{\inner{\frac{1}{n}\sum_{i=1}^n\by_{t+1}^{(i)}{G_{t,1}^{(i)}}}{\u_{t}-\u_{t+1}}} + \E{\inner{\frac{1}{n}\sum_{i=1}^n\by_{t+1}^{(i)}{G_{t,2}^{(i)}}}{\s_{t}-\s_{t+1}}}}_{\D_4} \nonumber\\
            &\quad + \underbrace{\E\inner{\frac{1}{S}\sum_{i\in\S_{t}}\y_{t+1}^{(i)}\hat{G}_{t,1}^{(i)}(\tilde{\B}_{t}^{(i)}) - \frac{1}{n}\sum_{i=1}^n\by_{t+1}^{(i)}G_{t,1}^{(i)}}{\u-\u_{t+1}}}_{\D_5} + \underbrace{\frac{1}{n}\sum_{i=1}^n\inner{\by_{t+1}^{(i)}\prt{\hat{G}_{t,2}^{(i)}(\tilde{\B}_{t}^{(i)})-G_{t,2}^{(i)}}}{\s-\s_{t+1}}}_{\D_6}.
        \end{align}
        For $\mathcal{\D}_1$, noticing that $\E\brk{(\s^{(i)}-\bs_{t+1}^{(i)})^2}=\frac{S}{n}(\s^{(i)}-\bs_{t+1}^{(i)})^2+\frac{n-S}{n}(\s^{(i)}-\s_{t}^{(i)})^2$ for any $i \in [n]$, then we obtain
        \begin{align}
            \mathcal{\D}_1 \leq \frac{1}{2S\beta}\prt{\Norm{\s-\s_{t}}^2-\E\Norm{\s-\s_{t+1}}^2} - \frac{1}{2n\beta}\E\Norm{\bs_{t+1}-\s_{t}}^2.
        \end{align}
        {Inspired by Lemma 10 in \citet{wangnear}, we bound $\mathcal{\D}_2$ as following.
        \begin{align}
        \label{eq:D2_1}
            \D_2 &= \frac{1}{n}\sum_{i=1}^n \E\brk{(\y^{(i)}-\by_{t+1}^{(i)})\prt{g_i(\u_{t+1},\s_{t+1})-\hat{g}_{t}^{(i)}({\B}_{t}^{(i)})}} \nonumber\\
            &= \frac{1}{n}\sum_{i=1}^n \E\brk{(\y^{(i)}-\by_{t+1}^{(i)})\prt{g_i(\u_{t+1},\s_{t+1})-g_i(\u_{t},\s_{t})}} - \frac{1}{n}\sum_{i=1}^n \E\brk{(\y^{(i)}-\by_{t+1}^{(i)})\prt{g_i(\u_{t},\s_{t})-\hat{g}_{t}^{(i)}({\B}_{t}^{(i)})}} \nonumber\\
            &\leq \frac{1}{n}\sum_{i=1}^n \E\brk{\Norm{\y^{(i)}-\by_{t+1}^{(i)}}\Norm{g_i(\u_{t+1},\s_{t+1})-g_i(\u_{t},\s_{t})}} - \frac{1}{n}\sum_{i=1}^n \E\brk{(\y^{(i)}-\by_{t+1}^{(i)})\prt{g_i(\u_{t},\s_{t})-\hat{g}_{t}^{(i)}({\B}_{t}^{(i)})}} \nonumber\\
            &\leq \frac{C_g^2}{\lambda_4}\E\Norm{\u_{t+1}-\u_{t}}^2 + \frac{S C_g^2}{\lambda_4 n^2}\E\Norm{\bs_{t+1}-\s_{t}}^2 + 4\lambda_4 C_f^2 - \frac{1}{n}\sum_{i=1}^n \E\brk{(\y^{(i)}-\by_{t+1}^{(i)})\prt{g_i(\u_{t},\s_{t})-\hat{g}_{t}^{(i)}({\B}_{t}^{(i)})}}.
        \end{align}
        The last term in (\ref{eq:D2_1}) is bounded as
        \begin{align}
        \label{eq:D2_2}
            &\frac{1}{n}\sum_{i=1}^n \E\brk{(\y^{(i)}-\by_{t+1}^{(i)})\prt{g_i(\u_{t},\s_{t})-\hat{g}_{t}^{(i)}({\B}_{t}^{(i)})}} \nonumber\\
            &= \frac{1}{n}\sum_{i=1}^n \E\brk{(\y^{(i)}-\y_{t}^{(i)})\prt{g_i(\u_{t+1},\s_{t+1})-\hat{g}_{t}^{(i)}({\B}_{t}^{(i)})}} - \frac{1}{n}\sum_{i=1}^n \E\brk{(\y_{t}^{(i)}-\by_{t+1}^{(i)})\prt{g_i(\u_{t+1},\s_{t+1})-\hat{g}_{t}^{(i)}({\B}_{t}^{(i)})}}.
        \end{align}
        To bound the first term in (\ref{eq:D2_2}), we have $\E\brk{\y_{t}^{(i)}\prt{g_i(\u_{t+1},\s_{t+1})-\hat{g}_{t}^{(i)}({\B}_{t}^{(i)})}|\mathcal{F}_t}=0$. Besides, according to Corollary 12 in \citet{juditsky2011solving}, for some $\lambda_2>0$, we have
        \begin{align}
            \E\brk{\y^{(i)}\prt{g_i(\u_{t+1},\s_{t+1})-\hat{g}_{t}^{(i)}({\B}_{t}^{(i)})}} \leq \lambda_2\E\Norm{\y^{(i)}-\ty_t^{(i)}}^2 - \lambda_2\E\Norm{\y^{(i)}-\ty_t^{(i)}}^2 + \frac{1}{2\lambda_2}\E\Norm{g_i(\u_{t+1},\s_{t+1})-\hat{g}_{t}^{(i)}({\B}_{t}^{(i)})}^2 \nonumber
        \end{align}
        such that
        \begin{align}
        \label{eq:D2_3}
            \frac{1}{n}\sum_{i=1}^n\E\brk{\y^{(i)}\prt{g_i(\u_{t+1},\s_{t+1})-\hat{g}_{t}^{(i)}({\B}_{t}^{(i)})}} 
            \leq \frac{\lambda_2}{n}\E\Norm{\y^{(i)}-\ty_t^{(i)}}^2 - \frac{\lambda_2}{n}\E\Norm{\y^{(i)}-\ty_t^{(i)}}^2 + \frac{\sigma_0^2}{2\lambda_2B},
        \end{align}        
        where $\{\ty_{t}\}$ is also a virtual sequence for Algorithm \ref{alg:single}. For any $\lambda_3>0$, the second term can be bounded as:
        \begin{align}
        \label{eq:D2_4}
            \frac{1}{n}\sum_{i=1}^n \E\brk{(\y_{t}^{(i)}-\by_{t+1}^{(i)})\prt{g_i(\u_{t+1},\s_{t+1})-\hat{g}_{t}^{(i)}({\B}_{t}^{(i)})}} \leq \frac{\lambda_3\sigma_0^2}{2B} + \frac{\E\Norm{\by_{t+1}-\y_{t}}^2}{4\lambda_3 n}.
        \end{align}
        Put (\ref{eq:D2_1}), (\ref{eq:D2_2}), (\ref{eq:D2_3}), and (\ref{eq:D2_4}) together,
        \begin{align}
        \label{eq:D2_5}
            \D_2 &\leq \frac{C_g^2}{\lambda_4}\E\Norm{\u_{t+1}-\u_{t}}^2 + \frac{S C_g^2}{\lambda_4 n^2}\E\Norm{\bs_{t+1}-\s_{t}}^2 + 4\lambda_4 C_f^2 + \frac{\lambda_2}{2n}\prt{\E\Norm{\y-\tilde{\y}_{t}}^2 - \E\Norm{\y-\tilde{\y}_{t+1}}^2} \nonumber\\
            &\quad + \frac{\sigma_0^2}{2B\lambda_2} + \frac{\lambda_3\sigma_0^2}{2B} + \frac{\E\Norm{\by_{t+1}-\y_{t}}^2}{4\lambda_3 n}.
        \end{align}}
        For $\mathcal{\D}_3$, under Assumption \ref{asm:lip}, for some $\lambda_5, \lambda_6 > 0$, we have
        \begin{align}
            \D_3 &\leq C_f C_g \E\Norm{\u_{t+1}-\u_{t}} + \frac{S C_f C_g}{n^2}\E\brk{\sum_{i=1}^n\abs{\bs_{t+1}^{(i)}-\s_{t}^{(i)}}} \nonumber\\
            &\leq \frac{\lambda_5}{2\eta}\E\Norm{\u_{t+1}-\u_{t}}^2 + \frac{\eta C_f^2 C_g^2}{2\lambda_5} + \frac{S\lambda_6}{2n^2\beta}\E\Norm{\bs_{t+1}-\s_{t}}^2 + \frac{S\beta C_f^2 C_g^2}{2n\lambda_6}.
        \end{align}
        For $\mathcal{\D}_4$, similar to the derivations on $\mathcal{\D}_3$, it holds that 
        \begin{align}
            \D_4 \leq \frac{\lambda_5}{2\eta}\E\Norm{\u_{t+1}-\u_{t}}^2 + \frac{\eta C_f^2 C_g^2}{2\lambda_5} + \frac{\lambda_6}{2n\beta}\E\Norm{\bs_{t+1}-\s_{t}}^2 + \frac{\beta C_f^2 C_g^2}{2\lambda_6}.
        \end{align} 
        {Finally invoking Lemma 4 in \citet{juditsky2011solving}(as well as Lemma 7 in \citet{zhang2020optimal}) on $\D_5$ and $\D_6$}, we have
        \begin{align}
            \D_5 &= -\E\inner{\frac{1}{S}\sum_{i\in\S_{t}}\y_{t+1}^{(i)}\hat{G}_{t,1}^{(i)}(\tilde{\B}_{t}^{(i)}) - \frac{1}{n}\sum_{i=1}^n\by_{t+1}^{(i)}G_{t,1}^{(i)}}{\u_{t+1}} \leq \frac{\eta C_f^2\sigma_1^2}{B} + \frac{\eta\delta^2}{S} \nonumber\\
            \D_6 &= -\E\inner{\frac{1}{n}\sum_{i=1}^n\by_{t+1}^{(i)}\hat{G}_{t,2}^{(i)}(\tilde{\B}_{t}^{(i)}) - \frac{1}{n}\sum_{i=1}^n\by_{t+1}^{(i)}G_{t,2}^{(i)}}{\s_{t+1}} \leq \frac{\beta C_f^2\sigma_2^2}{B}.
        \end{align}
        Supposing $\lambda_1 = 1+\frac{1}{S},\lambda_2 = \frac{n}{S\alpha},\lambda_3 =  \alpha$,$\lambda_4 = 8C_g^2 \max\{\beta,\eta\},\lambda_5=\lambda_6 = \frac{1}{8}$ and substituting $\D_1,\D_2,\D_3,\D_4,\D_5,\D_6$ into equation (\ref{eq:eq22}) yields desired result.
    \end{proof}

\subsubsection{Proof of Theorem \ref{thm:iteration_complexity_cvx}}
\label{app:iteration_complexity_cvx}

    \begin{proof}
        {Fix any $t \geq 0$. Applying Lemma~\ref{lem:descent_inner_single} with $(\u, \s) = (\u^*, \s^*)$, where $(\u^*, \s^*) \coloneqq \argmin_{\u \in \U, \s \in \S} F(\u, \s)$}, and summing from $t=0$ to $T-1$, taking expectation on $\F_{0}$, we obtain
        \begin{align}
            &\sum_{t=0}^{T-1} \E\brk{L(\u_{t+1},\s_{t+1},\y)-L(\u^*,\s^*,\by_{t+1})} \nonumber\\
            % &\leq \frac{1}{2\eta}{\Norm{\u^*-\u_{0}}^2} + \frac{1}{2S\beta}{\Norm{\s^*-\s_{0}}^2} + \frac{1}{2\alpha S}{\Norm{\y-\y_{0}}^2} + \frac{\lambda_1}{2\alpha S}{\Norm{\y-\hy_{0}}^2} + \frac{\lambda_2}{2n}{\Norm{\y-\ty_{0}}^2} \nonumber\\
            % &\quad-\prt{\frac{1}{2\eta}-\frac{C_g^2}{\lambda_4}-\frac{\lambda_5}{\eta}}\sum_{t=0}^{T-1}\E\Norm{\u_{t+1}-\u_{t}}^2 -\prt{\frac{1}{2n\beta}-\frac{SC_g^2}{n^2\lambda_4}-\frac{S\lambda_6}{2n^2\beta}-\frac{\lambda_6}{2n\beta}}\sum_{t=0}^{T-1}\E\Norm{\bs_{t+1}-\s_{t}}^2 \nonumber\\
            % &\quad- \prt{\frac{1}{2\alpha n}-\frac{1}{2\alpha n\lambda_1 S}-\frac{1}{4\lambda_3 n}}\sum_{t=0}^{T-1}\E\Norm{\by_{t+1}-\y_{t}}^2 \nonumber\\
            % &\quad+ T\prt{4\lambda_4 C_f^2 + \frac{\sigma_0^2}{2B\lambda_2} + \frac{\lambda_3\sigma_0^2}{2B} + \frac{\eta C_f^2 C_g^2}{\lambda_5} + \frac{\eta C_f^2 \sigma_1^2}{B} + \frac{\eta \delta^2}{S} + \frac{\beta C_f^2 C_g^2}{\lambda_6} + \frac{\beta C_f^2\sigma_2^2}{B}} \nonumber\\
            &\leq \frac{1}{2\eta}{\Norm{\u^*-\u_{0}}^2} + \frac{1}{2S\beta}{\Norm{\s^*-\s_{0}}^2} +  \frac{1}{\alpha S}\prt{\Norm{\y-\y_{0}}^2 + \Norm{\y-\hy_{0}}^2 + \Norm{\y-\ty_{0}}^2} \nonumber\\
            &\quad+ T\prt{64\Omega C_g^2 C_f^2 + \frac{S\alpha\sigma_0^2}{2Bn} + \frac{\alpha\sigma_0^2}{2B}  + \frac{\eta C_f^2 \sigma_1^2}{B} + \frac{\eta \delta^2}{S} + \frac{\beta C_f^2 \sigma_2^2}{B}} .          
        \end{align}
        Since $L(\u,\s,\y)$ is convex on $\u,\s$ and linear on $\y$, we have
        \begin{align}
            &\max_{y} L(\bar{\u},\bar{\s},\y)-L(\u^*,\s^*,\bar{\by})
            \leq
            \max_{\y}\frac{1}{T}\sum_{t=0}^{T-1} L(\u_{t+1},\s_{t+1},\y)-L(\u^{*},\s^{*},\by_{t+1}),
        \end{align}
        where $\bar{\u}=\frac{1}{T}\sum_{t=0}^{T-1}\u_{t+1}, \bar{\s}=\frac{1}{T}\sum_{t=0}^{T-1}\s_{t+1}, \bar{\by}=\frac{1}{T}\sum_{t=0}^{T-1}\by_{t+1}$. Next, consider the left-hand side (LHS):
        \begin{align}
            L(\bar{\u},\bar{\s},\y)-L(\u^{*},\s^{*},\bar{\by}) = \frac{1}{n}\sum_{i=1}^n \left(\y^{(i)}g_i(\bar{\u},\bar{\s}^{(i)}) - f_i^*(\y^{(i)})\right) - \frac{1}{n}\sum_{i=1}^n \left(\bar{\by}^{(i)}g_i(\u^{*},\s^{*(i)}) - f_i^*(\bar{\by}^{(i)})\right).
        \end{align}
        Choose $\y^{(i)} = \tilde{\y}^{(i)} \in \argmax_{\v}\{\v^{(i)}g_i(\bar{\u},\bar{\s}^{(i)}) - f_i^*(\v^{(i)})\}$, {By the definition of conjugate,} we have $\y^{(i)}g_i(\bar{\u},\bar{\s}^{(i)}) - f_i^*(\y^{(i)}) = f_i(g_i(\bar{\u},\bar{\s}^{(i)}))$. By Fenchel-Young inequality, it holds that $\bar{\by}^{(i)}g_i(\u^{*},\s^{*(i)}) - f_i^*(\bar{\by}^{(i)}) \leq f_i(g_i(\u^{*},\s^{*(i)}))$. {Combining the above} $F(\bar{\u},\bar{\s})-F(\u^{*},\s^{*}) \leq \max_{\y}\frac{1}{T}\sum_{t=0}^{T-1} L(\u_{t+1},\s_{t+1},\y)-L(\u^{*},\s^{*},\by_{t+1})$, it follows that
        \begin{align}
            \E F(\bar{\u},\bar{\s})-F(\u^{*},\s^{*}) &\leq \frac{1}{2\eta T}{\Norm{\u^*-\u_{0}}^2} + \frac{1}{2S\beta T}{\Norm{\s^*-\s_{0}}^2} + \frac{3D_{\Y}^2}{\alpha ST} + 64\Omega C_g^2 C_f^2 \nonumber\\
            &\quad+ \frac{\alpha\sigma_0^2}{B} + \frac{\eta C_f^2 \sigma_1^2}{B} + \frac{\eta \delta^2}{S} + \frac{\beta C_f^2 \sigma_2^2}{B}. 
        \end{align}
        Choose $\alpha\asymp \frac{B\epsilon}{\sigma_0^2}, \eta\asymp \min\{\frac{\epsilon}{C_g^2 C_f^2},\frac{B\epsilon}{C_f^2 \sigma_1^2},\frac{S\epsilon}{\delta^2}\}, \beta\asymp \min\{\frac{\epsilon}{C_g^2 C_f^2},\frac{B\epsilon}{C_f^2 \sigma_2^2}\}$ \\
        and $T\asymp \max\{\frac{C_g^2 C_f^2}{\epsilon^2}, \frac{\D_{\S}^2C_g^2 C_f^2}{S\epsilon^2}, \frac{C_f^2 \sigma_1^2}{B\epsilon^2}, \frac{\D_{\S}^2C_f^2 \sigma_2^2}{BS\epsilon^2}, \frac{\delta^2}{S\epsilon^2}, \frac{D_{\Y}^2\sigma_0^2}{BS\epsilon^2}\}$ completes the proof.
    \end{proof}

\subsection{Non-convex Case}
    
    In this section, we present the proof of the non-convex case. The key to the analysis is to apply the convergence analysis of STACO1 for the regularized problem at each stage. However, there is a gap as STACO1 requires $g_i(\w, \s^{(i)})$ to be convex. To address this gap, we reformulate the problem in~(\ref{eq:sub}) as the following:
    \begin{align}
        L_t(\u,\s,\y)
        &= \frac{1}{n}\sum_{i=1}^n \y^{(i)}\left(g_i(\u,\s^{(i)})+\frac{1}{2\tau^{(i)}}\Norm{\u-\u_{t,0}}^2 + \frac{1}{2\tau^{(i)}}\prt{\s^{(i)}-\s^{(i)}_{t,0}}^2\right) - f_i^*(\y^{(i)}) \nonumber\\
        &\quad+ (\frac{1}{2\gamma}-\frac{1}{2n}\sum_{i=1}^n\frac{\y^{(i)}}{\tau^{(i)}})\Norm{\u-\u_{t,0}}^2 + \frac{1}{2n\gamma}\Norm{\s-\s_{t,0}}^2 - \frac{1}{2n}\sum_{i=1}^n\frac{\y^{(i)}}{\tau^{(i)}}\prt{\s^{(i)}-\s_{t,0}^{(i)}}^2, 
    \end{align}
    where $\tau^{(i)}$ is a proper constant. By carefully choosing the value of $\tau^{(i)}$, we can make $g_i(\u,\s^{(i)})+\frac{1}{2\tau^{(i)}}\Norm{\u-\u_{t,0}}^2+\frac{1}{2\tau^{(i)}}\prt{\s^{(i)}-\s^{(i)}_{t,0}}^2$ to be convex in terms of $\u, \s^{(i)}$ such that we can leverage the convergence analysis of STACO1. Nevertheless, our algorithm does not depend on $\tau^{(i)}$ as computing the gradient of $\u$ and $\s^{(i)}$ will remove $\tau^{(i)}$. We now introduce some definitions and notations for our later analysis.
    \begin{align}
        \Phi_\gamma(\u,\s;\u',\s') &\coloneqq F(\u,\s) + \frac{1}{2\gamma}\Norm{\u-\u'}^2 + \frac{1}{2n\gamma}\Norm{\s-\s'}^2 \nonumber\\
        \u_t &= \u_{t,0} \nonumber\\
        \s_t &= \s_{t,0} \nonumber\\
        (\u^\dagger_t,\s^\dagger_t) &= \argmin_{\u\in\U,\s\in\S} \left\{F(\u,\s) + \frac{1}{2\gamma}\Norm{\u-\u_t}^2 + \frac{1}{2n\gamma}\Norm{\s-\s_t}^2\right\}.
    \end{align}

    Since $f_i$ is convex and $g_i$ is non-convex, the function $F(\cdot,\cdot)$ is non-convex with respect to $\u \in \U$ and $\s \in \S$. 

    \begin{lem}
    \label{lem:weak_compositional}
        Under Assumption \ref{asm:lip} and \ref{asm:lip_ncvx}, $F(\cdot,\cdot)$ is $C_f\rho$-weakly convex on $\u\in\U$ and $\frac{C_f\rho}{n}$-weakly convex on $\s\in\S$.
    \end{lem}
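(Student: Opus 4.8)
The plan is to peel off the averaging and reduce everything to a single scalar composition estimate. Since $F(\u,\s)=\frac1n\sum_{i=1}^n f_i(g_i(\u,\s^{(i)}))$ and weak convexity with a fixed modulus is closed under averaging with nonnegative weights, it suffices to show that each summand $h_i(\u,\s^{(i)}):=f_i(g_i(\u,\s^{(i)}))$ is $C_f\rho$-weakly convex jointly in $(\u,\s^{(i)})$ on $\U\times\S_i$. Granting this, the $\u$-statement is immediate (an average over $i$ of functions that are $C_f\rho$-weakly convex in $\u$). For the $\s$-statement one exploits that $h_i$ touches $\s$ only through its $i$-th block: along a segment $\s_\lambda=(1-\lambda)\s_0+\lambda\s_1$,
\begin{align*}
F(\u,\s_\lambda)\le(1-\lambda)F(\u,\s_0)+\lambda F(\u,\s_1)+\frac{C_f\rho}{2n}\lambda(1-\lambda)\sum_{i=1}^n\Norm{\s_1^{(i)}-\s_0^{(i)}}^2,
\end{align*}
and $\sum_i\Norm{\s_1^{(i)}-\s_0^{(i)}}^2=\Norm{\s_1-\s_0}^2$ yields modulus $C_f\rho/n$; so the extra factor $1/n$ is exactly the interplay between the block structure and the $1/n$ in front of the sum.

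For the scalar fact I would argue straight from three properties of $f_i$: convexity, $C_f$-Lipschitzness (Assumption~\ref{asm:lip}), and that $f_i$ is non-decreasing — the last holding for the TPAUC reformulation where $f_i(g,s')=s'+\theta_0^{-1}[g-s']_+$ (equivalently $\dom f_i^*\subseteq[0,\infty)$). Fixing $z_0=(\u_0,\s_0^{(i)})$, $z_1=(\u_1,\s_1^{(i)})$, $z_\lambda=(1-\lambda)z_0+\lambda z_1$, Assumption~\ref{asm:lip_ncvx} gives $g_i(z_\lambda)\le\bar g+\delta$ with $\bar g:=(1-\lambda)g_i(z_0)+\lambda g_i(z_1)$ and $\delta:=\frac{\rho}{2}\lambda(1-\lambda)\Norm{z_1-z_0}^2\ge0$. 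Then applying, in this order, monotonicity of $f_i$ (so $f_i(g_i(z_\lambda))\le f_i(\bar g+\delta)$), the Lipschitz bound ($f_i(\bar g+\delta)\le f_i(\bar g)+C_f\delta$), and convexity of $f_i$ ($f_i(\bar g)\le(1-\lambda)f_i(g_i(z_0))+\lambda f_i(g_i(z_1))$) produces exactly the $C_f\rho$-weak convexity inequality for $h_i$. An alternative I might use instead writes $f_i(g)=\sup_{0\le v\le C_f}\{vg-f_i^*(v)\}$, notes each inner function is $v\rho$-weakly convex, hence $C_f\rho$-weakly convex, in $(\u,\s^{(i)})$, and uses that a pointwise supremum preserves the common modulus.

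The one step I expect to carry real weight is the appeal to monotonicity of $f_i$: dropping it breaks the claim, since for a convex $C_f$-Lipschitz but decreasing $f_i$ the composition with a $\rho$-weakly convex $g_i$ need not be $C_f\rho$-weakly convex — e.g. $f_i(u)=-u$ and $g_i=\abs{\cdot}$ give $h_i=-\abs{\cdot}$, which is not weakly convex at all. So beyond this invocation (available from the TPAUC structure), the whole argument is the three-line estimate above together with the block-wise averaging, and Lemma~\ref{lem:weak_compositional} follows.
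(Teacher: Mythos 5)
Your proof is correct and follows essentially the same chain as the paper's: invoke $\rho$-weak convexity of $g_i$ along a segment, then apply monotonicity, $C_f$-Lipschitzness, and convexity of $f_i$ in that order to get $C_f\rho$-weak convexity of each $h_i=f_i\circ g_i$, with the $1/n$ factor in the $\s$-modulus coming from the block structure together with the $1/n$ averaging. Your explicit flag that monotonicity of $f_i$ is load-bearing (and implicitly used by the paper despite not appearing in Assumptions~\ref{asm:lip}--\ref{asm:lip_ncvx}) is a valid and useful observation; the conjugate-representation alternative you sketch is also correct and yields the same modulus.
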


    \begin{proof}
        See Appendix \ref{app:weak_compositional}.
    \end{proof}
    {Now we define the virtual sequence for inner loop update in STACO2.}
    \begin{definition}[virtual sequence]
        In Algorithm $\ref{alg:double}$, for any $t$, a virtual sequence $\{\by_{t,k}\}_k$ is defined as follows: 
        \begin{align}
            \bar{\y}_{t,k+1}^{(i)} &= \argmax_{\y^{(i)}\in\Y_i}\left\{\y^{(i)} \hat{g}_t^{(i)}({\B}_{t,k}^{(i)})  - f_i^*(\y^{(i)}) - \frac{1}{2\alpha_{t}} \left(\y^{(i)} - \y_{t,k}^{(i)}\right)^2\right\} \quad i \in [n], \quad \forall k\geq0 \nonumber\\
            \bar{\y}_{t,0}^{(i)} &= \y_{t,0}^{(i)}\quad i \in [n],
        \end{align}        
        and a virtual sequence $\{\bs_{t,k}\}_k$ is defined as follows: 
        \begin{align}
            \bar{\s}_{t,k+1}^{(i)} &=  \argmin_{\s^{(i)}\in\S_i}\left\{\prt{\y_{t,k+1}^{(i)}\hat{G}_{t,k,2}^{(i)}(\tilde{\B}_{t,k}^{(i)})+\frac{1}{\gamma}(\s_{t,k}^{(i)}-\s_{t,0}^{(i)})}\cdot{\s^{(i)}} + \frac{1}{2\beta_t}\left(\s^{(i)} - \s_{t,k}^{(i)}\right)^2\right\} \quad i \in [n],\quad \forall k\geq0 \nonumber\\
            \bar{\s}_{t,0}^{(i)} &= \s_{t,0}^{(i)}\quad i \in [n].
        \end{align}
    \end{definition}

    {Lemma \ref{lem:y_bound} is similar to Lemma \ref{lem:y_bound_single}, but this is for the inner loop in STACO2.}
    \begin{lem}[Lemma 9 in \citet{wangnear}]
    \label{lem:y_bound}
        Suppose $\{\by_{t,k}\}_k, \{\hat{\y}_{t,k}\}_k$ are virtual sequences for any $t\geq 0$ in Algorithm \ref{alg:double}. Then, for any $\lambda_1 > 0, \y \in \Y, t \in [0,T-1]$, the following holds:
        \begin{align}
            &\E\brk{\frac{1}{2n\alpha_t}\prt{\Norm{\y-\y_{t,k}}^2 - \Norm{\y-\by_{t,k+1}}^2 - \Norm{\by_{t,k+1}-\y_{t,k}}^2}} \nonumber\\
            &\leq \frac{1}{2\alpha_t S}\prt{\Norm{\y-\y_{t,k}}^2 - \Norm{\y-\y_{t,k+1}}^2} + \frac{\lambda_1}{2\alpha_t S}\prt{\Norm{\y-\hat{\y}_{t,k}}^2 - \Norm{\y-\hat{\y}_{t,k+1}}^2}\nonumber\\
            &\quad- \frac{1}{2\alpha_t n}(1-\frac{1}{\lambda_1 S})\Norm{\by_{t,k+1}-\y_{t,k}}^2.
        \end{align}
    \end{lem}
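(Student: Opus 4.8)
The plan is to reduce the claimed one-step inequality, via the uniform block sampling, to a displacement bound on the auxiliary virtual sequence $\hat{\y}$ --- which is exactly the route of the proof of Lemma~5 in \citet{wang2023alexr}. That result applies here essentially verbatim: the $\y$-update \eqref{eqn:yy} of Algorithm~\ref{alg:double} has the same form as the dual step of ALEXR (the primal regularizers added in STACO2 never enter the $\y$-update), with the constant step size replaced by the stage-dependent $\alpha_t$ and the single iteration index replaced by the inner index $k$; the per-stage re-initialization of $\y_{t,0}$ only affects the initialization of $\hat{\y}$, not the one-step recursion. For completeness I would carry out the argument as follows.

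First, fix the history up to the start of inner iteration $(t,k)$ and further condition on the (virtual) mini-batches $\{\B_{t,k}^{(i)}\}_{i\in[n]}$, so that $\by_{t,k+1}$, $\y_{t,k}$ and $\hat{\y}_{t,k}$ are measurable and the only remaining randomness is the size-$S$ block $\S_{t,k}$, for which $\Pr[i\in\S_{t,k}]=S/n$ for every $i$; write $\E_k$ for expectation over $\S_{t,k}$ under this conditioning. Since $\y_{t,k+1}^{(i)}=\by_{t,k+1}^{(i)}$ for $i\in\S_{t,k}$ and $\y_{t,k+1}^{(i)}=\y_{t,k}^{(i)}$ otherwise, the complementary indicators kill all cross terms and a coordinatewise computation gives
\begin{align*}
    \E_k\Norm{\y-\y_{t,k+1}}^2 &= \tfrac{S}{n}\Norm{\y-\by_{t,k+1}}^2 + \big(1-\tfrac{S}{n}\big)\Norm{\y-\y_{t,k}}^2, \\
    \E_k\Norm{\y_{t,k+1}-\y_{t,k}}^2 &= \tfrac{S}{n}\Norm{\by_{t,k+1}-\y_{t,k}}^2 .
\end{align*}
Rearranging the first gives $\Norm{\y-\y_{t,k}}^2-\Norm{\y-\by_{t,k+1}}^2=\tfrac{n}{S}\big(\Norm{\y-\y_{t,k}}^2-\E_k\Norm{\y-\y_{t,k+1}}^2\big)$, and the second gives $\Norm{\by_{t,k+1}-\y_{t,k}}^2=\tfrac{n}{S}\E_k\Norm{\y_{t,k+1}-\y_{t,k}}^2$. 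Substituting both into the left-hand side of the lemma and taking the outer expectation collapses it to $\tfrac{1}{2\alpha_t S}\big(\Norm{\y-\y_{t,k}}^2-\E\Norm{\y-\y_{t,k+1}}^2\big)-\tfrac{1}{2\alpha_t S}\E\Norm{\y_{t,k+1}-\y_{t,k}}^2$.

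It then remains to dominate the leftover term $-\tfrac{1}{2\alpha_t S}\E\Norm{\y_{t,k+1}-\y_{t,k}}^2$. Using the second identity once more to rewrite $\tfrac{1}{2\alpha_t n}\big(1-\tfrac{1}{\lambda_1 S}\big)\Norm{\by_{t,k+1}-\y_{t,k}}^2=\tfrac{1}{2\alpha_t S}\big(1-\tfrac{1}{\lambda_1 S}\big)\E\Norm{\y_{t,k+1}-\y_{t,k}}^2$, the inequality to be proved is seen to be equivalent, after cancelling the common telescoping term and dividing by $\lambda_1>0$, to
\[
    \E\Norm{\y-\hat{\y}_{t,k+1}}^2 \;\le\; \Norm{\y-\hat{\y}_{t,k}}^2 + \tfrac{1}{\lambda_1^2 S}\,\E\Norm{\y_{t,k+1}-\y_{t,k}}^2 .
\]
This is precisely the defining property of the auxiliary virtual sequence $\hat{\y}$, which is built to move only on the sampled block and by a controlled fraction of the $\y$-step, so that in expectation its squared displacement is $\order{1/(\lambda_1^2 S)}$ times that of $\y$. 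Proving this last inequality --- expanding $\Norm{\y-\hat{\y}_{t,k+1}}^2$, taking $\E_k$, and applying Young's inequality with parameter $\lambda_1$ to the cross term between $\hat{\y}_{t,k+1}-\hat{\y}_{t,k}$ and $\y-\hat{\y}_{t,k}$ --- is the only place where the constants are delicate, and is where the factor $\big(1-\tfrac{1}{\lambda_1 S}\big)$ originates. I would expect this step to be the sole obstacle; everything else is routine block-sampling bookkeeping, and the whole argument is identical to the proof of Lemma~5 in \citet{wang2023alexr} once $\alpha$ is read as $\alpha_t$ and the iteration index as $k$.
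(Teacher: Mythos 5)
The paper itself does not prove this lemma; it is imported verbatim as Lemma~5 of \citet{wang2023alexr}, and the auxiliary sequence $\hat{\y}$ is never defined anywhere in the paper (the ``Definition'' preceding the lemma only constructs $\by_{t,k}$ and $\bs_{t,k}$). So there is no in-paper proof to compare against; the question is whether your sketch would stand on its own.

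Your reduction is algebraically correct as far as it goes. Conditioning on the virtual mini-batches so that $\by_{t,k+1}$, $\y_{t,k}$ and $\hat{\y}_{t,k}$ are fixed, the coordinatewise identities $\E_k\Norm{\y-\y_{t,k+1}}^2=\tfrac{S}{n}\Norm{\y-\by_{t,k+1}}^2+(1-\tfrac{S}{n})\Norm{\y-\y_{t,k}}^2$ and $\E_k\Norm{\y_{t,k+1}-\y_{t,k}}^2=\tfrac{S}{n}\Norm{\by_{t,k+1}-\y_{t,k}}^2$ are exact (the cross terms vanish because non-sampled coordinates do not move), and the algebra collapsing the claimed inequality to
\[
\E\Norm{\y-\hat{\y}_{t,k+1}}^2\;\le\;\Norm{\y-\hat{\y}_{t,k}}^2+\frac{1}{\lambda_1^2 S}\,\E\Norm{\y_{t,k+1}-\y_{t,k}}^2
\]
checks out after substituting $\E\Norm{\by_{t,k+1}-\y_{t,k}}^2=\tfrac{n}{S}\E\Norm{\y_{t,k+1}-\y_{t,k}}^2$ and dividing by $\lambda_1/(2\alpha_t S)$.

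That, however, is exactly where the sketch stops being a proof. You never write down the update rule for $\hat{\y}_{t,k+1}$ in terms of $\hat{\y}_{t,k}$ and $\y_{t,k+1}-\y_{t,k}$, so the displacement bound above---the only substantive inequality in the whole argument, and the sole source of the $1-\tfrac{1}{\lambda_1 S}$ coefficient---is asserted rather than derived. Calling it ``the defining property'' of $\hat{\y}$ is circular unless you actually state the definition and verify the property from it, e.g.\ by positing a recursion of the form $\hat{\y}_{t,k+1}=\hat{\y}_{t,k}+c\,(\y_{t,k+1}-\y_{t,k})$, expanding the square, taking $\E_k$, and applying Young's inequality to the cross term to extract the $1/(\lambda_1^2 S)$ constant. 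Since neither this paper nor your sketch supplies that construction, the argument is incomplete at precisely the point you flag; to close it you must import the explicit definition of $\hat{\y}$ from \citet{wang2023alexr} and carry out the one-step expansion.
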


    There are two loops update in Algorithm \ref{alg:double}. We first present the descent lemma of the inner loop. Its analysis is similar to that of Lemma \ref{lem:descent_inner_single}. However, since $g_i$ is not convex on $(\u,\s)$, we cannot directly apply Lemma \ref{lem:descent_inner_single}. By carefully reformulating the regularized problem, we can leverage the convergence analysis from the convex case.     {For the inner loop, we define that $\mathcal{G}_{t,k}$ is the $\sigma$-algebra generated by $\{\mathcal{B}_{t,0},\mathcal{S}_{t,0}, \cdots, \mathcal{B}_{t,k-1},\mathcal{S}_{t-1},\mathcal{B}_{t,k}\}$ and $\mathcal{F}_{t,k}$ is the $\sigma$-algebra generated by $\{\mathcal{B}_{t,0},\mathcal{S}_{t,0}, \cdots, \mathcal{B}_{t,k-1},\mathcal{S}_{t,k-1},\mathcal{B}_{t,k},\mathcal{S}_{t,k}\}$. Note that $\mathcal{G}_{t,k} \subset \mathcal{F}_{t,k}$ and $\y_{t,k+1}$ is $\mathcal{F}_{t,k}$-measurable.}
    \begin{lem}[Descent Lemma for Inner Loop]
    \label{lem:descent_inner}
        Under Assumption \ref{asm:lip},\ref{asm:var} and \ref{asm:lip_ncvx}, suppose that $\{\by_{t,k}\}_k,\{\ty_{t,k}\}_k,\{\hy_{t,k}\}_k,\{\bs_{t,k}\}_k$ are virtual sequences for Algorithm \ref{alg:double}, and let $\gamma \leq \frac{1}{2C_f\rho}$ and $\eta_t,\beta_t \leq \frac{\gamma}{8}$. Then, for any $t \in [0,T-1]$ and $k \in [0, K_T-2]$, the following holds:
        \begin{align}
            &\E\brk{L_{\gamma}(\u_{t,k+1},\s_{t,k+1},\y;\u_{t,0},\s_{t,0})-L_{\gamma}(\u,\s,\by_{t,k+1};\u_{t,0},\s_{t,0})} \nonumber\\
            &\leq \frac{1}{2\eta_t}\prt{\Norm{\u-\u_{t,k}}^2-\E\Norm{\u-\u_{t,k+1}}^2} + \frac{1}{2S\beta_t}\left(\Norm{\s-\s_{t,k}}^2 - \E\Norm{\s-\s_{t,k+1}}^2\right) + \frac{1}{\alpha_t S}\prt{\Norm{\y-\y_{t,k}}^2-\E\Norm{\y-\y_{t,k+1}}^2} \nonumber\\
            &\quad + \frac{1}{\alpha_t S}\prt{\Norm{\y-\hy_{t,k}}^2-\E\Norm{\y-\hy_{t,k+1}}^2} + \frac{1}{\alpha_t S}\prt{\Norm{\y-\ty_{t,k}}^2-\E\Norm{\y-\ty_{t,k+1}}^2} \nonumber\\
            &\quad+ 64\Omega_t C_g^2C_f^2 + \frac{S\alpha_t\sigma_0^2}{2Bn} + \frac{\alpha_t\sigma_0^2}{2B} + \frac{\eta_t C_f^2 \sigma_1^2}{B} + \frac{\eta_t \delta^2}{S} + \frac{\beta_t C_f^2\sigma_2^2}{B},
        \end{align}
        where $\Omega_t=\max\{\eta_t,\beta_t\}$.
    \end{lem}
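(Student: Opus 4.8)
The plan is to reduce Lemma~\ref{lem:descent_inner} to the convex descent Lemma~\ref{lem:descent_inner_single} via the $\tau^{(i)}$-reformulation described in the main text. For each $i$ fix a constant $\tau^{(i)}$ with $1/\tau^{(i)} \geq \rho$ (e.g. $\tau^{(i)} = 1/\rho$, with $\rho$ the weak-convexity modulus of Assumption~\ref{asm:lip_ncvx}), and set $\tilde g_i(\u,\s^{(i)}) \coloneqq g_i(\u,\s^{(i)}) + \frac{1}{2\tau^{(i)}}\Norm{\u-\u_{t,0}}^2 + \frac{1}{2\tau^{(i)}}\prt{\s^{(i)}-\s^{(i)}_{t,0}}^2$, which is convex in $(\u,\s^{(i)})$. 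Rewriting $L_\gamma$ as in the main text yields
\begin{align*}
L_\gamma(\u,\s,\y;\u_{t,0},\s_{t,0}) = \frac1n\sum_{i=1}^n\prt{\y^{(i)}\tilde g_i(\u,\s^{(i)}) - f_i^*(\y^{(i)})} + r_\u(\u,\y) + r_\s(\s,\y),
\end{align*}
where $r_\u(\u,\y) = \bigl(\frac{1}{2\gamma} - \frac{1}{2n}\sum_i \frac{\y^{(i)}}{\tau^{(i)}}\bigr)\Norm{\u-\u_{t,0}}^2$ and $r_\s(\s,\y) = \frac1n\sum_i\bigl(\frac{1}{2\gamma} - \frac{\y^{(i)}}{2\tau^{(i)}}\bigr)\prt{\s^{(i)}-\s^{(i)}_{t,0}}^2$. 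Since $\Y_i\subseteq[-C_f,C_f]$ (as $f_i$ is $C_f$-Lipschitz) and $\gamma \leq \frac{1}{2C_f\rho}$, the coefficients of $r_\u,r_\s$ are at least $\frac{1}{4\gamma}>0$, so both are convex (indeed $\frac{1}{2\gamma}$-strongly convex) in $\u$ and $\s$ respectively, uniformly over $\y\in\Y$.

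Next I would observe that the inner-loop updates \eqref{eqn:yy}--\eqref{eqn:ww} (equivalently the $\y,\s,\u$ steps of Algorithm~\ref{alg:double}) are exactly the Algorithm~\ref{alg:single}-type prox-linear updates for this reformulation: forming the exact or stochastic gradient of $L_\gamma$ in $\u$ or $\s^{(i)}$, the $\frac{1}{\tau^{(i)}}$-corrections coming from $\tilde g_i$ recombine with the gradients of $r_\u,r_\s$ and collapse to the single deterministic term $\frac1\gamma(\u_{t,k}-\u_{t,0})$ (resp. $\frac1\gamma(\s^{(i)}_{t,k}-\s^{(i)}_{t,0})$) appearing in the algorithm, so the algorithm is genuinely oblivious to $\tau^{(i)}$. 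Hence the only structural difference from the convex setting of Lemma~\ref{lem:descent_inner_single} is the presence of the additional convex quadratics $r_\u,r_\s$.

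With this in place I would re-run the argument of Appendix~\ref{app:descent_inner_single} on $L_\gamma(\u_{t,k+1},\s_{t,k+1},\y;\u_{t,0},\s_{t,0}) - L_\gamma(\u,\s,\by_{t,k+1};\u_{t,0},\s_{t,0})$ term by term. The dual block is handled by Lemma~\ref{lem:proximal_update} together with Lemma~\ref{lem:y_bound}; the primal blocks by Lemma~\ref{lem:proximal_update} applied to the linear maps $\phi(\u)=\inner{\frac1S\sum_{i\in\S_{t,k}}\y_{t,k+1}^{(i)}\hat G_{t,k,1}^{(i)}+\frac1\gamma(\u_{t,k}-\u_{t,0})}{\u}$ and its $\s$-analogue, producing the telescoping $\frac{1}{2\eta_t}$ and $\frac{1}{2S\beta_t}$ terms plus the negative slack $-\frac{1}{2\eta_t}\Norm{\u_{t,k+1}-\u_{t,k}}^2$ and $-\frac{1}{2n\beta_t}\Norm{\bs_{t,k+1}-\s_{t,k}}^2$; the "$\tilde g_i$ convex" property replaces "$g_i$ convex" in the linearization step; and the error/variance blocks $\D_2,\dots,\D_6$ are bounded verbatim (Theorem~22 and Lemma~13 of \citet{wang2023alexr}, Assumptions~\ref{asm:lip} and \ref{asm:var}), the $\tau^{(i)}$ and regularizer terms being deterministic given the iterates and thus contributing nothing to $\sigma_0^2,\sigma_1^2,\sigma_2^2,\delta^2$. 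The condition $\eta_t,\beta_t\le\gamma/8$ is used precisely so that the second-order remainder $\frac{1}{2\gamma}\Norm{\u_{t,k+1}-\u_{t,k}}^2$ (and its $\s$-counterpart) left over from expanding the $\frac1\gamma$-regularizer is at most $\frac{1}{16\eta_t}\Norm{\u_{t,k+1}-\u_{t,k}}^2$ and, together with the $\D_2,\D_3,\D_4$ quadratics ($\lambda_4=8C_g^2\Omega_t$, $\lambda_5=\lambda_6=\frac18$), is absorbed by the negative slack; choosing $\lambda_1=1+\frac1S$, $\lambda_2=\frac{n}{S\alpha_t}$, $\lambda_3=\alpha_t$ as before and collecting constants with $\Omega_t=\max\{\eta_t,\beta_t\}$ gives the claimed bound.

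The step I expect to be the main obstacle is the bookkeeping of the $\y$-dependent coefficients of $r_\u$ and $r_\s$: on the left of the descent inequality they are weighted by the comparison point $\y$, on the right by the virtual dual $\by_{t,k+1}$, so the cancellation of the $\frac{1}{\tau^{(i)}}$-quadratics against the $\y$-part of the regularizer is not symmetric between the two sides, and one must verify that the leftover terms are either nonnegative (hence discardable) or of lower order and can be folded into the $\order{\Omega_t}$ error budget. A secondary point requiring care is that Assumption~\ref{asm:lip}'s constant $C_g$ must be applied only to $g_i$ and never to $\tilde g_i$ (whose gradient is unbounded because of the added quadratic), so the reformulation should be invoked solely for the convexity-based linearization and nowhere in the norm or variance estimates.
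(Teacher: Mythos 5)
Your overall strategy is the same as the paper's: rewrite $L_\gamma$ so that each $g_i$ is replaced by the convex surrogate $\tilde g_i = g_i + \tfrac{1}{2\tau^{(i)}}\Norm{\u-\u_{t,0}}^2 + \tfrac{1}{2\tau^{(i)}}(\s^{(i)}-\s^{(i)}_{t,0})^2$, apply the convexity-based linearization of Lemma~\ref{lem:proximal_update} as in the convex proof, and bound the remaining terms. The difference is in how $\tau^{(i)}$ is chosen. You fix it as a constant (e.g.\ $1/\rho$), whereas the paper makes it \emph{iterate-dependent}: it sets $\bar\tau_{t,k}^{(i)} = \gamma\by_{t,k+1}^{(i)}$ (and $\tau_{t,k}^{(i)} = \gamma\y_{t,k+1}^{(i)}$ for the sampled coordinates). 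This is precisely what dissolves the obstacle you flagged. With $\bar\tau_{t,k}^{(i)} = \gamma\by_{t,k+1}^{(i)}$ you get the identity $\by_{t,k+1}^{(i)}/\bar\tau_{t,k}^{(i)} = 1/\gamma$ for every $i$, so (i) the $\by$-weighted quadratics in the rewrite sum exactly to the $\tfrac{1}{2\gamma}$ regularizer — in your notation $r_\u(\cdot,\by_{t,k+1}) \equiv 0$ and $r_\s(\cdot,\by_{t,k+1}) \equiv 0$; (ii) the $\y$-weighted quadratics in the term $\textrm{I}$ cancel identically; and (iii) the subgradient of $\by_{t,k+1}^{(i)}\tilde g_i$ in $\u$ (resp.\ $\s^{(i)}$) equals $\by_{t,k+1}^{(i)} G^{(i)}_{t,k,1} + \tfrac{1}{\gamma}(\u_{t,k}-\u_{t,0})$, which is literally the algorithm's update direction, so no separate linearization of a residual $r_\u,r_\s$ is needed and there is no mismatch between left and right sides. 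Your constant-$\tau$ version can also be pushed through — the $\y$-weighted $\tfrac{1}{\tau}$-quadratics from $\tilde g_i$ still cancel the $\y$-part of $r_\u,r_\s$ by symmetry for any $\tau$, and linearizing the convex residuals $r_\u,r_\s$ and recombining with the $\tfrac{1}{\tau^{(i)}}$ corrections recovers $\tfrac{1}{\gamma}(\u_{t,k}-\u_{t,0})$ — but this requires you to carry and then dismiss the residual quadratics, and to separately verify convexity of $r_\u,r_\s$ uniformly over $\y\in\Y$, both of which the paper's $\y$-dependent $\bar\tau$ makes vacuous. The remainder of your outline (which lemmas produce which telescoping and variance terms, the choice $\lambda_1,\ldots,\lambda_6$, how $\eta_t,\beta_t\le\gamma/8$ lets the $\tfrac{1}{\gamma}\Norm{\u_{t,k+1}-\u_{t,k}}^2$ remainder from the regularizer be absorbed by the $-\tfrac{1}{2\eta_t}$ slack, and your caveat that $C_g$ applies to $g_i$, not $\tilde g_i$) matches the paper's bookkeeping, up to a constant (the paper actually gets $\tfrac{1}{\gamma}$ rather than $\tfrac{1}{2\gamma}$ as the leftover coefficient in $\D_5,\D_6$, which is still absorbed for $\eta_t,\beta_t\le\gamma/8$).
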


    \begin{proof}
        See Appendix \ref{app:descent_inner}.
    \end{proof}

    \begin{lem}[Proximal Error Bound]
    \label{lem:proximal_error}
    Under Assumption \ref{asm:lip},\ref{asm:var} and \ref{asm:lip_ncvx}, letting $\gamma \leq \frac{1}{2C_f\rho}$ and $\eta_t,\beta_t \leq \frac{\gamma}{8}$ for any $t$ in Algorithm \ref{alg:double}, the following holds:
        \begin{align}
            \E\Phi_\gamma(\u_{t+1},\s_{t+1};\u_t,\s_t) &\leq \Phi_\gamma(\u^\dagger_{t},\s^\dagger_{t};\u_t,\s_t) + \frac{1}{2\eta_{t} K_t}\Norm{\u^{\dagger}_{t}-\u_t}^2 + \frac{1}{2S\beta_t K_t}\Norm{\s^{\dagger}_{t}-\s_{t}}^2 \nonumber\\
            &\quad +\frac{3D_{\Y}^2}{\alpha_t SK_t} + 64\Omega_t C_g^2 C_f^2 + \frac{S\alpha_t\sigma_0^2}{2Bn} + \frac{\alpha_t\sigma_0^2}{2B} + \frac{\eta_t C_f^2 \sigma_1^2}{B} + \frac{\beta_t C_f^2 \sigma_2^2}{B} + \frac{\eta_t \delta^2}{S},
        \end{align}
        where $\Omega_t=\max\{\eta_t,\beta_t\}$.
    \end{lem}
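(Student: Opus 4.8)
The plan is to read one stage of STACO2 as an inexact minimization of the strongly convex proximal subproblem $\min_{\u\in\U,\s\in\S}\Phi_\gamma(\u,\s;\u_t,\s_t)$ (strongly convex because $F$ is $C_f\rho$-weakly convex by Lemma~\ref{lem:weak_compositional} and $\gamma\le\frac{1}{2C_f\rho}$) carried out by the block-coordinate primal--dual inner loop, and to extract the stated bound by averaging the per-step guarantee of Lemma~\ref{lem:descent_inner}. Fix a stage $t$. Its step-size conditions $\gamma\le\frac{1}{2C_f\rho}$ and $\eta_t,\beta_t\le\gamma/8$ are exactly the hypotheses of Lemma~\ref{lem:descent_inner}, and via the $\tau^{(i)}$-reformulation from the main text the map $L_\gamma(\cdot,\cdot,\y;\u_t,\s_t)$ is convex in $(\u,\s)$ and concave in $\y$. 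I would apply Lemma~\ref{lem:descent_inner} with an arbitrary deterministic comparator $(\u,\s)\in\U\times\S$ and $\y\in\Y$, sum over $k=0,\dots,K_t-1$, and take total expectation. The proximal quadratics telescope, $\sum_k\prt{\Norm{\u-\u_{t,k}}^2-\E\Norm{\u-\u_{t,k+1}}^2}\le\Norm{\u-\u_{t,0}}^2=\Norm{\u-\u_t}^2$ and similarly $\Norm{\s-\s_t}^2$ for the $\s$ block; each of the three dual telescopes is at most its initial distance, which is bounded by $D_\Y^2$ since all dual iterates lie in $\Y$; and the noise constants pick up a factor $K_t$.

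Next I would divide by $K_t$ and invoke Jensen's inequality. Convexity of $L_\gamma(\cdot,\cdot,\y;\u_t,\s_t)$ together with $\u_{t+1}=\frac{1}{K_t}\sum_k\u_{t,k+1}$ and $\s_{t+1}=\frac{1}{K_t}\sum_k\s_{t,k+1}$ gives $\frac{1}{K_t}\sum_k L_\gamma(\u_{t,k+1},\s_{t,k+1},\y;\u_t,\s_t)\ge L_\gamma(\u_{t+1},\s_{t+1},\y;\u_t,\s_t)$, while concavity in $\y$ with $\bar{\by}_t\coloneqq\frac{1}{K_t}\sum_k\by_{t,k+1}$ gives $\frac{1}{K_t}\sum_k L_\gamma(\u,\s,\by_{t,k+1};\u_t,\s_t)\le L_\gamma(\u,\s,\bar{\by}_t;\u_t,\s_t)$. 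Combining, for every $\y\in\Y$ and $(\u,\s)$,
\[
\E\brk{L_\gamma(\u_{t+1},\s_{t+1},\y;\u_t,\s_t)-L_\gamma(\u,\s,\bar{\by}_t;\u_t,\s_t)}\le\frac{\Norm{\u-\u_t}^2}{2\eta_tK_t}+\frac{\Norm{\s-\s_t}^2}{2S\beta_tK_t}+\frac{3D_\Y^2}{\alpha_tSK_t}+\mathcal{E}_t,
\]
with $\mathcal{E}_t=64\Omega_tC_g^2C_f^2+\frac{S\alpha_t\sigma_0^2}{2Bn}+\frac{\alpha_t\sigma_0^2}{2B}+\frac{\eta_tC_f^2\sigma_1^2}{B}+\frac{\beta_tC_f^2\sigma_2^2}{B}+\frac{\eta_t\delta^2}{S}$ and $\Omega_t=\max\{\eta_t,\beta_t\}$.

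To close the argument I would set $(\u,\s)=(\u^\dagger_t,\s^\dagger_t)$ and apply Fenchel--Young on the comparator side, $\bar{\by}_t^{(i)}g_i(\u^\dagger_t,\s^{\dagger(i)}_t)-f_i^*(\bar{\by}_t^{(i)})\le f_i\prt{g_i(\u^\dagger_t,\s^{\dagger(i)}_t)}$, so that $L_\gamma(\u^\dagger_t,\s^\dagger_t,\bar{\by}_t;\u_t,\s_t)\le\Phi_\gamma(\u^\dagger_t,\s^\dagger_t;\u_t,\s_t)$; and on the remaining term I would take the maximizing $\y^{(i)}\in\partial f_i\prt{g_i(\u_{t+1},\s^{(i)}_{t+1})}$, for which $L_\gamma(\u_{t+1},\s_{t+1},\y;\u_t,\s_t)=\Phi_\gamma(\u_{t+1},\s_{t+1};\u_t,\s_t)$. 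Since $\u_{t,0}=\u_t$ and $\s_{t,0}=\s_t$ by construction of the inner loop, rearranging gives exactly the claimed inequality.

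The step I expect to be the main obstacle is this last substitution: the maximizing $\y$ is random --- it depends on $\u_{t+1},\s_{t+1}$ --- so it cannot simply be inserted into the per-$\y$ expectation bound, since in general $\E[\max_\y(\cdot)]\ne\max_\y\E[\cdot]$. I would handle it exactly as in the proof of Theorem~\ref{thm:iteration_complexity_cvx}: once the telescoping is done, the right-hand side depends on $\y$ only through the three dual distances, which are bounded uniformly over $\Y$ by $D_\Y^2$, so the bound is effectively $\y$-independent; this, together with the virtual dual sequences $\by_{t,k},\ty_{t,k},\hy_{t,k}$, is what allows passing to the random maximizer. A secondary subtlety --- that $g_i$ is only weakly convex, so $L_\gamma$ and $\Phi_\gamma$ are convex only because of the added quadratics --- is already absorbed into Lemma~\ref{lem:descent_inner} through the condition $\gamma\le\frac{1}{2C_f\rho}$ and the $\tau^{(i)}$-reformulation, which is precisely what legitimizes the Jensen step above.
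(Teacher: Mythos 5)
Your proposal is correct and follows essentially the same path as the paper's proof of Lemma~\ref{lem:proximal_error}: invoke Lemma~\ref{lem:descent_inner} with comparator $(\u^\dagger_t,\s^\dagger_t)$, sum over the inner iterations so the primal and (three) dual proximity terms telescope and the dual ones are absorbed into $3D_\Y^2$, apply Jensen to the inner-loop averages (convexity of $L_\gamma$ in $(\u,\s)$ by the $\tau^{(i)}$-reformulation and $\gamma\le\frac{1}{2C_f\rho}$, concavity in $\y$), then use Fenchel--Young on the $\bar{\by}_t$ side and the maximizing $\y$ on the $(\bar\u_t,\bar\s_t)$ side to convert $L_\gamma$ into $\Phi_\gamma$. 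You also correctly flag the exchange-of-max-and-expectation subtlety for the random maximizer and note that the virtual sequences $\by_{t,k},\ty_{t,k},\hy_{t,k}$ inherited from Lemma~\ref{lem:descent_inner} are what make that step legitimate --- a point the paper leaves implicit.
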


    \begin{proof}
        See Appendix \ref{app:proximal_error}.
    \end{proof}

\subsubsection{Proof of Lemma \ref{lem:weak_compositional}}
\label{app:weak_compositional}

    \begin{proof}
        We first show $F(\u,\s)\coloneqq\frac{1}{n}\sum_{i=1}^n f_i(g_i(\u,\s^{(i)}))$ is weakly convex on $\u$. For convenience, we denote $g_i(\cdot,\s^{(i)})$ as $g_i(\cdot)$. Then $\forall i\in [n]$ and $\x,\y\in\U$, we want to establish: $\forall \lambda \in [0,1]$,
        \begin{align}
            f_i(g_i(\lambda \x + (1 - \lambda) \y)) \leq \lambda f_i(g_i(\x)) + (1 - \lambda) f_i(g_i(\y)) + \frac{\mu}{2} \lambda (1 - \lambda) \Norm{\x - \y}^2,    
        \end{align}
        for some $\mu > 0$. Since $g_i(\x)$ is $\rho$-weakly convex, for any $\x, \y \in \U$ and $\lambda \in [0, 1]$, we have
        \begin{align}
            g_i(\lambda \x + (1 - \lambda) \y) \leq \lambda g_i(\x) + (1 - \lambda) g_i(\y) + \frac{\rho}{2} \lambda (1 - \lambda) \Norm{\x - \y}^2.
        \end{align}
        Noticing $f_i(\cdot)$ is monotone non-decreasing, it holds that
        \begin{align}
        \label{eq:eq_mono}
            f_i(g_i(\lambda \x + (1 - \lambda) \y)) \leq f_i\left( \lambda g_i(\x) + (1 - \lambda) g_i(\y) + \frac{\rho}{2} \lambda (1 - \lambda) \Norm{\x - \y}^2 \right).
        \end{align}
        By the $C_f$-Lipschitz continuity of $f_i$, we have:
        \begin{align}
            f_i(a + \delta) \leq f_i(a) + C_f |\delta|,
        \end{align}
        where \( a = \lambda g_i(\x) + (1 - \lambda) g_i(\y) \) and \( \delta = \frac{\rho}{2} \lambda (1 - \lambda) \Norm{\x - \y}^2 \).
        Applying above inequality into (\ref{eq:eq_mono}), we can obtain:
        \begin{align}
            f_i(g_i(\lambda \x + (1 - \lambda) \y)) &\leq f_i(\lambda g_i(\x) + (1 - \lambda) g_i(\y)) + C_f \cdot \frac{\rho}{2} \lambda (1 - \lambda) \Norm{\x - \y}^2 \nonumber\\
            &\leq \lambda f_i(g_i(\x)) + (1 - \lambda) f_i(g_i(\y)) + C_f \cdot \frac{\rho}{2} \lambda (1 - \lambda) \Norm{\x - \y}^2,
        \end{align}
        where the last inequality holds due to the convexity of $f_i$. Therefore, $f_i(g_i(\u,\s))$ is $C_f \rho$-weakly convex on $\u$. Summing above inequality from $i=1$ to $n$ and averaging, we obtain that $F(\u,\s)$ is $C_f \rho$-weakly convex on $\u$. Next, we show that $F(\u,\s)$ is weakly convex on $\s$. By denoting $g_i(\u,\cdot)$ as $g_i(\cdot)$ and following the similar approach, for any $i\in[n]$ and $x,y \in\S_i$, we have
        \begin{align}
            f_i(g_i(\lambda x + (1 - \lambda) y)) \leq \lambda f_i(g_i(x)) + (1 - \lambda) f_i(g_i(y)) + C_f \cdot \frac{\rho}{2} \lambda (1 - \lambda)(x - y)^2.
        \end{align}
        Noticing $f_i(g_i(\u,\s))$ is weakly-convex to each coordinate of $\s$, $F(\u,\s)$ is $\frac{C_f \rho}{n}$-weakly convex on $\s$.
    \end{proof}

\subsubsection{Proof of Lemma \ref{lem:descent_inner}}
\label{app:descent_inner}

    \begin{proof}
    For analysis, we introduce two auxiliary variables $\tau_{t,k}$ and $\bar{\tau}_{t,k}$, where $\tau_{t,k}^{(i)}\coloneqq \gamma\y_{t,k+1}^{(i)}$ for any $i \in\S_{t,k}$, $\bar{\tau}_{t,k}^{(i)}\coloneqq \gamma\by_{t,k+1}^{(i)}$ for any $i \in [n]$. {Before delving into the formal proof of Lemma \ref{lem:descent_inner}, we briefly highlight its role: it provides a variation bound for stochastic gradients under block-coordinate updates, which is the central technical novelty enabling us to prove parallel mini-batch speedup. The proof proceeds in three steps: (i) decomposing the variation of block-coordinate updates, and (ii) bounding the dependence on different batches.}

\paragraph*{{Step 1: Decomposing the variation of block-coordinate updates.}}
    By definition, we have
    \label{proof:descent_inner}
        \begin{align}
            &L_{\gamma}(\u_{t,k+1},\s_{t,k+1},\y;\u_{t,0},\s_{t,0}) - L_{\gamma}(\u,\s,\by_{t,k+1};\u_{t,0},\s_{t,0}) \nonumber\\
            &= \frac{1}{n}\sum_{i=1}^n \left(\y^{(i)}g_i(\u_{t,k+1}, \s_{t,k+1}^{(i)}) - f_i^*(\y^{(i)})\right) + \frac{1}{2\gamma}\Norm{\u_{t,k+1}-\u_{t,0}}^2 + \frac{1}{2n\gamma}\Norm{\s_{t,k+1}-\s_{t,0}}^2\nonumber\\
            &\quad - \frac{1}{n}\sum_{i=1}^n \left(\by_{t,k+1}^{(i)}g_i(\u, \s) - f_i^*(\by_{t,k+1}^{(i)})\right) - \frac{1}{2\gamma}\Norm{\u-\u_{t,0}}^2 - \frac{1}{2n\gamma}\Norm{\s-\s_{t,0}}^2\nonumber\\
            &= \frac{1}{n}\sum_{i=1}^n \left(\y^{(i)}\left(g_i(\u_{t,k+1}, \s_{t,k+1}^{(i)})+\frac{1}{2\bar{\tau}_{t,k}^{(i)}}\Norm{\u_{t,k+1}-\u_{t,0}}^2+\frac{1}{2\bar{\tau}_{t,k}^{(i)}}(\s_{t,k+1}^{(i)}-\s_{t,0}^{(i)})^2\right) - f_i^*(\y^{(i)})\right) \nonumber\\
            &\quad - \frac{1}{n}\sum_{i=1}^n \left(\by_{t,k+1}^{(i)}\left(g_i(\u, \s)+\frac{1}{2\bar{\tau}_{t,k}^{(i)}}\Norm{\u-\u_{t,0}}^2+\frac{1}{2\bar{\tau}_{t,k}^{(i)}}(\s^{(i)}-\s_{t,0}^{(i)})^2\right) - f_i^*(\by_{t,k+1}^{(i)})\right) \nonumber\\
            &\quad + \frac{1}{2\gamma}\Norm{\u_{t,k+1}-\u_{t,0}}^2 - \frac{1}{n}\sum_{i=1}^n\frac{\y^{(i)}}{2\bar{\tau}_{t,k}^{(i)}}\Norm{\u_{t,k+1}-\u_{t,0}}^2 - \frac{1}{2\gamma}\Norm{\u-\u_{t,0}}^2 + \frac{1}{n}\sum_{i=1}^n\frac{\by_{t,k+1}^{(i)}}{2\bar{\tau}_{t,k}^{(i)}}\Norm{\u-\u_{t,0}}^2 \nonumber\\
            &\quad + \frac{1}{2n\gamma}\Norm{\s_{t,k+1}-\s_{t,0}}^2 - \frac{1}{n}\sum_{i=1}^n\frac{\y^{(i)}}{2\bar{\tau}_{t,k}^{(i)}}(\s_{t,k+1}^{(i)}-\s_{t,0}^{(i)})^2 - \frac{1}{2n\gamma}\Norm{\s-\s_{t,0}}^2 + \frac{1}{n}\sum_{i=1}^n\frac{\by_{t,k+1}^{(i)}}{2\bar{\tau}_{t,k}^{(i)}}(\s^{(i)}-\s_{t,0}^{(i)})^2 \nonumber\\
            &= \frac{1}{n}\sum_{i=1}^n  (\y^{(i)}-\by_{t,k+1}^{(i)})\prt{g_i(\u_{t,k+1}, \s_{t,k+1}^{(i)})+\frac{1}{2\bar{\tau}_{t,k}^{(i)}}\Norm{\u_{t,k+1}-\u_{t,0}}^2+\frac{1}{2\bar{\tau}_{t,k}^{(i)}}(\s_{t,k+1}^{(i)}-\s_{t,0}^{(i)})^2} \nonumber\\
            &\quad + \frac{1}{n}\sum_{i=1}^n\by_{t,k+1}^{(i)}\left(g_i(\u_{t,k+1}, \s_{t,k+1}^{(i)})+\frac{1}{2\bar{\tau}_{t,k}^{(i)}}\Norm{\u_{t,k+1}-\u_{t,0}}^2+\frac{1}{2\bar{\tau}_{t,k}^{(i)}}(\s_{t,k+1}^{(i)}-\s_{t,0}^{(i)})^2 \right.\nonumber\\
            &\qquad - \left.g_i(\u_{t,k}, \s_{t,k}^{(i)})-\frac{1}{2\bar{\tau}_{t,k}^{(i)}}\Norm{\u_{t,k}-\u_{t,0}}^2-\frac{1}{2\bar{\tau}_{t,k}^{(i)}}(\s_{t,k}^{(i)}-\s_{t,0}^{(i)})^2 \right) \nonumber\\
            &\quad + \frac{1}{n}\sum_{i=1}^n\by_{t,k+1}^{(i)}\left(g_i(\u_{t,k}, \s_{t,k}^{(i)})+\frac{1}{2\bar{\tau}_{t,k}^{(i)}}\Norm{\u_{t,k}-\u_{t,0}}^2+\frac{1}{2\bar{\tau}_{t,k}^{(i)}}(\s_{t,k}^{(i)}-\s_{t,0}^{(i)})^2 \right. \nonumber\\
            &\qquad - \left. g_i(\u, \s^{(i)})-\frac{1}{2\bar{\tau}_{t,k}^{(i)}}\Norm{\u-\u_{t,0}}^2-\frac{1}{2\bar{\tau}_{t,k}^{(i)}}(\s^{(i)}-\s_{t,0}^{(i)})^2\right) \nonumber\\
            &\quad + \frac{1}{2\gamma}\Norm{\u_{t,k+1}-\u_{t,0}}^2 - \frac{1}{n}\sum_{i=1}^n\frac{\y^{(i)}}{2\bar{\tau}_{t,k}^{(i)}}\Norm{\u_{t,k+1}-\u_{t,0}}^2 - \frac{1}{2\gamma}\Norm{\u-\u_{t,0}}^2 + \frac{1}{n}\sum_{i=1}^n\frac{\by_{t,k+1}^{(i)}}{2\bar{\tau}_{t,k}^{(i)}}\Norm{\u-\u_{t,0}}^2 \nonumber\\
            &\quad + \frac{1}{2n\gamma}\Norm{\s_{t,k+1}-\s_{t,0}}^2 - \frac{1}{n}\sum_{i=1}^n\frac{\y^{(i)}}{2\bar{\tau}_{t,k}^{(i)}}(\s_{t,k+1}^{(i)}-\s_{t,0}^{(i)})^2 - \frac{1}{2n\gamma}\Norm{\s-\s_{t,0}}^2 + \frac{1}{n}\sum_{i=1}^n\frac{\by_{t,k+1}^{(i)}}{2\bar{\tau}_{t,k}^{(i)}}(\s^{(i)}-\s_{t,0}^{(i)})^2 \nonumber\\
            &\quad- \frac{1}{n}\sum_{i=1}^{n}f_i^*(\y^{(i)}) + \frac{1}{n}\sum_{i=1}^{n}f_i^*(\by_{t,k+1}^{(i)}).
        \end{align}
        {Observe that for any $(\u,\s) \in (\U,\S)$, the function 
        $g_i(\u,\s^{(i)}) + \frac{1}{2\bar{\tau}_{t,k}^{(i)}}\Norm{\u-\u'}^2 + \frac{1}{2\bar{\tau}_{t,k}^{(i)}}(\s^{(i)}-\s'^{(i)})^2$ 
        is convex with respect to $\u$ and $\s^{(i)}$, for any fixed $(\u', \s') \in (\U, \S)$, since 
        $\frac{1}{\bar{\tau}_{t,k}^{(i)}} = \frac{1}{\gamma\by_{t,k+1}^{(i)}} \geq \rho$. This convexity enables us to apply a first-order approximation and derive the following bound:}
        \begin{align}
            &L_{\gamma}(\u_{t,k+1},\s_{t,k+1},\y;\u_{t,0},\s_{t,0}) - L_{\gamma}(\u,\s,\by_{t,k+1};\u_{t,0},\s_{t,0}) \nonumber\\
            &\leq \underbrace{\frac{1}{n}\sum_{i=1}^n  (\y^{(i)}-\by_{t,k+1}^{(i)})\prt{g_i(\u_{t,k+1}, \s_{t,k+1}^{(i)})+\frac{1}{2\bar{\tau}_{t,k}^{(i)}}\Norm{\u_{t,k+1}-\u_{t,0}}^2+\frac{1}{2\bar{\tau}_{t,k}^{(i)}}(\s_{t,k+1}^{(i)}-\s_{t,0}^{(i)})^2} }_{\textrm{I}} \nonumber\\
            &\quad \underbrace{+ \frac{1}{n}\sum_{i=1}^n\by_{t,k+1}^{(i)}\left(g_i(\u_{t,k+1}, \s_{t,k+1}^{(i)})+\frac{1}{2\bar{\tau}_{t,k}^{(i)}}\Norm{\u_{t,k+1}-\u_{t,0}}^2+\frac{1}{2\bar{\tau}_{t,k}^{(i)}}(\s_{t,k+1}^{(i)}-\s_{t,0}^{(i)})^2 \right.}_{\textrm{III}} \nonumber\\
            &\qquad \underbrace{- \left.g_i(\u_{t,k}, \s_{t,k}^{(i)})-\frac{1}{2\bar{\tau}_{t,k}^{(i)}}\Norm{\u_{t,k}-\u_{t,0}}^2-\frac{1}{2\bar{\tau}_{t,k}^{(i)}}(\s_{t,k}^{(i)}-\s_{t,0}^{(i)})^2 \right)}_{\textrm{III}} \nonumber\\
            &\quad \underbrace{+ \frac{1}{n}\sum_{i=1}^n\by_{t,k+1}^{(i)}\brk{\inner{G_{t,k,1}^{(i)}+\frac{1}{\bar{\tau}_{t,k}^{(i)}}(\u_{t,k}-\u_{t,0})}{\u_{t,k}-\u} + \prt{G_{t,k,2}^{(i)}+\frac{1}{\bar{\tau}_{t,k}^{(i)}}(\s_{t,k}^{(i)}-\s_{t,0}^{(i)})}(\s_{t,k}^{(i)}-\s^{(i)})}}_{\textrm{II}}   \nonumber\\
            &\quad \underbrace{- \frac{1}{n}\sum_{i=1}^n \frac{\y^{(i)}}{2\bar{\tau}_{t,k}^{(i)}}\Norm{\u_{t,k+1}-\u_{t,0}}^2}_{\textrm{I}} + \frac{1}{n}\sum_{i=1}^n\frac{\by_{t,k+1}^{(i)}}{2\bar{\tau}_{t,k}^{(i)}}\Norm{\u-\u_{t,0}}^2 \underbrace{- \frac{1}{n}\sum_{i=1}^n\frac{\by_{t,k+1}^{(i)}}{2\bar{\tau}_{t,k}^{(i)}}\Norm{\u_{t,k+1}-\u_{t,0}}^2}_{\textrm{III}} \nonumber\\ 
            &\quad \underbrace{+ \frac{1}{n}\sum_{i=1}^n\frac{\by_{t,k+1}^{(i)}}{2\bar{\tau}_{t,k}^{(i)}}\Norm{\u_{t,k+1}-\u_{t,0}}^2}_{\textrm{I}} - \frac{1}{n}\sum_{i=1}^n\frac{\by_{t,k+1}^{(i)}}{2\bar{\tau}_{t,k}^{(i)}}\Norm{\u_{t,k}-\u_{t,0}}^2 \underbrace{+ \frac{1}{n}\sum_{i=1}^n\frac{\by_{t,k+1}^{(i)}}{2\bar{\tau}_{t,k}^{(i)}}\Norm{\u_{t,k}-\u_{t,0}}^2}_{\textrm{III}} \nonumber\\
            &\quad \underbrace{- \frac{1}{n}\sum_{i=1}^n \frac{\y^{(i)}}{2\bar{\tau}_{t,k}^{(i)}}(\s_{t,k+1}^{(i)}-\s_{t,0}^{(i)})^2}_{\textrm{I}} + \frac{1}{n}\sum_{i=1}^n\frac{\by_{t,k+1}^{(i)}}{2\bar{\tau}_{t,k}^{(i)}}(\s^{(i)}-\s_{t,0}^{(i)})^2 \underbrace{- \frac{1}{n}\sum_{i=1}^n\frac{\by_{t,k+1}^{(i)}}{2\bar{\tau}_{t,k}^{(i)}}(\s_{t,k+1}^{(i)}-\s_{t,0}^{(i)})^2}_{\textrm{III}} \nonumber\\ 
            &\quad \underbrace{+ \frac{1}{n}\sum_{i=1}^n\frac{\by_{t,k+1}^{(i)}}{2\bar{\tau}_{t,k}^{(i)}}(\s_{t,k+1}^{(i)}-\s_{t,0}^{(i)})^2}_{\textrm{I}} - \frac{1}{n}\sum_{i=1}^n\frac{\by_{t,k+1}^{(i)}}{2\bar{\tau}_{t,k}^{(i)}}(\s_{t,k}^{(i)}-\s_{t,0}^{(i)})^2 \underbrace{+ \frac{1}{n}\sum_{i=1}^n\frac{\by_{t,k+1}^{(i)}}{2\bar{\tau}_{t,k}^{(i)}}(\s_{t,k}^{(i)}-\s_{t,0}^{(i)})^2}_{\textrm{III}} \nonumber\\
            &\quad + \frac{1}{2\gamma}\Norm{\u_{t,k+1}-\u_{t,0}}^2 - \frac{1}{2\gamma}\Norm{\u-\u_{t,0}}^2 + \frac{1}{2n\gamma}\Norm{\s_{t,k+1}-\s_{t,0}}^2 - \frac{1}{2n\gamma}\Norm{\s-\s_{t,0}}^2 \nonumber\\
            &\quad - \frac{1}{n}\sum_{i=1}^{n}f_i^*(\y^{(i)}) + \frac{1}{n}\sum_{i=1}^{n}f_i^*(\by_{t,k+1}^{(i)}).
        \end{align}
        Noticing
        \begin{align}
            \textrm{I} &= \frac{1}{n}\sum_{i=1}^n  (\y^{(i)}-\by_{t,k+1}^{(i)})g_i(\u_{t,k+1}, \s_{t,k+1}^{(i)}) \nonumber\\
            &= \frac{1}{n}\sum_{i=1}^n (\y^{(i)}-\by_{t,k+1}^{(i)})\hat{g}_{t,k}^{(i)}({\B}_{t,k}^{(i)}) + \frac{1}{n}\sum_{i=1}^n (\y^{(i)}-\by_{t,k+1}^{(i)})\prt{g_i(\u_{t,k+1}, \s_{t,k+1}^{(i)})-\hat{g}_{t,k}^{(i)}({\B}_{t,k}^{(i)})} \nonumber\\
            \textrm{II} &= \inner{\frac{1}{S}\sum_{i\in\S_{t,k}}\y_{t,k+1}^{(i)}\prt{\hat{G}_{t,k, 1}^{(i)}(\tilde{\B}_{t,k}^{(i)})+\frac{1}{\tau_{t,k}^{(i)}}(\u_{t,k}-\u_{t,0})} - \frac{1}{n}\sum_{i=1}^n\by_{t,k+1}^{(i)}\prt{G_{t,k,1}^{(i)}+\frac{1}{\bar{\tau}_{t,k}^{(i)}}(\u_{t,k}-\u_{t,0})}}{\u-\u_{t,k+1}} \nonumber\\
            &\quad - \frac{1}{S}\sum_{i\in\S_{t,k}}\inner{\y_{t,k+1}^{(i)}\prt{\hat{G}_{t,k,1}^{(i)}(\tilde{\B}_{t,k}^{(i)})+\frac{1}{\tau_{t,k}^{(i)}}(\u_{t,k}-\u_{t,0})}}{\u-\u_{t,k+1}} \nonumber\\
            &\quad + \frac{1}{n}\sum_{i=1}^n\inner{\by_{t,k+1}^{(i)}\prt{G_{t,k,1}^{(i)}+\frac{1}{\bar{\tau}_{t,k}^{(i)}}(\u_{t,k}-\u_{t,0})}}{\u_{t,k}-\u_{t,k+1}} \nonumber\\
            &\quad + \frac{1}{n}\sum_{i=1}^{n}{\by_{t,k+1}^{(i)}({\hat{G}_{t,k,2}^{(i)}(\tilde{\B}_{t,k}^{(i)}) - G_{t,k,2}^{(i)}})}(\s^{(i)}-\bs_{t,k+1}^{(i)}) \nonumber\\
            &\quad - \frac{1}{n}\sum_{i=1}^{n}{\by_{t,k+1}^{(i)}\prt{\hat{G}_{t,k,2}^{(i)}(\tilde{\B}_{t,k}^{(i)})+\frac{1}{\bar{\tau}_{t,k}^{(i)}}(\s_{t,k}^{(i)}-\s_{t,0}^{(i)})}}(\s^{(i)}-\bs_{t,k+1}^{(i)}) \nonumber\\
            &\quad + \frac{1}{n}\sum_{i=1}^n{\by_{t,k+1}^{(i)}\prt{G_{t,k,2}^{(i)}+\frac{1}{\bar{\tau}_{t,k}^{(i)}}(\s_{t,k}^{(i)}-\s_{t,0}^{(i)})}}(\s_{t,k}^{(i)}-\bs_{t,k+1}^{(i)}) \nonumber\\
            \textrm{III} &= \frac{1}{n}\sum_{i=1}^n\by_{t,k+1}^{(i)}\prt{g_i(\u_{t,k+1}, \s_{t,k+1}^{(i)}) - g_i(\u_{t,k}, \s_{t,k}^{(i)})},
        \end{align}
        and replacing ${\tau}_{t,k}^{(i)}=\gamma\y_{t,k+1}^{(i)}$ and $\bar{\tau}_{t,k}^{(i)}=\gamma\by_{t,k+1}^{(i)}$, it follows that
        \begin{align}
        \label{eq:eq1}
            &L_{\gamma}(\u_{t,k+1}, \s_{t,k+1}, \y;\u_{t,0},\s_{t,0}) - L_{\gamma}(\u, \s, \by_{t,k+1};\u_{t,0},\s_{t,0}) \nonumber\\
            &\leq \underbrace{\frac{1}{n}\sum_{i=1}^n (\y^{(i)}-\by_{t,k+1}^{(i)})\hat{g}_{t,k}^{(i)}({\B}_{t,k}^{(i)}) - \frac{1}{n}\sum_{i=1}^{n}f_i^*(\y^{(i)}) + \frac{1}{n}\sum_{i=1}^{n}f_i^*(\by_{t,k+1}^{(i)})}_{\C_1} \nonumber\\
            &\quad + \frac{1}{n}\sum_{i=1}^n (\y^{(i)}-\by_{t,k+1}^{(i)})\prt{g_i(\u_{t,k+1}, \s_{t,k+1}^{(i)})-\hat{g}_{t,k}^{(i)}({\B}_{t,k}^{(i)})} + \frac{1}{n}\sum_{i=1}^n\by_{t,k+1}^{(i)}\prt{g_i(\u_{t,k+1}, \s_{t,k+1}^{(i)}) - g_i(\u_{t,k}, \s_{t,k}^{(i)})} \nonumber\\
            &\quad + \inner{\frac{1}{S}\sum_{i\in\S_{t,k}}\prt{\y_{t,k+1}^{(i)}\hat{G}_{t,k,1}^{(i)}(\tilde{\B}_{t,k}^{(i)})+\frac{1}{\gamma}(\u_{t,k}-\u_{t,0})} - \frac{1}{n}\sum_{i=1}^n\prt{\by_{t,k+1}^{(i)}G_{t,k,1}^{(i)}+\frac{1}{\gamma}(\u_{t,k}-\u_{t,0})}}{\u-\u_{t,k+1}} \nonumber\\
            &\quad \underbrace{- \frac{1}{S}\sum_{i\in\S_{t,k}}\inner{\y_{t,k+1}^{(i)}{\hat{G}_{t,k,1}^{(i)}(\tilde{\B}_{t,k}^{(i)})+\frac{1}{\gamma}(\u_{t,k}-\u_{t,0})}}{\u-\u_{t,k+1}}}_{\C_2} \nonumber\\
            &\quad + \frac{1}{n}\sum_{i=1}^{n}\by_{t,k+1}^{(i)}(\hat{G}_{t,k,2}^{(i)}(\tilde{\B}_{t,k}^{(i)}) - G_{t,k,2}^{(i)})(\s^{(i)}-\bs_{t,k+1}^{(i)}) \nonumber\\
            &\quad \underbrace{- \frac{1}{n}\sum_{i=1}^{n}\prt{\by_{t,k+1}^{(i)}{\hat{G}_{t,k,2}^{(i)}(\tilde{\B}_{t,k}^{(i)})+\frac{1}{\gamma}(\s_{t,k}^{(i)}-\s_{t,0}^{(i)})}}(\s^{(i)}-\bs_{t,k+1}^{(i)})}_{\C_3} \nonumber\\
            &\quad + \frac{1}{n}\sum_{i=1}^n\inner{\by_{t,k+1}^{(i)}{G_{t,k,1}^{(i)}+\frac{1}{\gamma}(\u_{t,k}-\u_{t,0})}}{\u_{t,k}-\u_{t,k+1}} + \frac{1}{n}\sum_{i=1}^n\prt{\by_{t,k+1}^{(i)}{G_{t,k,2}^{(i)}+\frac{1}{\gamma}(\s_{t,k}^{(i)}-\s_{t,0}^{(i)})}}(\s_{t,k}^{(i)}-\bs_{t,k+1}^{(i)}) \nonumber\\
            &\quad + \frac{1}{2n\gamma}\sum_{i=1}^n\Norm{\u_{t,k+1}-\u_{t,0}}^2 - \frac{1}{2n\gamma}\sum_{i=1}^n\Norm{\u_{t,k}-\u_{t,0}}^2 + \frac{1}{2n\gamma}\sum_{i=1}^n(\bs_{t,k+1}^{(i)}-\s_{t,0}^{(i)})^2 - \frac{1}{2n\gamma}\sum_{i=1}^n(\s_{t,k}^{(i)}-\s_{t,0}^{(i)})^2.
        \end{align}
        {For $\C_1$, by applying Lemma~\ref{lem:proximal_update} followed by Lemma~\ref{lem:y_bound}, we obtain:}
        \begin{align}
            \C_1 &\mathop{\leq}_{\text{Lemma } \ref{lem:proximal_update}} \brk{\frac{1}{2n\alpha_t}\prt{\Norm{\y-\y_{t,k}}^2 - \Norm{\y-\by_{t,k+1}}^2 - \Norm{\by_{t,k+1}-\y_{t,k}}^2}} \nonumber\\
            &\mathop{\leq}_{\text{Lemma } \ref{lem:y_bound}}  \frac{1}{2\alpha_t S}\prt{\Norm{\y-\y_{t,k}}^2 - \Norm{\y-\y_{t,k+1}}^2} + \frac{\lambda_1}{2\alpha_t S}\prt{\Norm{\y-\hat{\y}_{t,k}}^2 - \Norm{\y-\hat{\y}_{t,k+1}}^2} \nonumber\\
            &\qquad - \frac{1}{2\alpha_t n}(1-\frac{1}{\lambda_1 S})\Norm{\by_{t,k+1}-\y_{t,k}}^2.
        \end{align}
        {For $\C_2$, we define the auxiliary function:
        \begin{align}
            \phi(\u) \coloneqq \left\langle \frac{1}{S} \sum_{i \in \S_{t,k}} \y_{t,k+1}^{(i)} \left( \hat{G}_{t,k,1}^{(i)}(\tilde{\B}_{t,k}^{(i)}) + \frac{1}{\tau_{t,k}^{(i)}}(\u_{t,k} - \u_{t,0}) \right), \u \right\rangle.
        \end{align}
        Substituting $\tau_{t,k}^{(i)} = \gamma \y_{t,k+1}^{(i)}$, this becomes
        \begin{align}
            \phi(\u) = \left\langle \frac{1}{S} \sum_{i \in \S_{t,k}} \y_{t,k+1}^{(i)} \hat{G}_{t,k,1}^{(i)}(\tilde{\B}_{t,k}^{(i)}) + \frac{1}{\gamma} (\u_{t,k} - \u_{t,0}), \u \right\rangle.
        \end{align}
        Since $\phi(\cdot)$ is convex, applying Lemma~\ref{lem:proximal_update} yields:
        \begin{align}
            \C_2 \leq \frac{1}{2\eta_t} \left( \Norm{\u - \u_{t,k}}^2 - \Norm{\u - \u_{t,k+1}}^2 \right) - \frac{1}{2\eta_t} \Norm{\u_{t,k+1} - \u_{t,k}}^2.
        \end{align}}
        For $\C_3$, following the similar manner with $\C_2$, for any $i$ in $\S_{t,k}$, we can get
        \begin{align}
            \C_3 \mathop{\leq}_{\text{Lemma }\ref{lem:proximal_update}} \frac{1}{2n\beta_t}\prt{\Norm{\s-\s_{t,k}}^2-\Norm{\s-\bs_{t,k+1}}^2} - \frac{1}{2n\beta_{t}}\Norm{\bs_{t,k+1}-\s_{t,k}}^2.
        \end{align}        
        {Substituting the above inequalities into (\ref{eq:eq1}) and taking expectation with respect to $\mathcal{F}_{t,k}$ yields:}
        \begin{align}
        \label{eq:eq2}
            &\E\brk{L_{\gamma}(\u_{t,k+1},\s_{t,k+1},\y;\u_{t,0},\s_{t,0}) - L_{\gamma}(\u,\s,\by_{t,k+1};\u_{t,0},\s_{t,0})} \nonumber\\
            &\leq \frac{1}{2\alpha_t S}\prt{\Norm{\y-\y_{t,k}}^2 - \E\Norm{\y-\y_{t,k+1}}^2} + \frac{\lambda_1}{2\alpha_t S}\prt{\Norm{\y-\hat{\y}_{t,k}}^2 - \E\Norm{\y-\hat{\y}_{t,k+1}}^2} \nonumber\\
            &\quad - \frac{1}{2\alpha_t n}(1-\frac{1}{\lambda_1 S})\E\Norm{\by_{t,k+1}-\y_{t,k}}^2 \nonumber\\
            &\quad + \frac{1}{2\eta_{t}}\prt{\Norm{\u-\u_{t,k}}^2-\E\Norm{\u-\u_{t,k+1}}^2} - \frac{1}{2\eta_{t}}\E\Norm{\u_{t,k+1}-\u_{t,k}}^2 \nonumber\\
            &\quad + \underbrace{\frac{1}{2n\beta_t}\prt{\Norm{\s-\s_{t,k}}^2-\E\Norm{\s-\bs_{t,k+1}}^2} - \frac{1}{2n\beta_{t}}\E\Norm{\bs_{t,k+1}-\s_{t,k}}^2}_{\D_1} \nonumber\\
            &\quad + \underbrace{\frac{1}{n}\sum_{i=1}^n \E\brk{(\y^{(i)}-\by_{t,k+1}^{(i)})\prt{g_i(\u_{t,k+1}, \s_{t,k+1}^{(i)})-\hat{g}_{t,k}^{(i)}({\B}_{t,k}^{(i)})}}}_{\D_2} \nonumber\\
            &\quad + \underbrace{\frac{1}{n}\sum_{i=1}^n\E\brk{\by_{t,k+1}^{(i)}\prt{g_i(\u_{t,k+1}, \s_{t,k+1}^{(i)}) - g_i(\u_{t,k}, \s_{t,k}^{(i)})}}}_{\D_3} \nonumber\\
            &\quad + \underbrace{\frac{1}{n}\sum_{i=1}^n\E{\inner{\by_{t,k+1}^{(i)}{G_{t,k,1}^{(i)}}}{\u_{t,k}-\u_{t,k+1}}} + {\frac{1}{n}\sum_{i=1}^n\E\inner{\by_{t,k+1}^{(i)}{G_{t,k,2}^{(i)}}}{\s_{t,k}^{(i)}-\bs_{t,k+1}^{(i)}}}}_{\D_4} \nonumber\\
            &\quad + \underbrace{\E\brk{\frac{1}{\gamma}\inner{\u_{t,k}-\u_{t,0}}{\u_{t,k}-\u_{t,k+1}}} + \E\brk{\frac{1}{2\gamma}\prt{\Norm{\u_{t,k+1}-\u_{t,0}}^2-\Norm{\u_{t,k}-\u_{t,0}}^2}}}_{\D_5} \nonumber\\
            &\quad + \underbrace{\E\brk{\frac{1}{n\gamma}\inner{\s_{t,k}-\s_{t,0}}{\s_{t,k}-\bs_{t,k+1}}} + \E\brk{\frac{1}{2n\gamma}\prt{\Norm{\bs_{t,k+1}-\s_{t,0}}^2-\Norm{\s_{t,k}-\s_{t,0}}^2}}}_{\D_6} \nonumber\\
            &\quad + \underbrace{\E\inner{\frac{1}{S}\sum_{i\in\S_{t,k}}\y_{t,k+1}^{(i)}\hat{G}_{t,k,1}^{(i)}(\tilde{\B}_{t,k}^{(i)}) - \frac{1}{n}\sum_{i=1}^n\by_{t,k+1}^{(i)}G_{t,k,1}^{(i)}}{\u-\u_{t,k+1}}}_{\D_7} \nonumber\\
            &\quad + \underbrace{\frac{1}{n}\sum_{i=1}^{n}\E\brk{\by_{t,k+1}^{(i)}\prt{\hat{G}_{t,k,2}^{(i)}(\tilde{\B}_{t,k}^{(i)}) - G_{t,k,2}^{(i)}}(\s^{(i)}-\bs_{t,k+1}^{(i)})}}_{\D_8}.
        \end{align}

        \paragraph*{{Step 2: Bounding the dependence on different batches.}}
        For $\mathcal{\D}_1$, notice that $\E\brk{(\s^{(i)}-\bs_{t,k+1}^{(i)})^2}=\frac{S}{n}(\s^{(i)}-\bs_{t,k+1}^{(i)})^2+\frac{n-S}{n}(\s^{(i)}-\s_{t,k}^{(i)})^2$ for any $i \in [n]$. Then, it holds that
        \begin{align}
            \mathcal{\D}_1 \leq \frac{1}{2S\beta_t}\prt{\Norm{\s-\s_{t,k}}^2-\E\Norm{\s-\s_{t,k+1}}^2} - \frac{1}{2n\beta_{t}}\E\Norm{\bs_{t,k+1}-\s_{t,k}}^2.
        \end{align}
        {For $\mathcal{\D}_2$, following a similar manner we show in (\ref{eq:D2_5}), for some $\lambda_2, \lambda_3, \lambda_4 > 0$, we have}
        \begin{align}
            \D_2 &\leq \frac{C_g^2}{\lambda_4}\E\Norm{\u_{t,k+1}-\u_{t,k}}^2 + \frac{S C_g^2}{\lambda_4 n^2}\E\Norm{\bs_{t,k+1}-\s_{t,k}}^2 + 4\lambda_4 C_f^2 + \frac{\lambda_2}{2n}\prt{\E\Norm{\y-\tilde{\y}_{t,k}}^2 - \E\Norm{\y-\tilde{\y}_{t,k+1}}^2} \nonumber\\
            &\quad + \frac{\sigma_0^2}{2B\lambda_2} + \frac{\lambda_3\sigma_0^2}{2B} + \frac{\E\Norm{\by_{t,k+1}-\y_{t,k}}^2}{4\lambda_3 n}.
        \end{align}
        For $\mathcal{\D}_3$, by invoking Assumption \ref{asm:lip}, for some $\lambda_5, \lambda_6 > 0$, we have
        \begin{align}
            \D_3 &\leq C_f C_g \E\Norm{\u_{t,k+1}-\u_{t,k}} + \frac{S C_f C_g}{n^2}\E\brk{\sum_{i=1}^n\abs{\bs_{t,k+1}^{(i)}-\s_{t,k}^{(i)}}} \nonumber\\
            &\leq \frac{\lambda_5}{2\eta_{t}}\E\Norm{\u_{t,k+1}-\u_{t,k}}^2 + \frac{\eta_{t} C_f^2 C_g^2}{2\lambda_5} + \frac{S\lambda_6}{2n^2\beta_{t}}\E\Norm{\bs_{t,k+1}-\s_{t,k}}^2 + \frac{S\beta_{t} C_f^2 C_g^2}{2n\lambda_6}.
        \end{align}
        For $\mathcal{\D}_4$, same to the derivations for $\mathcal{\D}_3$, it holds that 
        \begin{align}
            \D_4 \leq \frac{\lambda_5}{2\eta_{t}}\E\Norm{\u_{t,k+1}-\u_{t,k}}^2 + \frac{\eta_{t} C_f^2 C_g^2}{2\lambda_5} + \frac{\lambda_6}{2n\beta_{t}}\E\Norm{\bs_{t,k+1}-\s_{t,k}}^2 + \frac{\beta_{t} C_f^2 C_g^2}{2\lambda_6}.
        \end{align}
         For $\D_5$, noticing
        \begin{align}
            \frac{1}{2}\prt{\Norm{\u_{t,k+1}-\u_{t,0}}^2-\Norm{\u_{t,k}-\u_{t,0}}^2} \leq -\inner{\u_{t,k+1}-\u_{t,0}}{\u_{t,k}-\u_{t,k+1}},
        \end{align}
        then we have
        \begin{align}
            \D_5 &\leq \E\brk{\frac{1}{\gamma}\inner{\u_{t,k}-\u_{t,0}}{\u_{t,k}-\u_{t,k+1}}} - \E\brk{\frac{1}{\gamma}\inner{\u_{t,k+1}-\u_{t,0}}{\u_{t,k}-\u_{t,k+1}}} \nonumber\\
            &= \frac{1}{\gamma}\E\Norm{\u_{t,k}-\u_{t,k+1}}^2.
        \end{align}
        In the same manner as for $\D_5$, we obtain $\D_6 \leq \frac{1}{n\gamma}\E\Norm{\s_{t,k}-\bs_{t,k+1}}^2$. Next, applying Lemma 4 in \citet{juditsky2011solving}(as well as Lemma 7 in \citet{zhang2020optimal}) on $\D_7$, we have
        \begin{align}
            \D_7 &= -\E\inner{\frac{1}{S}\sum_{i\in\S_{t,k}}\y_{t,k+1}^{(i)}\prt{\hat{G}_{t,k,1}^{(i)}(\tilde{\B}_{t,k}^{(i)})+\frac{1}{\tau_{t,k}}(\u_{t,k}-\u_{t,0})} - \frac{1}{n}\sum_{i=1}^n\by_{t,k+1}^{(i)}\prt{G_{t,k,1}^{(i)}+\frac{1}{\bar{\tau}_{t,k}}(\u_{t,k}-\u_{t,0})}}{\u_{t,k+1}} \nonumber\\
            &\leq \frac{\eta_{t}C_f^2\sigma_1^2}{B} + \frac{\eta_{t}\delta^2}{S}.
        \end{align}
        Finally, for $\D_8$, similar to $\D_7$, it follows that
        \begin{align}
            \D_8 &= \frac{1}{n}\sum_{i=1}^{n}\E\inner{\by_{t,k+1}^{(i)}\prt{\hat{G}_{t,k,2}^{(i)}(\tilde{\B}_{t,k}^{(i)}) - G_{t,k,2}^{(i)}}}{\s^{(i)}-\bs_{t,k+1}^{(i)}} \nonumber\\
            &= \frac{1}{n}\sum_{i=1}^{n}\E\inner{\by_{t,k+1}^{(i)}\prt{\hat{G}_{t,k,2}^{(i)}(\tilde{\B}_{t,k}^{(i)}) - G_{t,k,2}^{(i)}}}{-\bs_{t,k+1}^{(i)}} \nonumber\\
            &\leq \frac{\beta_{t}C_f^2\sigma_2^2}{B}.
        \end{align}
        {By setting $\lambda_1 = 1+\frac{1}{S}$, $\lambda_2 = \frac{n}{S\alpha_t}$, $\lambda_3 = \alpha_t$, $\lambda_4 = 8C_g^2 \max\{\eta_t, \beta_t\}$, and $\lambda_5 = \lambda_6 = \frac{1}{8}$, and substituting the bounds of $\mathcal{D}_1$ through $\mathcal{D}_8$ into inequality~(\ref{eq:eq2}), we obtain the desired result.}
    \end{proof}

\subsubsection{Proof of Lemma \ref{lem:proximal_error}}
\label{app:proximal_error}

    \begin{proof}
        For any $t\geq 0$, by invoking Lemma \ref{lem:descent_inner}, choosing $\u=\u^\dagger_t,\s=\s^\dagger_t$, and summing from $k=0$ to $K_t-1$ while taking expectation over $\F_{t,0}$, it holds that
        \begin{align}
            &\sum_{k=0}^{K_t-1} \E\brk{L_{\gamma}(\u_{t,k+1},\s_{t,k+1},\y;\u_{t,0},\s_{t,0})-L_{\gamma}(\u^{\dagger}_t,\s^{\dagger}_t,\by_{t,k+1};\u_{t,0},\s_{t,0})} \nonumber\\
            % &\leq \frac{1}{2\eta_t}{\Norm{\u^\dagger_t-\u_{t,0}}^2} + \frac{1}{2S\beta_t}\Norm{\s^{\dagger}_{t}-\s_{t,0}}^2 + \frac{1}{2\alpha_t S}{\Norm{\y-\y_{t,0}}^2} + \frac{\lambda_1}{2\alpha_t S}{\Norm{\y-\hy_{t,0}}^2} + \frac{\lambda_2}{2n}{\Norm{\y-\ty_{t,0}}^2} \nonumber\\
            % &\quad-\prt{\frac{1}{2\eta_t}-\frac{C_g^2}{\lambda_4}-\frac{\lambda_5}{\eta_t}-\frac{1}{\gamma}}\sum_{t=0}^{K_t-1}\E\Norm{\u_{t,k+1}-\u_{t,k}}^2 - \prt{\frac{1}{2\alpha_t n}-\frac{1}{2\alpha_t n\lambda_1 S}-\frac{1}{4\lambda_3 n}}\sum_{t=0}^{K_t-1}\E\Norm{\by_{t,k+1}-\y_{t,k}}^2 \nonumber\\
            % &\quad- \prt{\frac{1}{2n\beta_{t}}-\frac{S C_g^2}{\lambda_4 n^2}-\frac{S\lambda_6}{2n^2\beta_{t}}-\frac{\lambda_6}{2n\beta_{t}}-\frac{1}{n\gamma}}\sum_{t=0}^{K_t-1}\E\Norm{\bs_{t,k+1}-\s_{t,k}}^2 \nonumber\\
            % &\quad+ K_t\prt{4\lambda_4 C_f^2 + \frac{\sigma_0^2}{2B\lambda_2} + \frac{\lambda_3\sigma_0^2}{2B} + \frac{\eta_t C_f^2 C_g^2}{\lambda_5} + \frac{\eta_t C_f^2 \sigma_1^2}{B} + \frac{\eta_t \delta^2}{S} + \frac{S\beta_{t} C_f^2 C_g^2}{2n\lambda_6} + \frac{\beta_{t} C_f^2 C_g^2}{2\lambda_6} + \frac{\beta_{t}C_f^2\sigma_2^2}{B}} \nonumber\\
            &\leq \frac{1}{2\eta_t}{\Norm{\u^\dagger_t-\u_{t,0}}^2} + \frac{1}{2S\beta_t}\Norm{\s^{\dagger}_{t}-\s_{t,0}}^2 + \frac{1}{\alpha_t S}\prt{\Norm{\y-\y_{t,0}}^2 + \Norm{\y-\hy_{t,0}}^2 + \Norm{\y-\ty_{t,0}}^2} \nonumber\\
            &\quad+ K_t\prt{64\Omega_t C_g^2 C_f^2 + \frac{S\alpha_t\sigma_0^2}{2Bn} + \frac{\alpha_t\sigma_0^2}{2B} + \frac{\eta_t C_f^2 \sigma_1^2}{B} + \frac{\beta_{t}C_f^2\sigma_2^2}{B} + \frac{\eta_t \delta^2}{S}}.          
        \end{align}
        Since $L_{\gamma}(\u,\s,\y;\u', \s')$ is convex on $\u$ and $\s$, and linear on $\y$, we have
        \begin{align}
            &\max_{\y\in\Y} L_{\gamma}(\bar{\u}_{t},\bar{\s}_{t},\y;\u_{t,0},\s_{t,0})-L_{\gamma}(\u^{\dagger}_t,\s^{\dagger}_t,\bar{\by}_{t};\u_{t,0},\s_{t,0}) \nonumber\\
            &\leq
            \max_{\y\in\Y}\frac{1}{K_t}\sum_{k=0}^{K_t-1} L_{\gamma}(\u_{t,k+1},\s_{t,k+1},\y;\u_{t,0},\s_{t,0})-L_{\gamma}(\u^{\dagger}_t,\s^{\dagger}_t,\by_{t,k+1};\u_{t,0},\s_{t,0}),
        \end{align}
        where $\bar{\u}_t=\frac{1}{K_t}\sum_{k=0}^{K_t-1}\u_{t,k+1}, \bar{\s}_t=\frac{1}{K_t}\sum_{k=0}^{K_t-1}\s_{t,k+1}, \bar{\by}_{t}=\frac{1}{T}\sum_{k=0}^{K_t-1}\by_{t,k+1}$. Next, for the left-hand side (LHS), we have
        \begin{align}
            &L_{\gamma}(\bar{\u}_{t},\bar{\s}_{t},\y;\u_{t,0},\s_{t,0})-L_{\gamma}(\u^{\dagger}_t,\s^{\dagger}_t,\bar{\by}_{t};\u_{t,0},\s_{t,0}) \nonumber\\
            &= \frac{1}{n}\sum_{i=1}^n \left(\y^{(i)}g_i(\bar{\u}_{t},\bar{\s}_{t}^{(i)}) - f_i^*(\y^{(i)})\right) + \frac{1}{2\gamma}\Norm{\bar{\u}_{t}-{\u}_{t,0}}^2 + \frac{1}{2n\gamma}\Norm{\bar{\s}_{t}-{\s}_{t,0}}^2 \nonumber\\
            &\quad- \frac{1}{n}\sum_{i=1}^n \left(\bar{\by}_{t}^{(i)}g_i(\u^{\dagger}_t, \s_t^{\dagger(i)}) - f_i^*(\bar{\by}_{t}^{(i)})\right) - \frac{1}{2\gamma}\Norm{\u^{\dagger}_t-{\u}_{t,0}}^2 - \frac{1}{2n\gamma}\Norm{\s^{\dagger}_t-{\s}_{t,0}}^2.
        \end{align}
        Choose $\y^{(i)} = \tilde{\y}^{(i)}_{t} \in \argmax_{\v}\{\v^{(i)}g_i(\bar{\u}_{t},\bar{\s}_{t}^{(i)}) - f_i^*(\v^{(i)})\}$, then we have $\y^{(i)}g_i(\bar{\u}_{t}, \bar{\s}_{t}^{(i)}) - f_i^*(\y^{(i)}) = f_i(g_i(\bar{\u}_{t}, \bar{\s}_{t}^{(i)}))$. By Fenchel-Young inequality, it holds that $\bar{\by}_{t}^{(i)}g_i(\u^{\dagger}_t, \s_t^{\dagger(i)}) - f_i^*(\bar{\s}_{t}^{(i)}) \leq f_i(g_i(\u^{\dagger}_t, \s_t^{\dagger(i)}))$. Thus, we have $\Phi_\gamma(\bar{\u}_{t},\bar{\s}_{t};{\u}_{t,0},{\s}_{t,0})-\Phi_\gamma(\u^{\dagger}_t,\s^{\dagger}_t;{\u}_{t,0},{\s}_{t,0}) \leq \max_{\y}\frac{1}{K_t}\sum_{k=0}^{K_t-1} L_{\gamma}(\u_{t,k+1},\s_{t,k+1},\y;\u_{t,0},\s_{t,0})-L_{\gamma}(\u^{\dagger}_t,\s^{\dagger}_t,\by_{t,k+1};\u_{t,0},\s_{t,0})$. Dividing both sides by $K_t$ completes the proof.
    \end{proof}

\subsubsection{Proof of Theorem \ref{thm:iteration_complexity_nvx}}
\label{app:iteration_complexity_nvx}

    \begin{proof}
        {We begin by invoking Lemma~\ref{lem:proximal_error}, which yields:}
        \begin{align}
            \E F(\u_{t+1},\s_{t+1}) &\leq  F(\u^\dagger_{t},\s^\dagger_{t}) - \frac{1}{2\gamma}\E\Norm{\u_{t+1}-\u_t}^2 - \frac{1}{2n\gamma}\E\Norm{\s_{t+1}-\s_t}^2 + \prt{\frac{1}{2\eta_{t} K_t}+\frac{1}{2\gamma}}\Norm{\u^{\dagger}_{t}-\u_t}^2 \nonumber\\
            &\quad + \prt{\frac{1}{2S\beta_t K_t}+\frac{1}{2n\gamma}}\Norm{\s^{\dagger}_{t}-\s_{t}}^2 + \frac{1}{\alpha_t SK_t}\prt{\Norm{\y-\y_{t,0}}^2 + \Norm{\y-\hy_{t,0}}^2 + \Norm{\y-\ty_{t,0}}^2}\nonumber\\
            &\quad + 64\Omega_t C_g^2 C_f^2 + \frac{S\alpha_t\sigma_0^2}{2Bn} + \frac{\alpha_t\sigma_0^2}{2B} + \frac{\eta_t C_f^2 \sigma_1^2}{B} + \frac{\beta_t C_f^2 \sigma_2^2}{B} + \frac{\eta_t \delta^2}{S}.
        \end{align}
        {Based on inequality (6) from \citet{rafique2022weakly}, the following estimate holds:}
        \begin{align*}
            &\Norm{\u^\dagger_t - \u_t}^2 + \frac{1}{n}\Norm{\s^\dagger_t - \s_t}^2 - \Norm{\u_{t+1} - \u_t}^2 - \frac{1}{n}\Norm{\s_{t+1} - \s_t}^2 \nonumber\\
            &\leq \frac{1}{3}\Norm{\u^\dagger_t - \u_t}^2 + \frac{1}{3n}\Norm{\s^\dagger_t - \s_t}^2 + 4 \Norm{\u^\dagger_t - \u_{t+1}}^2 + \frac{4}{n}\Norm{\s^\dagger_t - \s_{t+1}}^2, 
        \end{align*}
        then
        \begin{align}
            \E F(\u_{t+1}, \s_{t+1}) &\leq  F(\u^\dagger_{t}, \s^\dagger_{t}) + \prt{\frac{1}{2\eta_{t} K_t}+\frac{1}{6\gamma}}\Norm{\u^{\dagger}_{t}-\u_t}^2 + \frac{2}{\gamma}\E\Norm{\u^{\dagger}_{t}-\u_{t+1}}^2 \nonumber\\
            &\quad +\prt{\frac{1}{2S\beta_{t} K_t}+\frac{1}{6n\gamma}}\Norm{\s^{\dagger}_{t}-\s_t}^2 + \frac{2}{n\gamma}\E\Norm{\s^{\dagger}_{t}-\s_{t+1}}^2 \nonumber\\
            &\quad +\frac{3D_{\Y}^2}{\alpha_t SK_t} + 64\Omega_t C_g^2 C_f^2 + \frac{S\alpha_t\sigma_0^2}{2Bn} + \frac{\alpha_t\sigma_0^2}{2B} + \frac{\eta_t C_f^2 \sigma_1^2}{B} + \frac{\beta_t C_f^2 \sigma_2^2}{B} + \frac{\eta_t \delta^2}{S}.
        \end{align}
        {Next, we apply the strong convexity of the auxiliary potential function $\Phi_\gamma$, as established in Lemma~\ref{lem:weak_compositional}. Since $\Phi_\gamma(\u,\s;\u',\s')$ is $(\frac{1}{\gamma} - C_f\rho)$-strongly convex with respect to $\u$ and $(\frac{1}{n\gamma} - \frac{C_f\rho}{n})$-strongly convex with respect to $\s$, it follows that:}
        \begin{align}
            &(\frac{1}{2\gamma}-\frac{C_f\rho}{2})\E\Norm{\u^{\dagger}_{t}-\u_{t+1}}^2 + (\frac{1}{2n\gamma}-\frac{C_f\rho}{2n})\E\Norm{\s^{\dagger}_{t}-\s_{t+1}}^2 \nonumber\\
            &\leq \E\Phi_\gamma(\u_{t+1},\s_{t+1};\u_t,\s_t) - \Phi_\gamma(\u^\dagger_{t},\s^\dagger_{t};\u_t,\s_t) \nonumber\\
            &\leq \frac{1}{2\eta_{t} K_t}\E\Norm{\u^{\dagger}_{t}-\u_t}^2 + \frac{1}{2S\beta_{t} K_t}\E\Norm{\s^{\dagger}_{t}-\s_t}^2 + \frac{3D_{\Y}^2}{\alpha_t SK_t}\nonumber\\
            &\quad  + 64\Omega_t C_g^2 C_f^2 + \frac{S\alpha_t\sigma_0^2}{2Bn} + \frac{\alpha_t\sigma_0^2}{2B} + \frac{\eta_t C_f^2 \sigma_1^2}{B} + \frac{\beta_t C_f^2 \sigma_2^2}{B} + \frac{\eta_t \delta^2}{S}.
        \end{align}
        {Furthermore, using the assumption $\gamma \leq \frac{1}{2C_f\rho}$, the final descent bound simplifies to:}        \begin{align}
            \E F(\u_{t+1},\s_{t+1}) &\leq F(\u^\dagger_{t},\s^\dagger_{t}) + \prt{\frac{9}{2\eta_{t} K_t}+\frac{1}{6\gamma}}\Norm{\u^{\dagger}_{t}-\u_t}^2 + \prt{\frac{9}{2S\beta_{t} K_t}+\frac{1}{6n\gamma}}\Norm{\s^{\dagger}_{t}-\s_t}^2 \nonumber\\
            &\quad +\frac{27D_{\Y}^2}{\alpha_t SK_t} + 576\Omega_t C_g^2 C_f^2 + \frac{9S\alpha_t\sigma_0^2}{2Bn} + \frac{9\alpha_t\sigma_0^2}{2B} + \frac{9\eta_t C_f^2 \sigma_1^2}{B} + \frac{9\beta_t C_f^2 \sigma_2^2}{B} + \frac{9\eta_t \delta^2}{S}.
        \end{align}
        Since $F(\u^\dagger_{t},\s^\dagger_{t}) \leq F(\u_{t},\s_{t}) - \frac{1}{2\gamma}\Norm{\u^{\dagger}_{t}-\u_t}^2 - \frac{1}{2n\gamma}\Norm{\s^{\dagger}_{t}-\s_t}^2$, it follows that
        \begin{align}
            \E F(\u_{t+1},\s_{t+1}) &\leq F(\u_{t},\s_{t}) - \frac{1}{3\gamma}\Norm{\u^{\dagger}_{t}-\u_t}^2 + \frac{9}{2\eta_{t} K_t}\Norm{\u^{\dagger}_{t}-\u_t}^2 - \frac{1}{3n\gamma}\Norm{\s^{\dagger}_{t}-\s_t}^2 + \frac{9}{2S\beta_{t} K_t}\Norm{\s^{\dagger}_{t}-\s_t}^2   \nonumber\\
            &\quad +\frac{27D_{\Y}^2}{\alpha_t SK_t} + 576\Omega_t C_g^2 C_f^2 + \frac{9S\alpha_t\sigma_0^2}{2Bn} + \frac{9\alpha_t\sigma_0^2}{2B} + \frac{9\eta_t C_f^2 \sigma_1^2}{B} + \frac{9\beta_t C_f^2 \sigma_2^2}{B} + \frac{9\eta_t \delta^2}{S}.
        \end{align}
        {Summing both sides over $t = 0, 1, \ldots, T-1$ and taking expectation with respect to $\mathcal{F}_{0,0}$, we conclude:}
        \begin{align}
            &\frac{1}{T}\sum_{t=0}^{T-1}\prt{(\frac{1}{3\gamma}-\frac{9}{2\eta_t K_t})\E\Norm{\u^{\dagger}_{t}-\u_t}^2 + (\frac{1}{3n\gamma}-\frac{9}{2S\beta_t K_t})\E\Norm{\s^{\dagger}_{t}-\s_t}^2} \nonumber\\
            &\leq \frac{F(\u_{0},\s_{0}) - \E F(\u_{T},\s_{T})}{T} + \frac{27 D_{\Y}^2}{\alpha_t S K_t} + 576\Omega_t C_g^2 C_f^2\nonumber\\
            &\quad + \frac{9S\alpha_t\sigma_0^2}{2Bn} + \frac{9\alpha_t\sigma_0^2}{2B} + \frac{9\eta_t C_f^2 \sigma_1^2}{B} + \frac{9\beta_t C_f^2 \sigma_2^2}{B} + \frac{9\eta_t \delta^2}{S}.
        \end{align}
        {Finally, choosing the step sizes and inner-loop iteration numbers appropriately as:
        \begin{align}
            \eta_t&\asymp \min\{\frac{\epsilon^2}{C_g^2C_f^2},\frac{B\epsilon^2}{C_f^2\sigma_1^2},\frac{S\epsilon^2}{\delta^2},\gamma\epsilon^2\} \nonumber\\
            \beta_t&\asymp \min\{\frac{\epsilon^2}{C_g^2C_f^2},\frac{B\epsilon^2}{C_f^2\sigma_2^2}, \gamma\epsilon^2\} \nonumber\\ 
            \alpha_t &\asymp \frac{B\epsilon^2}{\sigma_0^2} \nonumber\\
            K_t &\asymp \max\{\frac{D_{\Y}^2\sigma_0^2}{BS\epsilon^4}, \frac{\gamma}{\eta_t}, \frac{n\gamma}{S\beta_t}\},
        \end{align}
        we arrive at the desired convergence rate:}
        \begin{align}
            \E\brk{\mathrm{dist}(\mathbf{0},\partial F(\u^{\dagger}_{\bar{t}},\s^{\dagger}_{\bar{t}}))^2} 
            &\leq \frac{1}{\gamma^2T}\sum_{t=0}^{T-1}\E\prt{\Norm{\u^{\dagger}_{t}-\u_t}^2+\frac{1}{n}\Norm{\s^{\dagger}_{t}-\s_t}^2} \nonumber\\
            &\leq \frac{6\brk{F(\u_{0},\s_{0}) - \E F(\u_{T},\s_{T})}}{\gamma T} + \order{\gamma^{-1}\epsilon^2},
        \end{align}
        where $\bar{t}$ is uniformly sampled from $\{0,1,\cdots,T-1\}$. Then we can make $\E\brk{\mathrm{dist}(\mathbf{0},\partial F(\u^{\dagger}_{\bar{t}},\s^{\dagger}_{\bar{t}}))} \leq \epsilon$ by choosing $T=\order{\frac{F(\u_{0},\s_{0})-\inf_{\u,\s} F(\u,\s)}{\epsilon^2}}$. The total iteration complexity would be 
        \begin{align}
            \sum_{t=0}^{T-1}K_t = \order{\frac{ C_g^2C_f^2}{\epsilon^4} + \frac{C_f^2\sigma_1^2}{B\epsilon^4} + \frac{\delta^2}{S\epsilon^4} + \frac{nC_g^2C_f^2}{S\epsilon^4} + \frac{nC_f^2\sigma_2^2}{BS\epsilon^4} + \frac{D_{\Y}^2\sigma_0^2}{BS\epsilon^6}}
        \end{align}
    \end{proof}

\end{document}